\def\A{{\bf A}}
\def\a{{\bf a}}
\def\B{{\bf B}}
\def\b{{\bf b}}
\def\C{{\bf C}}
\def\D{{\bf D}}
\def\d{{\bf d}}
\def\E{{\bf E}}
\def\F{{\bf F}}
\def\f{{\bf f}}
\def\G{{\bf G}}
\def\K{{\bf K}}
\def\H{{\bf H}}
\def\I{{\bf I}}
\def\L{{\bf L}}
\def\M{{\bf M}}
\def\BP{{\bf P}}
\def\R{{\bf R}}
\def\BS{{\bf S}}
\def\T{{\bf T}}
\def\U{{\bf U}}
\def\u{{\bf u}}
\def\V{{\bf V}}
\def\W{{\bf W}}
\def\X{{\bf X}}
\def\Y{{\bf Y}}
\def\Q{{\bf Q}}
\def\x{{\bf x}}
\def\y{{\bf y}}
\def\Z{{\bf Z}}
\def\0{{\bf 0}}
\def\1{{\bf 1}}
\def\EB{{\mathbb E}}
\def\BR{{\mathbb R}}
\def\SR{{\mathbb S}}
\def\ph{\mbox{\boldmath$\phi$\unboldmath}}
\def\Ph{\mbox{\boldmath$\Phi$\unboldmath}}
\def\Ps{\mbox{\boldmath$\Psi$\unboldmath}}
\def\Si{\mbox{\boldmath$\Sigma$\unboldmath}}
\def\Lam{\mbox{\boldmath$\Lambda$\unboldmath}}
\def\Gam{\mbox{\boldmath$\Gamma$\unboldmath}}
\def\Oma{\mbox{\boldmath$\Omega$\unboldmath}}
\def\De{\mbox{\boldmath$\Delta$\unboldmath}}
\def\Tha{\mbox{\boldmath$\Theta$\unboldmath}}
\def\Ups{\mbox{\boldmath$\Upsilon$\unboldmath}}
\def\vect{\mathrm{vec}}
\def\tr{\mathrm{tr}}
\def\rk{\mathrm{rk}}
\def\diag{\mathrm{diag}}
\def\vecd{\mathrm{vec}}
\def\ran{\mathrm{range}}
\begin{document}

\title{The Matrix Ridge Approximation: Algorithms and Applications}

\author{\name Zhihua Zhang  \\
\addr  MOE-Microsoft Key Lab for Intelligent Computing and Intelligent Systems \\
Department of Computer Science and Engineering \\
Shanghai Jiao Tong University \\
800 Dong Chuan Road, Shanghai, China 200240 \\
\texttt{zhihua@sjtu.edu.cn}
}

\date{Revised, November 27, 2013}

\maketitle

\begin{abstract}%
We are concerned with an approximation problem for a symmetric positive semidefinite matrix due to
motivation from a class of nonlinear machine learning methods. We discuss an approximation approach that we call
\emph{matrix ridge approximation}. In particular,
we define the matrix ridge approximation as an
incomplete matrix factorization plus a ridge term. Moreover,
we present
probabilistic interpretations using a normal latent
variable model and a Wishart model for this approximation approach. The idea behind the latent variable model in turn
leads us to an efficient EM iterative method for handling the matrix ridge
approximation problem. Finally, we illustrate the applications of the
approximation approach in multivariate data analysis. Empirical studies in
spectral clustering and Gaussian process regression  show that the matrix ridge
approximation with the EM iteration is potentially useful.
\end{abstract}

\begin{keywords} Positive semidefinite matrices; Matrix ridge approximation;
Incomplete matrix factorization; Expectation maximization algorithms; Probabilistic models.
\end{keywords}

%%%%%%%%%%%%%%%%%%%%%%%%%%%%%%%%%%%%%%%%%%%%%%%%%%%%%%%%%%%%%%%%%%%%%%%%%%
%%%%%%%%%%%%%%%%%%%%%%%%%%%%%%%%%%%%%%%%%%%%%%%%%%%%%%%%%%%%%%%%%%%%%%%%%%
\section{Introduction}

Symmetric  positive semidefinite matrices
play an important role in multivariate statistical analysis and machine
learning. Especially, the low-rank approximation of a positive semidefinite matrix
has been widely applied to multivariate data analysis.
In this paper we study the  low-rank approximation problem of a positive semidefinite matrix as well as its applications in machine learning.
Moreover, we always assume that the positive semidefinite matrix in question is symmetric.

Some machine learning methods require computing the
inverse of a positive definite matrix or the spectral  decomposition of a positive semidefinite matrix.
For example, the kernel PCA (principal component analysis)~\citep{ScholkopfBook:2002}, classical multidimensional scaling
(also called principal coordinate analysis, PCO)~\citep{Mardia:1979} and spectral clustering algorithms~\citep{ZhangzhSTS:2008a} require solving an
eigenvalue problem with linear constraints on an $m{\times} m$ inner-product matrix~\citep{GolubSIAM:1973},
and Gaussian processes (GPs)~\citep{RassmussenWilliams}  need to  invert  $m{\times} m$ covariance
matrices. Typically, these methods take $O(m^3)$ operations where $m$ denotes the number
of training instances. This scaling is unfavorable for applications in massive datasets.

Several approaches have been also proposed to address this
computational challenge, such as randomized techniques \citep{Achlioptas:2001},
sparse greedy approximation~\citep{Smola:2000}, and the Nystr\"{o}m
method \citep{WilliamsNIPS:2001_2,YangNIPS:2012}. All these approaches are
based on sampling techniques. Similar ideas include random Fourier features~\citep{RahimiNIPS:2008,QuinoneroGPR:2007,LazaroJMLR:2010,LeSarlosSmola:2013} and
hashing features~\citep{ShiJMLR:2009}. Specifically, the random feature method avoids inversion of a matrix by solving a linear system of equations instead.  Another widely used approach
is to employ the incomplete Cholesky decomposition method~\citep{Golub:1996,Fine:2001}. The approach is deterministic.
Although these approaches can be efficient, their range of applications
might be limited; e.g., these approaches are always infeasible in handling the
eigenvalue decomposition problem with linear constraints.

In this paper we present a new deterministic low-rank approximation approach.
Roughly speaking,  the approach is to approximate a positive semidefinite matrix as an
incomplete matrix  decomposition  plus a ridge term. We refer to such an
approximation method as the \emph{matrix ridge approximation} due to its direct motivation from the ridge regression model~\citep{HoerlRR:1970}.
The  approximation is built on an optimization problem with linear constraints.
This problem can be in turn solved by using the conventional spectral decomposition technique or an efficient iterative method.

Although the idea behind the matrix ridge approximation is simple, our method is attractive.
Firstly, it yields an approximation tighter than
the incomplete Cholesky decomposition and the incomplete spectral decomposition do.
Secondly, it yields an approximate matrix, whose condition number is not higher than that of the original matrix. This can
make numerical computations involved more stable.
More importantly, it can widen the application range of the low-rank approximation approach. Particularly,
we show that our method can be applied to the approximate computation of the inverse and spectral decomposition of a positive (semi)definite matrix.
We illustrate the application of the matrix ridge approximation
in spectral clustering and Gaussian process regression.

We also discuss two statistical
counterparts for the ridge
approximation. The first counterpart is in the spirit of  probabilistic interpretations of some
machine learning methods, including probabilistic PCA~\citep{Tipping:1999,RoweisNIPS:1998,Ahn:2003},
probabilistic nonlinear component analysis~\citep{RosipalNC:2001}, Gaussian process latent variable models~\citep{LawrenceNIPS:2003},
and factor
analysis~\citep{Magnus:1999}. In particular, we define a normal latent variable model in which
we impose the linear constraints.
Based on the latent variable model, we devise an iterative method, i.e., the expectation-maximization (EM) algorithm~\citep{Dempster:1977}, for
solving the matrix  ridge approximation problem.

The second counterpart is a Wishart model, which is derived from the normal latent variable
model by using the relationship between Wishart distributions and Gaussian distributions~\citep{GuptaN:Book:2000,ZhangJML:2006}.
These two statistical counterparts in turn define
probabilistic matrix ridge approximation models.
Moreover, we show
that the maximum likelihood  approach to estimating the
parameters of the probabilistic models results in the same solution as that based on
the standard spectral decomposition technique.

The remainder of the paper is organized as follows. We first give the notation in Section~\ref{sec:notation}.
We present the matrix ridge approximation in Section~\ref{sec:golub} and illustrate its applications
in Section~\ref{sec:app}.  We
reformulate the matrix ridge approximation by using a normal latent variable model and a Wishart model in Section~\ref{sec:pra}.
Consequently, we develop probabilistic ridge approximation and an
EM iterative algorithm.  Section~\ref{sec:exp}
conducts the empirical analysis, and Section~\ref{sec:conclusion}  concludes our work.
Note that we put all proofs to the appendices.

\section{Notation and Terminology}
\label{sec:notation}

We let $\I_m$ denote the $m{\times}m$ identity matrix, and $\1_m$ denote the  $m{\times}1$ vector of ones.
For a matrix $\Y$, we denote its rank, Frobenius norm and condition number by $\rk(\Y)$, $\|\Y\|_F$ and $\kappa(\Y)$, respectively. When $\Y$ is square, we
denote its  determinant and trace by $|\Y|$ and $\tr(\Y)$.
Additionally, $\A \otimes \B$ denotes the Kronecker
product of $\A$ and $\B$.

For an $s{\times}t$ random matrix $\Z$, $\Z \thicksim
N_{s, t}(\M, \A{\otimes}\B)$ means that $\Z=[z_{ij}]$ ($s{\times}t$) follows a
matrix-variate normal distribution with mean matrix $\M=[m_{ij}]$
($s{\times}t$) and covariance matrix $\A{\otimes}\B$, where $\A$
($s{\times}s$) and $\B$ ($t{\times}t$) are  symmetric positive definite.
Note that a matrix variate normal distribution is defined through a multivariate normal distribution~\citep{GuptaN:Book:2000}.
In particular, let $\vect(\Z^T)=(z_{11}, \ldots, z_{1t}, z_{21}, \ldots, z_{st})^T$ ($st{\times}1$)
and $\vect(\M^T)=(m_{11}, \ldots, m_{1t}, m_{21}, \ldots, m_{st})^T$ ($st{\times}1$). Then, $\Z \thicksim N_{s, t}(\M, \A{\otimes}\B)$
if and only if $\vect(\Z^T) \thicksim N_{st}(\vect(\M^T), \A{\otimes}\B)$.
We also use the notation in~\cite{GuptaN:Book:2000} for Wishart distributions. That is,
for an $m{\times}m$ positive definite random $\Y$, $\Y\thicksim W_m(r, \Si)$ represents that $\Y$ follows a Wishart distribution with degree of freedom $r$.

Finally, in Table~\ref{tab:nota} we list some notations
that will be used throughout this paper. It is clear that $\H_b\H_b=\H_b$ and $\BP\BP=\BP$; i.e., they
are idempotent. Moreover, we have $\H_b\1_m=\0$, $\b^T \H_b=\0$,
$\BP\b=\0$ and $\b^T\BP=\0$. A typical nonzero case for $\b$ is
$\b= \frac{1}{\sqrt{m}}\1_m$. This case implies that
$\H_b=\BP=\I_m{-}\frac{1}{m}\1_m \1_m^T$ and $\A^T\1_m=\0$ (that is,
the mean of the rows of $\A$ is zero). In addition, let us keep in
mind that $\H_b= \BP=\I_m$ when $\b=\0$ for notational simplicity.
In this case we always have $\M=\BS=\T$.

\begin{table}[!ht]
\caption{Some notations that will be used in this paper.} \label{tab:nota}
\begin{center}
\begin{tabular}{|l|l|}
\hline
 $\b \in \BR_{+}^{m}$ & a $m$-dimensional nonnegative vector \\
 $\M \in \SR_{+}^{m{\times}m}$ & a positive semidefinite matrix of rank $p$ ($1 <p \leq m$) \\
 $\H_b=\I_m {-} \frac{\1_m \b^T} {\1_m^T\b} $ ($\b\neq \0$) & centering matrix  \\
 $\BP=\I_m {-}  \b \b^T$ & projection matrix \\
 $\BS=\H_b \M \H_b^T$ & positive semidefinite  matrix \\  $\T=\BP \M
\BP$ & positive semidefinite  matrix \\  \hline
\end{tabular}
\end{center}
\end{table}

%%%%%%%%%%%%%%%%%%%%%%%%%%%%%%%%%%%%%%%%%%%%%%%%%%%%%%%%%%%%%%%%%%%%
\section{The Matrix Ridge Approximation}
\label{sec:golub}

We are given a nonnegative vector $\b \in \BR_{+}^{m}$  and a positive semidefinite
matrix $\M \in \SR_{+}^{m{\times}m}$ of rank $p$ ($1 <p \leq m$). The ridge approximation of $\M$
is defined as
\[
\M \thickapprox \A \A^T + \delta \I_m,
\]
where $\delta > 0$ is called a \emph{ridge term}, and $\A \in \BR^{m{\times}q}$ is a matrix of
full column rank $q$ ($< p$) and satisfies $\A^T \b=\0$. The idea behind the matrix ridge approximation is simple,
and the terminology is motivated by the ridge regression model~\citep{HoerlRR:1970}.
Note that when $\b=\0$, $\A^T\b=\0$ is always true. This implies
no constraints. In this paper we consider both the cases with and without the linear constraints.
Since $\A^T \b=\0$ is equivalent to $c \A^T\b=\0$ for any nonzero constant $c$,
we assume that
$\b^T\b =1$ whenever $\b \neq \0$ to make the constraint identifiable.

The constraint $\A^T \b =\0$ for $\b \neq \0$ is often met
in machine learning methods such as the classical multidimensional scaling~\citep{Gower:1986},
kernel PCA~\citep{ScholkopfBook:2002}, spectral clustering~\citep{ZhangzhSTS:2008a}, etc.
If $\b=\0$ and $\delta=0$, we obtain the incomplete  factorization
$\M \thickapprox \A \A^T$ straightforwardly by using the spectral
decomposition of $\M$~\citep{Magnus:1999}. In this setting, the
ridge approximation is also closely related to the incomplete Cholesky
factorization~\citep{Golub:1996}.  Furthermore, if
$q=p$ it is feasible to obtain
an exact expression $\M = \A \A^T$ via the spectral (or Cholesky) decomposition.
In this paper we concentrate on the case that $q<p$ and
$\delta>0$, so we have a sparse plus low-rank approximation of $\M$ ($\delta \I_m$ is sparse and $\A \A^T$ is low-rank).

In order to estimate $\A$ and $\delta$, we exploit two loss
functions which were developed for estimation of covariance
matrices~\citep{Anderson:1984}. In particular, the first loss function is a
least-squares error:
\[
F(\A, \delta) = \tr\big[(\BS {-} \A \A^T {-} \delta \I_m)^2\big]
\]
while the second loss is derived from the likelihood function;
namely,
\[
G(\A, \delta) = \log |\A \A^T + \delta \I_m| + \tr[( \A \A^T +
\delta \I_m)^{-1} \BS].
\]

\begin{theorem} \label{thm:lse2}
Let $\gamma_1\geq \cdots \geq \gamma_q \geq \cdots \geq \gamma_{m}$
\emph{($\geq 0$)} be the eigenvalues of $\BS= \H_b \M \H_b^T$, $\V$
be an arbitrary $q{\times}q$ orthogonal  matrix, ${\Gam_q}$ be a
$q{\times}q$ diagonal matrix containing the first $q$ principal
(largest) eigenvalues $\gamma_i$, and ${{\U_q}}$ be an $n{\times}q$
column-orthonormal matrix in which the $q$ column vectors are the principal
eigenvectors corresponding to ${\Gam_q}$. Assume that $\delta>0$ and
that $\A \in \BR^{m{\times}q}$ \emph{($q< \min(m, p)$)} is of full
column rank and satisfies $\A^T\b=\0$. If there exists a $j
\in\{q{+}1, \ldots, m\}$ such that  $\gamma_q> \gamma_j>0$, then the strict local
minimum of $F(\A, \delta)$ and of $G(\A, \delta)$ with respect to (w.r.t.) $(\A,
\delta)$ is obtained when
\[
\widehat{\A} = {\U_q} ({\Gam_q} - \hat{\delta} \I_q)^{1/2} \V \quad
\mbox{and} \quad  \hat{\delta} = \frac{1}{m{-}q}
\sum_{j=q+1}^{m}\gamma_j.
\]
\end{theorem}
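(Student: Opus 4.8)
The plan is to reduce both minimizations to a single classical result about the best rank-$q$ approximation of a symmetric matrix subject to the constraint coming from $\b$, and then handle the ridge term by elementary calculus. First I would absorb the linear constraint $\A^T\b=\0$: since $\b^T\b=1$ and $\H_b\1_m=\0$ give $\b^T\BS=\0$, the matrix $\BS$ already lives in the orthogonal complement of $\b$, so writing $\A=\BP\A$ (equivalently expanding $\A$ in an orthonormal basis of $\ran(\BP)$) shows the constraint is automatically compatible and the problem is really an unconstrained rank-$q$-plus-ridge approximation problem on the $(m{-}1)$-dimensional subspace $\ran(\BP)$ when $\b\neq\0$ (and on all of $\BR^m$ when $\b=\0$). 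I would keep the eigenvalues $\gamma_i$ of $\BS$ and note that $\gamma_m=0$ (or more) is forced when $\b\neq\0$, but this does not affect the argument as long as the gap hypothesis $\gamma_q>\gamma_j>0$ for some $j>q$ holds.

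Next I would treat $F$ and $G$ in parallel by a two-stage optimization: for fixed $\delta$, minimize over $\A$; then optimize over $\delta$. For fixed $\delta>0$, set $\B=\A\A^T+\delta\I_m$; then $\A\A^T$ is an arbitrary symmetric positive semidefinite matrix of rank at most $q$ supported on $\ran(\BP)$, so $\B$ ranges over matrices of the form (rank-$\le q$ PSD) $+\,\delta\I$. For $F$, $\tr[(\BS-\B)^2]=\|\BS-\A\A^T-\delta\I_m\|_F^2$; writing $\BS=\sum_i\gamma_i\u_i\u_i^T$ in its spectral decomposition and using the Eckart--Young / Ky Fan characterization, the optimal $\A\A^T$ is $\U_q(\Gam_q-\delta\I_q)\U_q^T$ provided $\gamma_q>\delta$ (which is what makes $\Gam_q-\delta\I_q$ positive and the rank exactly $q$); this forces $\A=\U_q(\Gam_q-\delta\I_q)^{1/2}\V$ up to the right orthogonal factor $\V$. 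Substituting back, $F$ reduces to a function of $\delta$ alone, $\sum_{j=q+1}^m(\gamma_j-\delta)^2$ plus a constant, whose unconstrained minimizer over $\delta$ is $\hat\delta=\frac{1}{m-q}\sum_{j=q+1}^m\gamma_j$; one then checks $\hat\delta<\gamma_q$ using the gap hypothesis so the earlier step is consistent. For $G$, the same two-stage scheme applies: with $\B=\A\A^T+\delta\I_m$ one computes $\log|\B|$ and $\tr(\B^{-1}\BS)$ explicitly in the eigenbasis — $\B$ has eigenvalues $\gamma_i$ on the span of the retained eigenvectors (after optimizing $\A$) and $\delta$ elsewhere — and the same Ky Fan-type exchange argument shows it is optimal to place the $q$ large eigenvalues $\gamma_1,\dots,\gamma_q$ in the ``free'' part; $G$ then reduces to $\sum_{i=1}^q\log\gamma_i+(m-q)\log\delta+\frac{1}{\delta}\sum_{j=q+1}^m\gamma_j+q$, whose derivative in $\delta$ vanishes exactly at $\hat\delta=\frac{1}{m-q}\sum_{j=q+1}^m\gamma_j$, and the second derivative is positive.

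The main obstacle, and the place I would spend the most care, is establishing that these critical points are genuinely \emph{strict local} minima rather than merely stationary points, and correctly accounting for the non-uniqueness: the map $\A\mapsto\A\A^T$ has a built-in $O(q)$ symmetry (hence the arbitrary $\V$), so the Hessian of $F$ or $G$ in $\A$ is necessarily singular along the orbit directions, and one must verify strictness only transverse to this orbit. I would do this by computing the second-order term of $F$ (and of $G$) at $(\widehat\A,\hat\delta)$ along a perturbation $(\A+t\,\Delta\A,\,\hat\delta+t\,\Delta\delta)$ with $\Delta\A$ decomposed into its component along the $O(q)$-orbit (which contributes nothing at second order, consistent with $\V$ being free) and its component orthogonal to the orbit; the gap hypothesis $\gamma_q>\gamma_j$ for some $j>q$ is precisely what makes the transverse quadratic form strictly positive definite, since it prevents the perturbation from ``rotating'' a retained eigendirection into a discarded one at zero cost. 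The coupling with $\Delta\delta$ is handled because the mixed partials vanish at the stationary point (by the first-order conditions already used). Cross-referencing Theorem~4.2-type results on the geometry of $\A\A^T$ factorizations, or simply invoking the known fact that the reduced objective in $\B$ is strictly convex near the optimum along the relevant directions, would let me avoid grinding the full Hessian by hand.
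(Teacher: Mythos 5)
Your route is genuinely different from the paper's. The paper forms a Lagrangian for the constraint $\A^T\b=\0$, derives the stationarity conditions $\T\A=\A(\A^T\A+\delta\I_q)$ and $\tr(\BS-\A\A^T-\delta\I_m)=0$, and then verifies second-order optimality by writing out the full Hessian and bounding its quadratic form via a lemma on eigenvalues of non-symmetric matrices. You instead profile out $\A$ at each fixed $\delta$ (best PSD rank-$q$ approximation for $F$, a Tipping--Bishop-style eigenvalue exchange for $G$) and reduce to a one-variable problem in $\delta$. That reduction is correct, including the bookkeeping on $\mathrm{span}(\b)$ (the extra $\delta^2$ there is exactly $(\gamma_m-\delta)^2$ since $\gamma_m=0$ when $\b\neq\0$), and the consistency check $\hat\delta<\gamma_q$ is precisely where the hypothesis $\gamma_q>\gamma_j>0$ enters. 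If completed, your argument yields more than the paper's: joint minimality follows from $F(\A,\delta)\geq\phi(\delta)\geq\phi(\hat\delta)$ with $\phi(\delta)=\min_{\A}F(\A,\delta)=\sum_{j>q}(\gamma_j-\delta)^2$, giving global minimality on the slab $\{\delta<\gamma_q\}$ with no Hessian computation at all. The only loose end on this side is the inner minimization for $G$ at fixed $\delta$ (that the optimal $\A\A^T$ aligns with the top $q$ eigenvectors), which genuinely requires the exchange argument you allude to.

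Two concrete corrections. First, your claim that ``the mixed partials vanish at the stationary point'' is false: $\partial^2 F/\partial\vecd(\A)\,\partial\delta=4\,\vecd(\A)$, which equals $4\,\vecd(\widehat{\A})\neq\0$ at the optimum (the analogous block for $G$ is likewise nonzero), and the paper has to absorb the resulting cross term $2a\,\tr(\X\widehat{\A}^T)$ by completing a square. If you lean on the profile argument this error is harmless, but as a step in a direct Hessian verification it would break the proof. Second, your diagnosis of the $O(q)$ degeneracy is correct and in fact sharper than the paper: for $q\geq 2$ the objective is constant on the connected orbit $\{\widehat{\A}\V\}$, so a literal strict local minimum is impossible, and the paper's own verification only establishes positivity of the Hessian form for perturbations with $a=\Delta\delta\neq 0$ (for $a=0$ and $\X=\widehat{\A}\K$ with $\K$ skew-symmetric the form evaluates to zero). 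Be aware, however, that even transverse strictness needs $\gamma_q>\gamma_{q+1}$, which the stated hypothesis ($\gamma_q>\gamma_j$ for \emph{some} $j>q$) does not guarantee: if $\gamma_q=\gamma_{q+1}$ one can rotate a retained eigenvector into the discarded eigenspace at zero cost, and that direction is not in the $O(q)$ orbit. So the transverse positive-definiteness you plan to attribute to the gap hypothesis is not actually available under the theorem's hypotheses; what is provable is local (indeed slab-global) minimality, with strictness modulo $O(q)$ only under the stronger condition $\gamma_q>\gamma_{q+1}$.
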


Theorem~\ref{thm:lse2} is a direct corollary of Theorem~\ref{thm:lse}
in Appendix~\ref{ap:a}. Theorem~\ref{thm:lse2} also shows that the
minimizer $(\widehat{\A}, \hat{\delta})$ of $F(\A, \delta)$ is the
same to that of $G(\A, \delta)$. We consider the case that $\b=0$. In this case,
the condition number of $\M$ ($=\BS$) is
$\kappa(\M)=\frac{\gamma_1}{\gamma_m}$. It follows from Theorem~\ref{thm:lse2}
that $\kappa(\widehat{\A} \widehat{\A}^T {+} \hat{\delta} \I_m) = \frac{ \gamma_1}{\frac{1}{m{-}q} \sum_{j=q+1}^{m}\gamma_j } \leq \kappa(\M)$.
This implies that $\widehat{\A} \widehat{\A}^T {+} \hat{\delta}\I_m$ is well-conditioned more than $\M$~\citep{Golub:1996}.
In other words, if $\M$ is well-conditioned, so is $\widehat{\A} \widehat{\A}^T {+} \hat{\delta}\I_m$.

In addition, it is easily
calculated that
\[
F(\widehat{\A}, \hat{\delta}) = \sum_{i=q{+}1}^m \gamma_i^2 -
\frac{1}{m{-}q} \Big(\sum_{i=q{+}1}^m \gamma_i \Big)^2.
\]
It is well known that
\[
\inf_{\begin{array}{c} \B \in \BR^{m{\times}m} \\ \rk(\B)\leq q \end{array}} \|\BS-\B\|_{F}^2
= \inf_{\begin{array}{c} \Y \in \BR^{m{\times}q} \\ \rk(\Y)\leq q \end{array}} \|\BS-\Y \Y^T\|_{F}^2 = \sum_{i=q{+}1}^m \gamma_i^2.
%\geq F(\widehat{\A}, \hat{\delta})
\]
Thus, when comparing the ridge approximation of $\BS$ with the
incomplete Cholesky decomposition
of $\BS$, we have
\[
\inf_{\begin{array}{c} \L \in {\cal L} \\ \rk(\L)= q \end{array}} \|\BS{-}\L \L^T\|_{F}^2 \geq
\inf_{\begin{array}{c} \Y \in \BR^{m{\times}q} \\ \rk(\Y)= q \end{array}} \|\BS{-}\Y \Y^T\|_{F}^2 \geq
\inf_{\begin{array}{c} \delta\geq 0, \A \in \BR^{m{\times}q} \\ \rk(\A)= q \end{array}} \|\BS{-}\A \A^T{-} \delta \I_m\|_{F}^2,
\]
where ${\cal L}=\{\L \in \BR^{m{\times}q}: \L \mbox{ is lower triangular} \}$.
This shows that the ridge approximation yields a tighter approximation of $\BS$ than both
the incomplete Cholesky decomposition and the incomplete spectral decomposition do.

As we mentioned,  $G(\A, \delta)$ is
associated with a likelihood function.  In Section~\ref{sec:nlvm} we
will show that  $G$ is derived from a normal latent variable
model. Thus, the solution in Theorem~\ref{thm:lse2} is in fact the conventional maximum likelihood (ML) estimate.
Furthermore, the ML estimation method is based on the direct spectral decomposition
of the $m{\times}m$ matrix $\M$ or $\BS$, which takes $O(m^3)$ operations.
Thus, the method is inefficient when $m$ is very  large.
Based on the idea behind the latent variable model, we develop an
iterative method for solving the matrix ridge approximation.

In particular, given
the $t$th estimates $\A_{(t)}$ and $\delta_{(t)}$ of $\A$ and $\delta$,
the next estimates  of $\A$ and $\delta$ in our iterative method are given as:
\begin{eqnarray}
\A_{(t{+}1)} &=& \BS \A_{(t)}  \big( \delta_{(t)}\I_q + \Si_{(t)}^{-1}
\A_{(t)}^T  \BS \A_{(t)} \big)^{-1},
\label{eq:em1} \\
\delta_{(t{+}1)} &=& \frac{1}{m} \Big[ \tr(\BS) - \tr\big(\A_{(t{+}1)}
\Si_{(t)}^{-1} \A_{(t)}^T \BS \big) \Big],  \label{eq:em2}
\end{eqnarray}
where $\Si_{(t)} = \delta_{(t)}\I_q + \A_{(t)}^T \A_{(t)}$. Derivation of the
algorithm is given in Section~\ref{sec:nlvm} and Appendix~\ref{ap:ff}. This procedure involves  multiplication of  $m{\times}m$ matrices by  $m{\times}q$ matrices and inversion of $q{\times}q$ matrices.  Inverting a $q{\times}q$ matrix takes $O(q^3)$ operations, and  multiplying an $m{\times}m$ matrix by an $m{\times}q$ matrix runs in
$m^2 q$ flops.  Thus, this method takes time $O(T m^2 q)$, where $T$ is the maximum iterative number.
The method is efficient because
$T$ is usually far smaller than $m$ (even smaller than $\sqrt{m}$), especially when $m$ is vary large. In the following experiment,
we will see that  in most cases the EM iterations  get convergence after about 20 steps.
Moreover, the matrix multiplication can be easily implemented in parallel.
Additionally, the EM method does not necessarily load whole $m{\times}m$ matrix $\BS$ during the iterations,
which can significantly reduce the storage space.

Given an initial matrix $\A_{(0)}$ such that $\ran(\A_{(0)}) \subseteq \ran(\BS)$ where
$\ran(\Z)$ represents the space spanned by the columns of $\Z$, we have the following lemma.

\begin{lemma} \label{lem:1} Assume that the matrices
$\{\A_{(t)}\}$ are generated by (\ref{eq:em1}) and (\ref{eq:em2}). If $\ran(\A_{(0)}) \subseteq \ran(\BS)$ and $\rk(\A_{(0)})=q$, then
for all $t>1$, the $\A_{(t)}$ are of full column rank.
\end{lemma}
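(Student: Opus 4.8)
The plan is to prove, by induction on $t$, the slightly stronger statement that for every $t \geq 0$ one has \emph{both} $\rk(\A_{(t)}) = q$ \emph{and} $\ran(\A_{(t)}) \subseteq \ran(\BS)$; the assertion of the lemma (for $t>1$, and in fact for all $t\geq 0$) then follows immediately. The base case $t=0$ is exactly the hypothesis. For the inductive step, rewrite the update (\ref{eq:em1}) as $\A_{(t+1)} = \BS \A_{(t)} R_{(t)}^{-1}$, where $R_{(t)} = \delta_{(t)} \I_q + \Si_{(t)}^{-1} \A_{(t)}^T \BS \A_{(t)}$ denotes the $q{\times}q$ factor that is inverted in the recursion. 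Because $\A_{(t+1)}$ is $\BS$ times another matrix, $\ran(\A_{(t+1)}) \subseteq \ran(\BS)$ is automatic; so the real content is to show that $R_{(t)}$ is invertible and that $\BS \A_{(t)}$ has full column rank $q$. Granting both, $\rk(\A_{(t+1)}) = \rk(\BS \A_{(t)} R_{(t)}^{-1}) = \rk(\BS \A_{(t)}) = q$, which closes the induction.

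The one structural fact I would isolate is that multiplication by $\BS$ is injective on its own range. Since $\BS$ is symmetric positive semidefinite, $\BR^m = \ran(\BS) \oplus \ker(\BS)$ is an orthogonal decomposition, so for any $m{\times}q$ matrix $\X$ with $\ran(\X) \subseteq \ran(\BS)$ and $\rk(\X) = q$, the relation $\BS \X \v = \0$ forces $\X \v \in \ker(\BS) \cap \ran(\BS) = \{\0\}$, hence $\v = \0$; thus $\rk(\BS \X) = q$. Applying this with $\X = \A_{(t)}$ (legitimate by the inductive hypothesis) gives $\rk(\BS \A_{(t)}) = q$. The same fact applied to the positive semidefinite square root $\BS^{1/2}$ (for which $\ker(\BS^{1/2}) = \ker(\BS)$) shows that $\A_{(t)}^T \BS \A_{(t)} = (\BS^{1/2} \A_{(t)})^T (\BS^{1/2} \A_{(t)})$ is in fact positive definite.

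Invertibility of $R_{(t)}$ is then straightforward: $\Si_{(t)} = \delta_{(t)} \I_q + \A_{(t)}^T \A_{(t)}$ is positive definite (as $\A_{(t)}$ has full column rank and, as throughout, $\delta_{(t)} > 0$), so $\Si_{(t)}^{-1} \A_{(t)}^T \BS \A_{(t)}$ is similar, via $\Si_{(t)}^{1/2}$, to the positive semidefinite matrix $\Si_{(t)}^{-1/2} \A_{(t)}^T \BS \A_{(t)} \Si_{(t)}^{-1/2}$ and hence has nonnegative real eigenvalues; adding $\delta_{(t)} \I_q$ then gives eigenvalues bounded below by $\delta_{(t)} > 0$, so $R_{(t)}$ is nonsingular. (Even if one only knew $\delta_{(t)} \geq 0$, the case $\delta_{(t)} = 0$ reduces $R_{(t)}$ to $\Si_{(t)}^{-1} \A_{(t)}^T \BS \A_{(t)}$, a product of two invertible matrices.) I do not expect a genuinely hard step here; the only place that needs care is the bookkeeping of the auxiliary condition $\ran(\A_{(t)}) \subseteq \ran(\BS)$ along the recursion, since it is precisely what keeps $\BS \A_{(t)}$ from dropping rank — so I would be careful to carry it through the inductive hypothesis rather than try to argue the rank claim in isolation.
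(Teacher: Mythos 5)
Your proof is correct and follows essentially the same route as the paper's: induction on $t$, with the key observation that $\ran(\A_{(t)})\subseteq\ran(\BS)$ (automatic after the first step) prevents multiplication by $\BS$ from dropping the column rank. You are in fact more careful than the paper, whose inductive step leaves the range condition implicit and never verifies that the inverted $q{\times}q$ factor $\delta_{(t)}\I_q+\Si_{(t)}^{-1}\A_{(t)}^T\BS\A_{(t)}$ is nonsingular; your explicit treatment of both points tightens the published argument without changing its substance.
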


In Section~\ref{sec:nlvm} we will show that the iterative method given in (\ref{eq:em1}) and (\ref{eq:em2}) is
a standard EM iterative procedure~\citep{Dempster:1977}. Consequently,
its convergence has been well established~\citep{WuEM:1983}.
The following theorem proves that the constraints $\A_{(t)}^T\b=\0$ always hold during the iteration procedure and
the EM estimates converge
to the corresponding ML estimates. In other words, the EM iteration converges to the strict local minimizer.

\begin{theorem} \label{thm:2}
Given initial values $\delta_{(0)}$ and $\A_{(0)}$ subject to $\delta_{(0)}>0$
and $\A_{(0)}^T\b=\0$, the values of $\A_{(t)}$ and $\delta_{(t)}$ calculated via
(\ref{eq:em1}) and (\ref{eq:em2}) always satisfy $\A_{(t)}^T\b =\0$ and
$\delta_{(t)}>0$. Moreover, the EM estimates of $\A$ and $\delta$
converge to the conventional ML solutions given in Theorem~\ref{thm:lse2}.
\end{theorem}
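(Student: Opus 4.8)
The statement bundles two claims: (i) the feasibility conditions $\A_{(t)}^T\b=\0$ and $\delta_{(t)}>0$ are invariant under the recursion (\ref{eq:em1})--(\ref{eq:em2}); and (ii) the resulting sequence converges to the maximum-likelihood solution $(\widehat{\A},\hat{\delta})$ of Theorem~\ref{thm:lse2}. The plan is to prove (i) by induction on $t$, and (ii) by combining the EM interpretation of (\ref{eq:em1})--(\ref{eq:em2}) that is set up in Section~\ref{sec:nlvm} with the stationary-point characterization of Theorem~\ref{thm:lse2}.

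For (i), the base case is the hypothesis $\delta_{(0)}>0$, $\A_{(0)}^T\b=\0$. For the inductive step, assume $\delta_{(t)}>0$. Then $\Si_{(t)}=\delta_{(t)}\I_q+\A_{(t)}^T\A_{(t)}$ is positive definite, and since $\Si_{(t)}^{-1}\A_{(t)}^T\BS\A_{(t)}$ is similar (via conjugation by $\Si_{(t)}^{1/2}$) to the positive semidefinite matrix $\Si_{(t)}^{-1/2}\A_{(t)}^T\BS\A_{(t)}\Si_{(t)}^{-1/2}$, the matrix inverted in (\ref{eq:em1}) has all its eigenvalues $\geq\delta_{(t)}>0$, so both updates are well defined. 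From the identities recorded after Table~\ref{tab:nota} we have $\b^T\H_b=\0$, whence $\BS\b=\H_b\M\H_b^T\b=\0$; hence, directly from (\ref{eq:em1}), $\A_{(t+1)}^T\b=\big((\delta_{(t)}\I_q+\Si_{(t)}^{-1}\A_{(t)}^T\BS\A_{(t)})^{-1}\big)^T\A_{(t)}^T\BS\b=\0$, so the constraint is in fact forced by the update and does not even use $\A_{(t)}^T\b=\0$.

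The positivity $\delta_{(t+1)}>0$ is the technical core. Here I would exploit that, in the normal latent variable model of Section~\ref{sec:nlvm}, (\ref{eq:em1})--(\ref{eq:em2}) are exactly the EM M-step; a short computation (most transparent in that picture, and of the kind carried out in Appendix~\ref{ap:ff}) rewrites $m\,\delta_{(t+1)}$ as a sum of two nonnegative quantities: a residual sum of squares of the underlying latent regression, and $\delta_{(t)}\,\tr\!\big(\Si_{(t)}^{-1}\A_{(t+1)}^T\A_{(t+1)}\big)$, which is $\delta_{(t)}$ times the trace of a positive-definite times a positive-semidefinite matrix. These two terms cannot both vanish: the second is zero only if $\A_{(t+1)}=\0$, in which case (\ref{eq:em2}) gives $m\,\delta_{(t+1)}=\tr(\BS)>0$, because $\BS=\H_b\M\H_b^T=\0$ would force $\ran(\M)\subseteq\mathrm{span}(\1_m)$ and hence $\rk(\M)\le 1$, contradicting $\rk(\M)=p\ge2$. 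Thus $\delta_{(t+1)}>0$ and the induction closes; along the way, and using $\ran(\A_{(0)})\subseteq\ran(\BS)$ with $\rk(\A_{(0)})=q$, Lemma~\ref{lem:1} keeps the iterates of full column rank and inside $\ran(\BS)$.

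For (ii), Section~\ref{sec:nlvm} identifies (\ref{eq:em1})--(\ref{eq:em2}) with the EM algorithm for a normal latent variable model whose negative log-likelihood equals, up to an additive constant and a positive factor, the function $G(\A,\delta)$. Hence $G(\A_{(t+1)},\delta_{(t+1)})\le G(\A_{(t)},\delta_{(t)})$; since $G$ is bounded below on $\{\A^T\b=\0,\ \delta>0\}$, Wu's theorem~\citep{WuEM:1983} forces every limit point of $\{(\A_{(t)},\delta_{(t)})\}$ to be a stationary point of $G$ on that feasible set. One then checks that the candidate $(\widehat{\A},\hat{\delta})$ of Theorem~\ref{thm:lse2} is feasible ($\U_q^T\b=\0$ because the columns of $\U_q$ are eigenvectors of $\BS$ for $\gamma_1\ge\cdots\ge\gamma_q>0$, all orthogonal to $\b\in\ran(\BS)^{\perp}$) and is a fixed point of (\ref{eq:em1})--(\ref{eq:em2}), and — by Theorem~\ref{thm:lse2} — is the unique strict local minimizer of $G$. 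The genuinely delicate point, where I expect most of the work to lie, is ruling out convergence to the remaining stationary points: these are obtained by selecting $q$ non-principal eigenvectors of $\BS$, and one must show they are unstable fixed points of the EM map — e.g.\ by writing (\ref{eq:em1}) in an eigenbasis of $\BS$, decoupling the recursion, and tracking how the weight on non-principal directions decays, in the spirit of the probabilistic-PCA stability analysis in \citep{Tipping:1999} — so that the iteration indeed converges to $(\widehat{\A},\hat{\delta})$.
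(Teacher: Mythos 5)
Your proof is correct and follows the same overall skeleton as the paper's (induction for the feasibility conditions, then a fixed-point analysis of the EM map), but the two key technical steps are handled by genuinely different means, and in both places your version is the more careful one. For $\delta_{(t+1)}>0$, the paper substitutes (\ref{eq:em1}) into (\ref{eq:em2}) to obtain $m\,\delta_{(t+1)}=\tr(\B_{(t)})$ with $\B_{(t)}=\BS-\BS\A_{(t)}\big(\delta_{(t)}\Si_{(t)}+\A_{(t)}^T\BS\A_{(t)}\big)^{-1}\A_{(t)}^T\BS$, and identifies $\B_{(t)}$ as the Moore--Penrose inverse of $\BS^{+}+\delta_{(t)}^{-1}\A_{(t)}\Si_{(t)}^{-1}\A_{(t)}^T$ (citing Harville), hence p.s.d. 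Your decomposition
$m\,\delta_{(t+1)}=\|\H_b\F-\A_{(t{+}1)}\langle\W\rangle\|_F^2+\delta_{(t)}\tr\big(\Si_{(t)}^{-1}\A_{(t{+}1)}^T\A_{(t{+}1)}\big)$
is exactly what falls out of (\ref{eq:m2}) in Appendix~\ref{ap:ff} once one writes $\langle\W\W^T\rangle=\langle\W\rangle\langle\W\rangle^T+\delta_{(t)}\Si_{(t)}^{-1}$; it is more elementary (no pseudoinverse identity needed) and, unlike the paper's argument, it actually addresses \emph{strict} positivity --- positive semidefiniteness of $\B_{(t)}$ alone only yields $\tr(\B_{(t)})\geq 0$, and your case split ($\A_{(t{+}1)}\neq\0$ versus $\A_{(t{+}1)}=\0$ with $\tr(\BS)>0$ from $\rk(\M)=p\geq 2$) closes that. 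Likewise, your explicit verification of $\A_{(t{+}1)}^T\b=\0$ from $\BS\b=\0$ is something the paper's appendix omits entirely.

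On the convergence claim, both you and the paper reduce to the same fixed-point equations: any limit point satisfies $\BS\widehat{\A}=\widehat{\A}(\hat{\delta}\I_q+\widehat{\A}^T\widehat{\A})$, so the columns of $\widehat{\A}\V\Lam^{-1/2}$ are orthonormal eigenvectors of $\BS$ with eigenvalues $\hat{\delta}\I_q+\Lam$. At that point the paper simply asserts ``according to Appendix~\ref{ap:a}'' that these are the \emph{principal} eigenvectors; the fixed-point equations by themselves only force them to be \emph{some} $q$ eigenvectors. The step you flag as genuinely delicate --- showing that the non-principal fixed points are unstable under the EM map --- is precisely the gap in the paper's own argument, which it does not close either. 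So your proposal is not missing anything the paper supplies; if anything it is more honest about what a fully rigorous proof of the convergence statement would require.
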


The EM
algorithm provides an efficient iterative method for computing the matrix ridge
approximation. This iterative method is related to the power method
and the Lanczos method~\citep{Golub:1996}, which typically serve for solving matrix eigenvector
problems numerically.
Specifically, this EM algorithm is similar to the QR orthogonal iteration, which is a straightforward generalization of the power method to find a higher-dimensional invariant
subspace~\citep{Golub:1996}.

Intuitively, it seems interesting that we consider a  two-step procedure to solve the matrix ridge approximation as follows. Specifically,
we first apply the  QR orthogonal iteration to
obtain an $m{\times}q$ column-orthonormal matrix $\Q$ and set $\widehat{\A}=\Q(\Q^T\BS\Q)^{1/2}$.  We then
calculate $\hat{\delta}=\frac{\tr(\BS)- \tr(\Q^T\BS\Q)}{m}$ based on the minimization of $\tr((\BS- \widehat{\A}\widehat{\A}^T-\delta \I_m)^2)$ w.r.t.\ $\delta$.
Assume that $\Q^T\BS\Q=\Gam_q$. Then $\hat{\delta}=\frac{\sum_{i=q{+}1}^m \gamma_i}{m}$.
It is directly computed that
\[
\tr((\BS- \widehat{\A}\widehat{\A}^T- \hat{\delta} \I_m)^2) = \sum_{i=q{+}1}\gamma_i^2
- \frac{1}{m}\Big(\sum_{i=q{+}1}\gamma_i\Big)^2 > \sum_{i=q{+}1}\gamma_i^2
- \frac{1}{m-q}\Big(\sum_{i=q{+}1}\gamma_i\Big)^2.
\]
This implies that the two-step procedure can not find the optimum solution of the matrix ridge approximation
problem. Moreover, we have $\kappa(\widehat{\A} \widehat{\A}^T + \hat{\delta} \I_m) = 1+ \frac{m \gamma_1}{\sum_{i=q{+}1}^m \gamma_i}$.
Compared with our method, this two-step method results in the approximation with higher condition number. Moreover, the method can not keep the well-conditionedness  of the original matrix (if it is well-conditioned).
We will conduct simulation on a toy data in Section~\ref{sec:exp1}, which shows that the two-step  method fails to solve the matrix ridge approximation problem.

It is worth noting that the nonnegativity on $\b$ is not
necessary in our derivation for the estimation methods. In fact, we
are able to extend the constraints $\A^T\b=\0$ to $\A^T\E=\0$ where
$\E$ is an $m{\times}k$ matrix of full column rank ($k{+}p\leq m$). In
this case, letting $\BP=\I_m{-} \E (\E^T\E)^{-1} \E^T$ and $\BS=\BP \M
\BP$, we alternatively use $\tr(\BS- \A \A^T - \delta \I_m)^2$ as the
loss function. The resulting solution is also similar to that in
Theorem~\ref{thm:lse2}.

%%%%%%%%%%%%%%%%%%%%%%%%%%%%%%%%%%%%%%%%%%%%%%%%%%%%%%%%%%%%%%%%%%%%%%%%%%
%%%%%%%%%%%%%%%%%%%%%%%%%%%%%%%%%%%%%%%%%%%%%%%%%%%%%%%%%%%%%%%%%%%%%%%%%%
\section{Applications of the Matrix Ridge Approximation}
\label{sec:app}

The matrix ridge approximation has potential applications in multivariate analysis and
machine learning. In this section we present two
important examples to illustrate its applications.

Let $\M$ be an $m{\times}m$ symmetric positive (semi)definite
matrix. It is well known that the computational complexities of
calculating the inverse of $\M$ and the spectral decomposition
of $\M$ are $O(m^3)$. Thus, the computational costs are high when
$m$ is large. We now address these two computational issues via the matrix ridge approximation. First of all,
assume we obtain that $\M \approx \delta \I_m+ \A
\A^T$ where $\A \in \BR^{m{\times}q}$ and $q\ll m$  using the EM
iteration.

In the first example we consider the computation of
$\M^{-1}$ where $\M$ is positive definite. We approximate $\M^{-1}$ by $(\delta \I_m+ \A
\A^T)^{-1}$ which is then calculated by using the
Sherman-Morrison-Woodbury formula; i.e.,
\begin{equation} \label{eq:smw}
(\delta \I_m+ \A \A^T)^{-1}= \delta^{-1} \I_m- \delta^{-1} \A(\delta
\I_q + \A^T\A)^{-1} \A^T.
\end{equation}
Clearly, the current complexity is $O(m q^2)$.
Thus, the computational cost will become much lower when $q$ is far less than $m$.

Recall that the incomplete Cholesky decomposition is widely used in
the literature. For the $m{\times}m$ positive definite matrix
$\M$, we can consider its approximation by using the incomplete
Cholesky decomposition, that is, $\M\thickapprox \L \L^T$ where $\L$ is an
$m{\times}q$ lower triangular matrix. %Further assume that $\M$ is nonsingular.
Since $\L \L^T$ is singular, this decomposition
can not directly provide  us an approach to the
approximation of $\M^{-1}$. Also, the Nystr\"{o}m
method could not be directly used for  the
approximation of $\M^{-1}$. We can employ the two-step procedure as discussed in the previous
section. However, we have also shown that this two-step procedure can not find the optimum solution,
which will be empirically illustrated in Section~\ref{sec:exp1}.

If $\M$ has an explicit form of
\begin{equation} \label{eqn:spe_str}
\M = \Ph + \alpha^2 \I_m
\end{equation}
where $\Ph$ is an available $m{\times}m$ positive semidefinite matrix and $\alpha\neq 0$ is prespecified, both the
incomplete Cholesky decomposition and the Nystr\"{o}m
method work. Specifically,  one first implements either the
incomplete Cholesky decomposition or the Nystr\"{o}m
method on $\Ph$ to obtain $\L$ and then uses the Sherman-Morrison-Woodbury formula.
Since our method directly applies to $\M$ (rather than $\Ph$), our method can obtain a tighter approximation to $\M$.
Consider that the ridge term in our method $\delta$ is $\alpha^2 + \frac{1}{m{-}q} \sum_{j=q{+}1}^m \lambda_j$
where  $\lambda_{1}\geq \lambda_2 \geq \cdots \geq \lambda_m$  are the  eigenvalues of $\Ph$.
The condition number of the approximate matrix with our method is
\[ \kappa(\A\A^T + \delta \I_m) = \frac{\alpha^2+ \lambda_1 }{\alpha^2 + \frac{1}{m{-}q} \sum_{j=q{+}1}^m \lambda_j},
\]
while the condition number  with the incomplete Cholesky decomposition is
\[\kappa(\L \L^T + \alpha^2 \I_m) = \frac{\alpha^2+ \lambda_1 }{\alpha^2}.
\]
Therefore, our method is more  stable numerically especially when $\alpha^2$ takes a very small value.
Our simulation in Section~\ref{sec:exp1}  further illustrates the issues.
We will see that when $\alpha^2$ takes a very small value, the incomplete Cholesky decomposition
fails to approximate the inversion of $\M$.

We note that any strictly positive definite matrix $\M$ can be expressed as in (\ref{eqn:spe_str}).
For example, we take $\alpha^2$ as the smallest eigenvalue of $\M$. In this case,
it is required to estimate the smallest eigenvalue prior to the implementation of the incomplete Cholesky decomposition (or the Nystr\"{o}m
method). Thus, the method becomes inefficient. Moreover, the previous issues still exist for the incomplete Cholesky decomposition
and the Nystr\"{o}m method in comparison with our method.

In the second example, we are concerned with the symmetric eigenvector problem, which plays an important role
in multivariate statistical analysis and machine learning.  The  eigenvector problem is defined by
\begin{align} \label{eq:sep}
& \max_{\X \in \BR^{m{\times}q}} \; \frac{1}{2} \tr(\X^T \M \X) \\
& \mbox{subject to } \quad \X^T \X = \I_q \mbox{ and } \X^T \b=\0.
\nonumber
\end{align}
If $\b=\0$, Problem~(\ref{eq:sep}) becomes the standard
Rayleigh quotient problem~\citep{Golub:1996}.
Furthermore, if viewing $\M$ as a sample covariance matrix, it is equivalent to
the PCA problem~\citep{Jolliffe:2002}.

If $\b^T \b =1$, the problem in~(\ref{eq:sep}) is
a symmetric eigenvector problem with linear
constraints~\citep{GolubSIAM:1973}.
It defines a spectral clustering problem when $\M$ is set as a kernel matrix~\citep{ZhangzhSTS:2008a}.

Consider the spectral decomposition (or singular value
decomposition, SVD) of $\T = \BP \M \BP$ as $\T = \U^T \Gam \U$ where
$\U \in \BR^{m{\times}m}$ is orthogonal and $\Gam=\diag(\gamma_1,
\ldots, \gamma_m)$ is arranged in descending order.
Let $\widehat{\X}=\U_{q} \V$, where $\U_{q}$ is the
$m{\times}q$ matrix containing the first $q$ columns of $\U$ and
$\V$ is an arbitrary $q{\times}q$ orthogonal matrix. Then the matrix
$\widehat{\X}$ is  the maximizer of the eigenvector problem in
(\ref{eq:sep})~\citep[see,][]{GolubSIAM:1973}. On the other hand,
it follows from
Theorem~\ref{thm:lse2} that $\hat{\A} (\hat{\A}^T
\hat{\A})^{-\frac{1}{2}}= \U_q \V = \widehat{\X}$. This implies that we can obtain the solution of (\ref{eq:sep})  via the matrix ridge approximation.

Note that if $\b=\0$,
$\hat{\A} (\hat{\A}^T \hat{\A})^{-\frac{1}{2}}$ spans the same subspace as that
spanned by the first $q$ principal eigenvectors of $\M$ ($=\T$). When $q=1$, $\hat{\A} (\hat{\A}^T \hat{\A})^{-\frac{1}{2}}$ is the
top eigenvector of $\M$. In this case, the EM iteration bears
resemblance to the power method~\citep{Golub:1996}.

Naturally and intuitively, the incomplete Cholesky decomposition method might be used to approximate the solution of the problem in~(\ref{eq:sep}).
Specifically, one first finds the incomplete Cholesky decomposition of $\T$ as $\T \approx \L \L^T$
and then treats $\L (\L^T\L)^{-\frac{1}{2}}$ as the solution of Problem~(\ref{eq:sep}).
However, to our knowledge,  in the existing literature there is no  theoretical guarantee  that  $\L (\L^T\L)^{-\frac{1}{2}}$ is
a solution of Problem~(\ref{eq:sep}). In fact, our experimental results in Section~\ref{sec:exp1}
show that  the incomplete Cholesky decomposition method is not appropriate to approximate
the solution of the problem in~(\ref{eq:sep}).

%%%%%%%%%%%%%%%%%%%%%%%%%%%%%%%%%%%%%%%%%%%%%%%%%%%%%%%%%%%%%%%%%%%%%%%%%%%%%%%%%%%%%%%%%%%
%%%%%%%%%%%%%%%%%%%%%%%%%%%%%%%%%%%%%%%%%%%%%%%%%%%%%%%%%%%%%%%%%%%%%%%%%%%%%%%%%%%%%%%%%%%
%%%%%%%%%%%%%%%%%%%%%%%%%%%%%%%%%%%%%%%%%%%%%%%%%%%%%%%%%%%%%%%%%%%%%%%%%%%%%%%%%%%%%%%%%%%

\section{Probabilistic Matrix Ridge Approximation Models}
\label{sec:pra}

In this section we consider two
probabilistic models for the matrix ridge approximation. We thus show
that the ML estimation approach for the
parameters of the probabilistic models results in the same solution as that based on the
standard spectral decomposition technique. The probabilistic formulation
also gives rise to the EM iterative method defined in~(\ref{eq:em1}) and (\ref{eq:em2}).

\subsection{The Normal Latent Variable Model}
\label{sec:nlvm}

In order to derive the EM iteration, we consider a probabilistic formulation of the matrix ridge approximation.
Our work is directly motivated by existing probabilistic interpretations of
dimensionality reduction  methods, such as probabilistic PCA~\citep{Tipping:1999,RoweisNIPS:1998,Ahn:2003},
probabilistic nonlinear component analysis~\citep{RosipalNC:2001}, Gaussian process latent variable
models~\citep{LawrenceNIPS:2003} and factor
analysis~\citep{Magnus:1999}.

Since $\M$ is an $m{\times}m$ positive semidefinite matrix of rank
$p$, there always exists an $m{\times}r$ matrix $\F$ with $r\geq p$
such that $\M=\F \F^T$.  Thus, we model $\F$  as a normal latent
variable model in matrix form:
\begin{equation} \label{eq:matr_version1}
\F = \A \W + \1_m \u^T + \Ups,
\end{equation}
where $\u$ is an $r{\times}1$ mean vector, $\W$ is a $q{\times}r$
latent matrix, and $\Ups$ is an $m{\times} r$ error matrix.
Furthermore, we assume
\begin{equation} \label{eq:matr_priors1}
\W \thicksim N_{q,r}\left(\0, \; (\I_q {\otimes} \I_r)/r \right)
\quad \mbox{ and } \quad \Ups \thicksim N_{m, r}\left(\0, \; (\delta
\I_m{\otimes}\I_r)/r \right),
\end{equation}
where $\delta >0$.

Typically, only $\M$ is available while both $r$ and $\F$ are unknown
in our case. Fortunately, we will see that our model can work via
some matrix tricks to yield an estimation procedure for the unknown
parameters $\A$ and $\delta$, which does not explicitly depend on
$r$ and $\F$.

It is clear that $\F \thicksim N_{m, r}(\1_m \u^T, \; (\A \A^T
{+}\delta \I_m) {\otimes} \I_r/r)$. Thus, the log-likelihood is
\begin{align*}
L(\A, \delta, \u) &= -\frac{m r}{2} \log(2 \pi)+\frac{m r}{2} \log r
-\frac{r}{2}\log|\Oma| -\frac{r}{2} \tr((\F{-}\1_m
\u^T)^T\Oma^{-1}(\F{-}\1_m \u^T)) \\
& \varpropto - \log|\Oma| - \tr((\F{-}\1_m \u^T)^T\Oma^{-1}(\F{-}\1_m
\u^T))
\end{align*}
where $\Oma=\A\A^T{+} \delta\I_m$.

We consider two setups for the mean vector $\u$. In the first
setup we let $\u=\0$. We then see that maximizing $L(\A, \delta,
\0)$ is equivalent to minimizing $G_1(\A, \delta)=\log|\Oma| +
\tr(\Oma^{-1} \M)$ where $\M=\F\F^T$, w.r.t.\ $(\A, \delta)$ under the constraint
$\A^T\b=\0$.
In the second setup we let $\u=\frac{1}{\1_m^T \b} \F^T \b$.
Substituting such a $\u$ in $L(\A, \delta, \u)$ leads to the conclusion
that the maximum likelihood estimate is  equivalent to minimizing
$G(\A, \delta)=\log|\Oma| + \tr(\Oma^{-1} \BS)$ w.r.t.\ $(\A,
\delta)$ under the constraint $\A^T\b=\0$. Thus, the matrix ridge approximation can also be
solved from  the probabilistic formulation.

Since our probabilistic model defined by
(\ref{eq:matr_version1}) and (\ref{eq:matr_priors1}) is a latent
variable model, this encourages us to develop an EM algorithm for
the parameter estimation. In particular, considering $\W$ as the
missing data, $\{\W, \F\}$ as the complete data, and $\A$ and
$\delta$ as the model parameters, we now have the EM algorithm for
the matrix ridge approximation, which is given in (\ref{eq:em1})
and (\ref{eq:em2}).  The derivation is then given in Appendix~\ref{ap:ff}. The algorithm is related to the EM algorithm
derived in the literature~\citep{RoweisNIPS:1998,Tipping:1999}.
However, we impose the constraint $\A^T\b=\0$ in our model.

%%%%%%%%%%%%%%%%%%%%%%%%%%%%%%%%%%%%%%%%%%%%%%%%%%%%%%%%%%%%%%%%%%%%%%%%%%%%
%%%%%%%%%%%%%%%%%%%%%%%%%%%%%%%%%%%%%%%%%%%%%%%%%%%%%%%%%%%%%%%%%%%%%%%%%%%%
\subsection{The Wishart Model}
\label{sec:wishart}

In this subsection we further explore the statistical properties of the matrix
ridge approximation. In particular, we establish a Wishart model,
corresponding to the treatments in the maximum likelihood estimation method.

First, we assume $\u=0$. We then have $\F \thicksim N_{m,r}\big(\0,
\; (\A \A^T {+} \delta \I_m) {\otimes} \I_r/r \big)$. Consequently, $\M=\F \F^T$ follows
Wishart distribution $W_m\left(r, (\A \A^T {+} \delta \I_m)/r
\right)$.
Second, it follows from (\ref{eq:matr_version1}) that $\F{-}\1_m
\u^T|\W \thicksim N_{m, r}\big(\A\W, \; \delta (\I_m{\otimes} \I_r)/r
\big)$. Hence,
\[
\F{-}\1_m \u^T \thicksim N_{m, r}\big(\0, \; (\A \A^T {+} \delta \I_m)
{\otimes} \I_r/r \big).
\]
Subsequently, $(\F{-}\1_m \u^T)(\F{-} \1_m \u^T)^T$ is distributed
according to  $W_m(r, \; (\A \A^T {+} \delta \I_m)/r)$. When $\u=
\frac{1}{\1_m^T\b} \F^T\b$, we thus have $(\F{-}\1_m \u^T)(\F{-} \1_m
\u^T)^T=\BS \thicksim W_m(r, \; (\A \A^T {+} \delta \I_m)/r)$.

Conversely, let $\M$ or $\BS$ follow a Wishart distribution
with an integral degree of freedom $r$. According to the equivalence  between
Gaussian  and Wishart distributions~\citep{GuptaN:Book:2000,ZhangJML:2006},  we can also obtain an
$m{\times}r$ matrix $\F$ which follows a matrix-variate normal
distribution.

In the normal latent variable and Wishart models, we assume that
$r$, the dimensionality of $\F$, is finite. In the reproducing
kernel literature~\citep{ScholkopfBook:2002}, $r$ is the
dimensionality of the feature space that can be infinite. For
example, the dimensionality of the feature space induced by the
Gaussian RBF kernel is infinite. In this case,
we study  the
\emph{asymptotic distribution} of $\BS$. Specifically, the asymptotic
distribution of $\frac{1}{\sqrt{r}} (\BS - (\A\A^T+\delta \I_m))$, as
$r \rightarrow \infty$, is a symmetric matrix-variate normal
distribution~\citep{GuptaN:Book:2000}.

It is worth pointing out that
the latent variable models provide a probabilistic formulation for PCO.
That is, it defines a probabilistic PCO model, which is dual to
the probabilistic PCA model~\citep{Tipping:1999}.

In the existing statistical approaches to multidimensional scaling
\citep{Ramsay:1982,Groenen:1995,Oh:2001}, an error structure of
$\delta_{ij}^2$ is incorporated so that $\delta_{ij}^2$, conditioned
on $d_{ij}^2=\|{\y}_i-{\y}_j\|^2$, has p.d.f.
$p(\delta_{ij}^2|d_{ij}^2)$. Since $\delta_{ij}$ must be
nonnegative, $\delta_{ij}^2$ is usually modeled as a
truncated normal or log-normal distribution, with parameters
$d_{ij}^2$. Moreover, the $\delta_{ij}^2$ are assumed to be
independent.
This provides an approach to the ML estimates of the ${\y}_i$.
Some numerical
methods such as gradient methods and Bayesian sampling methods such as MCMC
are then used to calculate the ${\y}_i$.

However, these statistical approaches are not  appropriate for
probabilistic modeling of PCO. Since the dissimilarity matrix
$\De=[\delta_{ij}^2]$ in PCO is Euclidean, the triangle inequality
\[
\delta_{ij} + \delta_{ik} \geq \delta_{jk}
\]
should be satisfied. On one hand, this makes a conflict with the assumption that
the $\delta_{ij}^2$ are independent. On the
other hand, for the $\delta_{ij}^2$ generated from a truncated
normal or log-normal distribution, the triangle inequality is no
longer guaranteed. Accordingly,  $\De$ is not
Euclidean. In our Wishart model the interactions among the
$\delta_{ij}^2$ are explored, because we treat the  similarity
matrix $\Q=-\frac{1}{2}\H_b \De \H_b^T$ as a Wishart matrix. Moreover, the positive
semidefiniteness of $\Q$ implies the Euclideanarity of $\De$~\citep[e.g. see,][]{Gower:1986}.

%%%%%%%%%%%%%%%%%%%%%%%%%%%%%%%%%%%%%%%%%%%%%%%%%%%%%%%%%%%%%%%%%%%%%%%%%%
%%%%%%%%%%%%%%%%%%%%%%%%%%%%%%%%%%%%%%%%%%%%%%%%%%%%%%%%%%%%%%%%%%%%%%%%%%
\section{Experiments}
\label{sec:exp}

As we see from Theorem~\ref{thm:lse2}, the conventional ML estimation approach
gives the same solution
as the corresponding least squares counterpart. Moreover, the  ML estimate
is obtained by using the standard direct spectral decomposition (SD)
technique. Our analysis has also provided an EM iterative algorithm. Thus, it
is of interest to compare the performance of the EM algorithm with
the direct SD method. All algorithms have been implemented in Matlab.

\subsection{Performance Analysis on Toy Datasets}
\label{sec:exp1}

In Section~\ref{sec:golub} we show that the EM algorithm is more
efficient than the SD method when $m$ is large. Moreover, the
solution of the EM algorithm converges to that of the conventional ML estimate
based on the SD method.  We performed our experimental analysis
based on a toy dataset by studying the two applications of the matrix
ridge approximation presented in Section~\ref{sec:app}.

In the simulation we used a $10{\times}10$ positive definite
matrix $\M$, which is given by
\setlength{\tabcolsep}{2.0pt}
\begin{small}
\[
 \M=%\left[ \begin{array}{cccccccccc}
     \begin{bmatrix}    1.8147  &  0.8650 &   0.8781 &   0.8106  &  0.9900  &  0.8270 &   0.8737  &
         0.9851 & 0.6538  &  0.8958 \\
        0.8650   & 1.9058   & 0.9560  &  0.9465 &   0.8311  &  0.5516  &  0.8781
      &  0.9139  & 0.8781  &  0.9851 \\
        0.8781  &  0.9560  &  1.1270  &  0.9704  &  0.8781  &  0.5543  &  0.9656
      &  0.9185 & 0.9094   & 0.9512 \\
    0.8106  &  0.9465  &  0.9704  &  1.9134  &  0.8106  &  0.5066  &  0.9465
    & 0.8825 &  0.9560  &  0.9512 \\
     0.9900  &  0.8311  &  0.8781  &  0.8106  &  1.6324  &  0.8270  &  0.9003
    & 0.9753 &  0.6538  &  0.8694 \\
    0.8270   & 0.5516  &  0.5543  &  0.5066  &  0.8270  &  1.0975  &  0.6096
   & 0.7827  &  0.3447  &  0.6005 \\
    0.8737  &  0.8781  &  0.9656  &  0.9465  &  0.9003 &   0.6096  &  1.2785
   & 0.9139 &    0.8607  &  0.8914 \\
    0.9851  &  0.9139  &  0.9185  &  0.8825  &  0.9753  &  0.7827  &  0.9139
   & 1.5469 &   0.7334  &  0.9465 \\
    0.6538  &  0.8781 &   0.9094  &  0.9560  &  0.6538  &  0.3447  &  0.8607
   & 0.7334 &  1.9575 &   0.8564 \\
    0.8958  &  0.9851  &  0.9512  &  0.9512  &  0.8694  &  0.6005   & 0.8914
   & 0.9465 &  0.8564  &  1.9649 \end{bmatrix}.  %\end{array} \right].
\]
\end{small}
The eigenvalues of $\M$ are  $9.2521$, $1.6413$, $1.0326$, $0.9460$,
$0.9386$, $0.7530$, $0.5925$, $0.4736$, $0.4142$  and  $0.1946$.
Thus, the eigenvalues of $\M^{-1}$ are 5.1387,  2.4143, 2.1115,
1.6878, 1.3280, 1.0654, 1.0571, 0.9684, 0.6093,  0.1081.  Our
current purpose is to approximate $\M^{-1}$ by using the ridge
approximation of $\M$.  That is, we first implemented the ML
estimates of $\A$ and $\delta$ and then calculated $(\A \A^T+ \delta
\I_m)^{-1}$---an approximation of $\M^{-1}$---in terms of
(\ref{eq:smw}). In this example, $\b=0$ which implies that there is no constraint for $\M$.

In the EM iteration we randomly generated 10 $q$-dimensional vectors
from $N_q(\0, \I_q)$ as the initial value $\A_{(0)}$ of $\A$ and set the
initial value of $\delta$ as $\delta_{(0)}=0.0001$. We implemented our
analysis for $q=1, \ldots, 9$. After taking about 20 step, the EM
iterations converge to the conventional ML solution based on the
spectral decomposition method. Table~\ref{tab:delta} reports the
SD-based ML estimates and the EM iteration estimates of $\delta$ for
$q=1, \ldots, 9$. The corresponding values are almost identical.

We evaluate the performance of $(\A\A^T+\delta \I_m)^{-1}$, as an
approximation of $\M^{-1}$,  by employing the following two criteria:
\[
e_F= \frac{1}{\sqrt{m}}\|\I_m - \M (\A\A^T+ \delta \I_m)^{-1}\|_{F} \quad \mbox{ and
} \quad e_2= \|\I_m - \M (\A \A^T + \delta \I_m)^{-1} \|_{2}.
\]
The $e_F$ and $e_2$ are given in Figure~\ref{fig:exam1}. We see
that $e_F$ and $e_2$ become small as $q$ increases. Especially, when
$q=9$, their values are $0.0024$ and $0.0076$   respectively. Moreover, in this case, the
eigenvalues of $(\A \A^T + \delta \I_m)^{-1}$ are 5.1381, 2.4143,
2.1115, 1.6877,  1.3281, 1.0654,  1.0571, 0.9684,  0.6093 and
0.1081, which are almost equal to those of $\M^{-1}$.

For comparison, we also performed the two-step method based on the QR orthogonal iteration (see Section~\ref{sec:golub}). We define the initial column-orthonormal
matrix $\Q_{(0)}=\A_{(0)} (\A_{(0)}^T\A_{(0)})^{-\frac{1}{2}}$ where  $\A_{(0)}$ is the same to that for the EM iteration.
As we see from Figure~\ref{fig:exam1}, for $q=1$ and $q=2$,
the two-step method has approximation errors similar to the EM iteration method. However, the two-step method fails to obtain a
good approximation in other cases. When $q=9$, the errors of the method are  $e_F=2.8462$ and $e_2=9.00$.
Additionally, the
eigenvalues of $(\A \A^T + \delta \I_m)^{-1}$ with the QR iteration are 51.2821, 2.3057, 2.0281,  1.6339, 1.2945,
1.0437, 1.0357,  0.9505, 0.6021, and 0.1079.

\begin{table}[!ht]
\begin{center}
\caption{The estimated values of $\delta$ with the ML based on the SD and
EM iteration methods.} \label{tab:delta}
\begin{tabular}{|l|ccccccccc|} \hline
          &  $q=1$ & $q=2$  & $q=3$ & $q=4$ & $q=5$ & $q=6$ & $q=7$ & $q=8$ & $q=9$ \\ \hline
{\tt SD} & 0.7763  &  0.6681 & 0.6161 & 0.5611 & 0.4856 & 0.4187 & 0.3608 & 0.3044 & 0.1946   \\
\hline
{\tt EM} &  0.7763& 0.6681 & 0.6161 & 0.5614 &  0.4856 & 0.4187 & 0.3608 & 0.3044 & 0.1945  \\
\hline
\end{tabular}
\end{center}
\end{table}

\begin{figure}[!ht]
\centering
\begin{tabular}{cc}
%\hspace{-0.1in}
\includegraphics*[width=80mm,
height=60mm]{./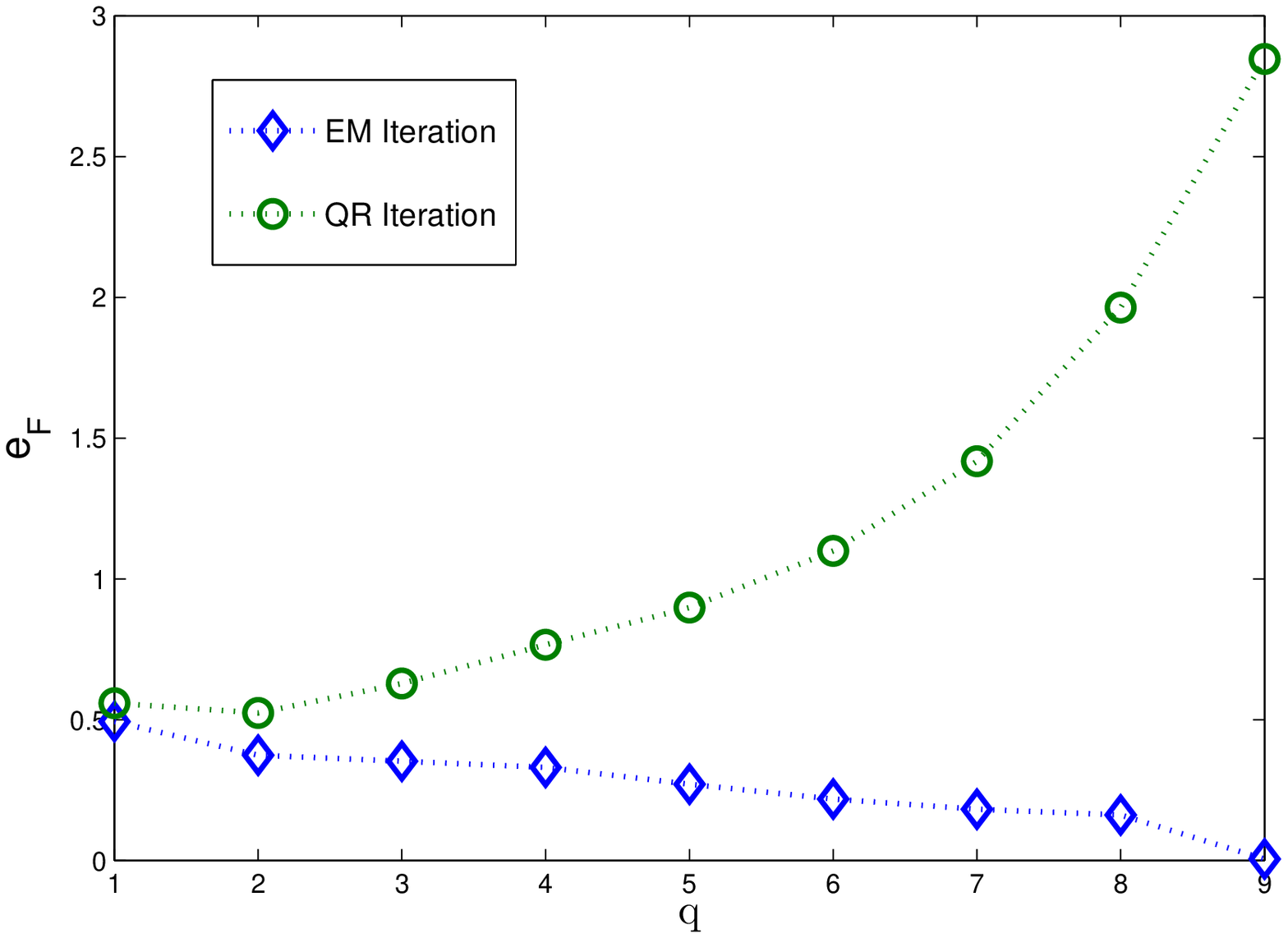}
& \hspace{-0.1in}  \includegraphics*[width=80mm, height=60mm]{./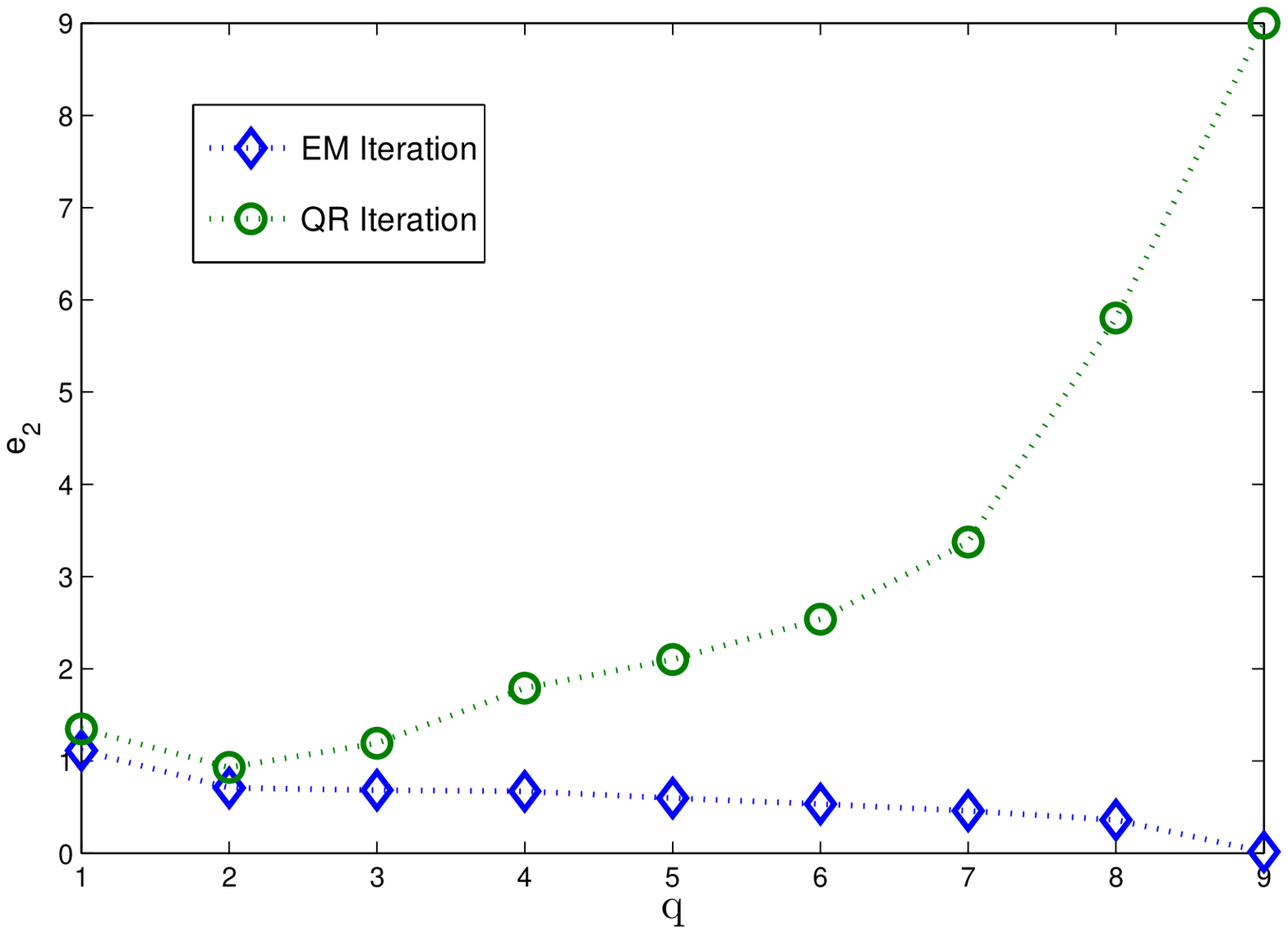}\\
(a) \; $e_F$  vs. $q$  & (b) \;  $e_2$ vs. $q$  
\\
\end{tabular}
\caption{(a) The errors between $\M^{-1}$ and its
approximate $\A \A^T+\delta \I_m$ where $\A$ and $\delta$ were
estimated by the EM method and the QR method respectively, for $q=1, \ldots, 9$.
}
\label{fig:exam1}
\end{figure}

Let us see the estimates of $\A$ in the cases that $q=1$ and $q=3$.
First, when $q=1$, the EM estimate of $\A$ is
\begin{small}
\[
\A= (-0.9563, -0.9790,  -0.9126,  -0.9774, -0.9308, -0.6513,
-0.9108,  -0.9579,  -0.8809, -1.0007)^T.
\]
\end{small}
It is further seen that $\A(\A^T\A)^{-\frac{1}{2}}=(-0.3285, -0.3363,
-0.3135,  -0.3357, -0.3197, -0.2237, \\ -0.3128, -0.3290, -0.3026,
-0.3437)^T$ is the principal eigenvector of $\M$.

When $q=3$, the matrix $\U_3$  of the first three eigenvectors of
$\M$ and the EM estimate of $\A(\A^T\A)^{-\frac{1}{2}}$ are
respectively given by
\begin{small}
\[
\U_3 = 
 \begin{bmatrix}
 -0.3285  &  0.4057  &  0.1792 \\
   -0.3363 &  -0.1540 &  -0.4530 \\
   -0.3135  & -0.0746 &   0.0302 \\
   -0.3357 &  -0.3073 &   0.1697 \\
   -0.3197  &  0.3362 &   0.1897 \\
   -0.2237 &   0.4044 &   0.1239 \\
   -0.3128 &  -0.0221 &   0.1241 \\
   -0.3290 &   0.2035 &   0.0465 \\
   -0.3026 &  -0.6230 &   0.4150 \\
   -0.3437 &  -0.0711 &  -0.7013 \end{bmatrix} % \end{array} \right]
 \;  \mbox{ and } \;
\A(\A^T\A)^{-\frac{1}{2}} = 
\begin{bmatrix}
0.1658  & -0.3889  & -0.3549 \\
0.3884  & -0.1290 &   0.4159 \\
0.0359  & -0.2973  &  0.1226 \\
-0.1863 &  -0.3615 &   0.2647 \\
0.1194  & -0.3819  & -0.3019 \\
0.1769  & -0.2709  & -0.3525 \\
-0.0128 &  -0.3352 &   0.0344 \\
0.1695  & -0.3296  & -0.1203 \\
-0.5560 &  -0.4134 &   0.4152 \\
0.6341  & -0.0423  &  0.4610  \end{bmatrix} 
\]
\end{small}
It is easily verified that
\[
\A(\A^T\A)^{-\frac{1}{2}} = \U_3 \V
\]
where
\[
\V= \left[ \begin{array}{ccc} -0.3130 &   0.9214 &  -0.2304 \\
    0.5098 &  -0.0417 &  -0.8593 \\
   -0.8013 &  -0.3864 &  -0.4566  \end{array} \right]
\]
is a $3{\times}3$ orthogonal matrix. This is in line with the
theoretical justification given in Section~\ref{sec:app}.

In the case that $q=3$, we also implement the incomplete Cholesky decomposition of $\M$ as $\M \approx \L \L^T$
where $\L$ and $\L (\L^T\L)^{-\frac{1}{2}}$ are given as\footnote{
Our implementation is based on the code from {\tt http://theoval.cmp.uea.ac.uk/~gcc/matlab/default.html}, which was written
for the incomplete Cholesky decomposition algorithm described by \citet{Fine:2001}.}
\[ \L  = \begin{bmatrix} 0.6391 &   0.2092    &       1.16727 \\
 0.7028     &   0.3565   &       0.2924 \\
 0.6786     &       0.3931       &       0.3103 \\
 0.6786     &       0.4302       &       0.2458 \\
 0.6202     &       0.2184       &       0.4694 \\
 0.4284     &       0.0659      &       0.4621 \\
 0.6359     &       0.3751       &       0.3331 \\
 0.6752     &       0.2549       &       0.4286 \\
 0.6110     &       1.2587        &       0 \\
 1.4017     &       0                       &   0
\end{bmatrix} \;
\mbox{ and } \;
 \L(\L^T\L)^{-\frac{1}{2}} = \begin{bmatrix}
 0.0633      &       0.0016     &       0.8150 \\
 0.2621      &       0.1315     &       0.0757 \\
 0.2365      &       0.1659     &       0.0948 \\
 0.2428      &       0.1977     &       0.0407 \\
 0.2051      &       0.0321     &       0.2437 \\
 0.1292      &      -0.0505     &       0.2843 \\
 0.2108      &       0.1598     &       0.1236 \\
 0.2379      &       0.0512     &       0.1968 \\
 0.0816      &       0.8915     &      -0.1696 \\
 0.8037      &      -0.2992     &      -0.3104
\end{bmatrix}.
\]
Assume that $\ran(\L(\L^T\L)^{-\frac{1}{2}})=\ran(\U_3)$. Then we have $\L(\L^T\L)^{-\frac{1}{2}} = \U_3 \R$ where
\[
\R =\U_3^T \L(\L^T\L)^{-\frac{1}{2}} = \begin{bmatrix} 0.6248   &       -0.5466      &       0.4554 \\
   -0.6634   &       -0.5633       &       0.3665  \\
  0.16903    &       -0.5628       &       -0.6421
\end{bmatrix}.
\]
We further have $\I_3=(\L^T\L)^{-\frac{1}{2}} \L^T\L (\L^T\L)^{-\frac{1}{2}} = \R^T \U_3^T\U_3 \R = \R^T\R$.
However, it is directly computed that $\R^T\R \neq \I_3$, yielding a conflict. This implies that the assumption
$\ran(\L(\L^T\L)^{-\frac{1}{2}})=\ran(\U_3)$ is not true. Thus, this example shows that the incomplete
Cholesky decomposition can not be used to find the top eigenvectors of an arbitrary positive definite matrix.

Additionally, we defined a new positive definite matrix $\K$ as
\[
\K = \M + \alpha^2 \I_{10},
\]
which has an  explicit form as in (\ref{eqn:spe_str}). As mentioned in Section~\ref{sec:app},
we  employed the incomplete Cholesky decomposition to approximate $\K^{-1}$. In particular, we first obtained
the incomplete Cholesky decomposition of $\M$ as $\M \approx  \L \L^T$ and then computed $(\alpha^2 \I_{10} + \L \L^T)^{-1}
=\alpha^{-2} \I_{10}- \alpha^{-2} \L(\alpha^2 \I_q + \L^T \L)^{-1} \L^T$ as the approximation to $\K^{-1}$.
Let $q=3$. We took $\alpha^2 =0.1$ and $\alpha^2 =0.0001$ to implement the empirical analysis.
When   $\alpha^2 =0.1$,  $e_F$ and $e_2$ with the incomplete Cholesky decomposition are  respectively
$7.0688$ and $14.8785$;  $e_F$ and $e_2$ with our method are  $0.3030$ and  $0.5886$.
When $\alpha^2 =0.0001$,  $e_F$ and $e_2$ with the incomplete Cholesky decomposition are  respectively
$7.0680{\times}10^3$ and $1.4877{\times} 10^4$;  $e_F$ and $e_2$ with our method are  $0.3522$ and  $ 0.6840$.
This shows that the incomplete Cholesky decomposition fails when $\alpha^2$ takes a very small value.
However, our method is numerically stable in every case. The reason is in that
our method makes $\A \A^T {+} \delta \I_{10}$ better-conditioned  than $\K$. But we see that $\L \L^T {+} \alpha^2 \I_{10}$
is more ill-conditioned  than $\K$.

Finally, we performed a simulation on a cluster
to further validate efficiency of our approach in inverting large-size matrices.
We randomly generated a $50000  \times 50000$ positive definite matrix $\M$ from Wishart distribution $W_{50000}(50020, \Si)$
where $\Si= 0.5 \1_{50000} \1_{50000}^T + 0.5 \I_{50000}$. The running time of the direct computation for $\M^{-1}$
is $5.4416 \times 10^4 $ (s), while our approximate approach with $q=224$ ($\approx \sqrt{50000}$)  took  $2.5452{\times} 10^3$
seconds. Moreover, the errors are $e_f= 0.9744$ and $e_2=2.4798$, respectively.

\subsection{The Matrix Ridge Approximation for Spectral Clustering}
\label{sec:exp2}

The matrix ridge approximation (RA)  with the EM iteration has potentially wide applications in those
methods who involve the inversion or SD of a large-scale positive semidefinite matrix.
In this section we apply RA to spectral clustering.

Spectral clustering~\citep{Shi:2000,Ng:2002} is a method for partitioning data into classes
by relaxing an intractable partitioning problem into a tractable
eigenvector problem, specifically a problem that can be reduced
to the eigenvector problem in~(\ref{eq:sep}) for a particular
matrix $\M$~\citep{ZhangzhSTS:2008a}.  The solution of the relaxation is then ``rounded''
to yield a partition, where standard rounding methods include
$K$-means and Procrustes analysis~\citep{ZhangzhSTS:2008a}.

In the following experiments, we used the  EM-based RA  methods to solve the eigenvector
relaxation associated with spectral clustering, and compared the
results with the conventional direct spectral decomposition (SD) method.  We also
implemented the $K$-means and Procrustean transformation (PT)
rounding algorithms given in~\cite{ZhangzhSTS:2008a} to obtain
complete spectral clustering algorithms.  This yields four spectral
clustering algorithms, which we refer to as RA-KM, SD-KM,
RA-PT and SD-PT.

Assume we are given a dataset $\{\x_1, \ldots, \x_m\}$.
We defined $\M$ as a kernel matrix $\K$
via the RBF kernel with single parameter $\beta$, i.e., $[\K]_{ij}=K(\x_i, \x_j)=\exp ({-} {\|\x_{i} - \x_{j}
\|^2}/{\beta})$. Let $\BP=\I_m {-} \frac{1}{m}\1_m \1_m^T$ (i.e., $\b=\frac{1}{\sqrt{m}} \1_m$). We then formed the $m{\times}m$
matrix $\T=\BP \K \BP$, whose top $q$ eigenvectors are the solution of the eigenvector problem in~(\ref{eq:sep}).
That is, the top $q$ eigenvectors of $\T$ are just the eigenvector
relaxation associated with spectral clustering. Recall that $\hat{\A} (\hat{\A}^T
\hat{\A})^{-\frac{1}{2}}= \U_q \V = \widehat{\X}$, which implies that
RA-KM and RA-PT employ the EM-based RA to find such $q$ eigenvectors. However,
SD-KM and SD-PT employ the standard direct SD to obtain the $q$ eigenvectors.

We conducted the experiments on eight publicly
available datasets from the UCI Machine Learning Repository: the
{\tt dermatology} data, the {\tt soybean} data, the ``A-J"
{\tt letter} data,  the {\tt image segmentation} data, the
{\tt NIST} optical handwritten digit data, the {\tt CTG} (Cardiotocograms) data,
the {\tt pen-based recognition of handwritten digits}  data,
and the {\tt Statlog} (Landsat Satellite) data. Table~\ref{tab:data}
gives a summary of these datasets.

In the clustering setup, $q{+}1$ is the number of classes. We initialized $K$-means by the orthogonal
initialization method in \cite{Ng:2002} and the Procrustean
transformation by $\I_{q}$. The values of $\beta$ that we used are given in the last row of
Table~\ref{tab:data1}; they were set to be empirically optimal for these
algorithms.

\begin{table}[!ht]
\begin{center}
\caption{Summary of the benchmark datasets: $m$---\# of
samples; $p$---\# of features; $q{+}1$---\# of classes; $\beta$---parameter in the kernel function $K(\cdot, \cdot)$.}
\label{tab:data1}
\begin{tabular}{|l|cccccccc|} \hline
     & {\tt Derma} & {\tt Soybean} & {\tt Letter} & {\tt CTG} & {\tt Segmen} & {\tt NIST}  & {\tt Landsat}  & {\tt Pen} \\
\hline
$m$    & $358$     & $630$   & $1978$ & $2126$ & $2310$  & $3823$   & $4435$    & $7494$  \\
$p$    & $34$      & $35$    & $16$   & $23$   & $18$    & $59$     & $36$     & $16$ \\
$q{+}1$    & $6$       & $19$    & $10$    & $10$    & $7$     & $10$      & $7$     & $10$  \\
\hline
$\beta$ & $100$     & $100$  & $100$  & $100$  & $1000$  & $1000$  & $5000$      & $100$ \\
\hline
\end{tabular}
\end{center}
\end{table}

To evaluate the performance of the various clustering algorithms,
we employed the Rand index (RI)~\citep{RandRI:1971}. Given a set
of $m$ samples ${\cal X} = \{\x_1, \ldots, \x_m\}$, suppose that
${\cal U} = \{{\cal U}_1, \ldots, {\cal U}_r\}$ and ${\cal V} = \{{\cal V}_1, \ldots, {\cal V}_s\}$ are
two different partitions of the samples in ${\cal X}$ such that
$\cup_{i=1}^r {\cal U}_i = {\cal X} = \cup_{j=1}^s {\cal V}_j$ and ${\cal U}_i
\cap {\cal U}_{i'} = \emptyset = {\cal V}_j \cap {\cal V}_{j'}$ for $i\neq i'$ and $j
\neq j'$. Let $a$ be the number of pairs of samples that are in the
same set in ${\cal U}$ and in the same set in ${\cal V}$, and $b$ the number of
pairs of samples that are in different sets in ${\cal U}$ and in different
sets in ${\cal V}$. The RI is given by $\mathrm{RI} =  (a+b)/{m
\choose 2 }$. If $\mathrm{RI}=1$, the two partitions are
identical.
Since the
true partitions are available for our datasets, we calculated
the RI between the true partition and the partition obtained
from each clustering algorithm.

We conducted 50 replicates of
each of those algorithms with $K$-means rounding because
of the random initialization required by $K$-means (this is
not necessary for the Procrustean transformation, because it
is initialized to the identity matrix). The results shown in
Table~\ref{tab:ri} are based on the average of these 50
realizations.

From Table~\ref{tab:ri} we see that the clustering methods based on
RA and SD have the almost same clustering performance.
In Table~\ref{tab:cputime} we reported the CPU times of the
direct SD method and the EM-based RA method for computing the top $q$ eigenvectors. We
see that RA method can be significantly more
efficient than the SD method for large $m$, and this is borne out
by our results. For example,
on \emph{pen-based recognition} of handwritten digits data ($m=7494$), the direct
SD method takes twenty two minutes, the EM-based RA method only needs about four minutes.

\begin{table}[!ht]
\begin{center}
\caption{Rand Index (\%).}
\label{tab:ri}
\begin{tabular}{|l||c|c||c|c|c|} \hline
 &   SD-PT  &   RA-PT  &  SD-KM & RA-KM   \\
\hline
{\tt Derma}     & ${95.49}$ &  ${95.49}$ &  $94.57$ ($\pm 1.89$) &  $94.47$ ($\pm 3.41$)  \\
  {\tt Soybean} & $92.69$  & ${92.87}$   & $91.32$ ($\pm 1.22$) & $91.80$ ($\pm 0.98$)  \\
 {\tt Letter}   & $ 85.68$ & ${85.63}$   & $84.96$ ($\pm 0.49$) & $84.96$ ($\pm 0.43$) \\
 {\tt CTG}      & $ 85.68$ & ${85.63}$   & $84.96$ ($\pm 0.49$) & $84.96$ ($\pm 0.43$) \\
{\tt Segmen}    & $ 80.51$  & $81.32$    & ${75.05}$ ($\pm 3.3$) & ${79.00}$($\pm 2.27$) \\
{\tt NIST}      & ${ 89.90}$  & ${89.89}$  & $89.52$ ($\pm 0.70$)  &  $89.51$ ($\pm 0.63$)  \\
{\tt Landsat}   & ${84.88}$  & ${84.90}$  & $83.33$ ($\pm 0.63$)  &  $83.28$ ($\pm 0.66$)\\
{\tt Pen}       & $90.61$  & $90.64$      & ${91.14}$ ($\pm 0.48$) & ${91.07}$($\pm 0.52$) \\
 \hline
\end{tabular}
\end{center}
\end{table}

\begin{table}[!t]
\begin{center}
\caption{CPU times (s) of running the spectral relaxation with
the direct SD and  EM-based ridge approximation (RA) which are performed in Matlab on a
Core 2 Duo computer with a 2.27 GHz CPU and 8 GB of RAM.} \label{tab:cputime}
\begin{tabular}{|l|cccccccc|} \hline
          &  {\tt Derma} & {\tt Soybean}  & {\tt Letter} & {\tt CTG} &{\tt Segmen} & {\tt NIST} & {\tt Landsat} & {\tt Pen} \\ \hline
{\tt SD} & $0.2202$  & $0.9148$ & $28.3251$ & $32.9942$ & $35.2645$ & $212.1458$ & $289.9903$ & $1361.8$  \\
\hline
{\tt RA} &  $0.1862$ & $0.7485$ & $7.8297$ & $9.4303$  & $11.1811$ & $42.7599$ & $64.054$  & $266.9177$ \\
\hline
\end{tabular}
\end{center}
\end{table}

\subsection{The Matrix Ridge Approximation for GPR}
\label{sec:exp3}

In this section we applied the matrix ridge approximation with the EM iteration to Gaussian process regression (GPR), and compared with
the Nystr\"{o}m method~\citep{WilliamsNIPS:2001_2}
and the incomplete Cholesky decomposition method~\citep{Fine:2001}.

Assume we are given a training dataset  ${\mathcal D} = \{(\x_1, y_1), \ldots, (\x_m, y_m)\}$, where the
$\x_i \in \BR^p$ are the input vectors and $y_i \in \BR$ are the corresponding outputs. In the GPR model $y$ is defined as
\[
y = u + f(\x) + \epsilon,  \quad \epsilon \thicksim N(0, \sigma^2),
\]
where $f(\x)$ follows a Gaussian process with mean function 0 and covariance function $K(\cdot, \cdot)$.
This implies that $\f = (f(\x_1), f(\x_2),
\ldots, f(\x_m))^T$, corresponding outputs of the input vectors in the
training dataset ${\mathcal D}$, has multivariate Gaussian distribution $N(\0, \K)$, where $\K$ is the $m{\times}m$ covariance matrix with
$[\K]_{ij} = K(\x_i, \x_j)$.

We employed the Gaussian RBF kernel function $K(\cdot, \cdot)$ with a separate length-scale
parameter for each variate of the input vector, plus the signal and noise variance
parameters $\sigma_f^2$ and $\sigma^2$. These parameters are trained by optimizing the marginal
likelihood on a subset of the training data. Here we ignored the learning details and directly used the code provided
by \cite{RassmussenWilliams} to implement the training. %and please refer to \cite{RassmussenWilliams}.
We concentrated our attention on the test procure.

For a test input vector $\x_{*}$,  the prediction of the
corresponding output $y_{*}$ is based on the conditional posterior distribution $p(y_{*}|\y)$, which is also Gaussian.
In particular, the predicted mean at $\x_*$ is given by
\[
\hat{y}_{*} = {\bf k}^T(\x_*) (\K {+} \sigma^2 \I_m)^{-1} {\bf
y},
\] 
where $\y = (y_1, \ldots, y_m)^T$
 and ${\bf k}(\x_*) = ( K(\x_*, \x_1),
\ldots, K(\x_*, \x_m))^T$ \citep[see,][]{RassmussenWilliams}. As we
can see, GPR requires us to compute the inverse of $ \K
{+} \sigma^2 \I_m$, which is an $m{\times}m$ positive definite matrix. When the size ($m$) of the training dataset is very large,
this limits the efficient application of
GPR.

Since $\K {+} \sigma^2 \I_m$ has the explicit form mentioned in Section~\ref{sec:app},
\citet{WilliamsNIPS:2001_2} considered the Nystr\"{o}m
approximation for its inverse when $m$ is large. The Nystr\"{o}m method randomly chooses $q$ columns of $\K$
without replacement. Let $\K_{m, q}$ denote the $m{\times} q$ matrix consisting of such $q$ columns. Then the Nystr\"{o}m
approximation of $\K$ is $\K_{m, q} \K_{q, q}^{-1} \K_{m, q}^T$.
Here we also compared the approximate method based on the incomplete Cholesky
decomposition; that is, we first implemented the incomplete Cholesky
decomposition of $\K$ as $\K \approx \L \L^T$ where $\L$ is an $m{\times}q$ lower triangular matrix.
After having obtained the Nystr\"{o}m approximation or the incomplete Cholesky
decomposition, we then computed $(\K_{m, q} \K_{q, q}^{-1} \K_{m, q}^T {+} \sigma^2 \I_m)^{-1}$ or $(\L \L^T {+} \sigma^2)^{-1} \I_m$ via
the Sherman-Morrison-Woodbury formula. Recall that we applied the ridge approximation directly on $\K {+} \sigma^2 \I_m$,
rather than on $\K$.

We conducted the experiments on seven publicly
available datasets from the UCI Machine Learning Repository: the {\tt Boston Housing} data,  the
{\tt Concrete Compressive Strength } (CCS) data, the {\tt Contraceptive Method Choice} (CMC) data, the
{\tt Abalone} data, the {\tt Landsat Satellite}  (Sat) data, the {\tt SARCOS} data, and the {\tt YearPredictionMSD} ({YPMSD}) data. We employed the setting given
in the UCI Machine Learning Repository for training and testing for the first six datasets. For the {\tt YPMSD} data, we employed two settings for training and testing. In the first setting ({\tt YPMSD1}) we used
the first $60,000$ samples for training and the rest of the samples for testing, while in the second setting ({\tt YPMSD2}) we used
the first $100,000$ samples for training and the rest of the samples for testing. Table~\ref{tab:data}
gives a summary of these datasets.

\begin{table}[!t]
\begin{center}
\caption{Summary of the  datasets: $m$---\# of
training samples; $n$---\# of test samples; $p$---\# of features;}
\label{tab:data}
\begin{tabular}{|l|cccccccc|} \hline
    & {\tt Housing} & {\tt CCS} & {\tt CMC} &  {\tt Abalone}  &  {\tt Sat} & {\tt  SARCOS} & {\tt YPMSD1} & {\tt YPMSD2}  \\
\hline
$m$   & $455$        & $700$   & $1,000$   & $3,133$ & $4,435$ & $5,000$ &60,000 & $100,000$ \\
$n$   & $51$         & $330$   & $473$    & $1,044$ & $2,000$ & $4,449$  &455,345  & 415,3455  \\
\hline
$p$   & $13$         & $8$     & $9$      & $8$      & $36$   & $21$ & 90 & $90$  \\
\hline
\end{tabular}
\end{center}
\end{table}

We evaluated the
performance of predictions using the standardized mean squared error (SMSE)~\citep{RassmussenWilliams}.
We set the rank  of the  matrix $\A$ in the ridge approximation, the columns of the matrix $\L$($\K \approx \L\L^T$)
in the incomplete Cholesky decomposition,
and the columns  uniformly sampled from the original kernel matrix in the Nystr\"{o}m method to the same number $q$.
We then compared the performance of the three methods.

For the Nystr\"{o}m method, for each given $q$,
we repeated the experiment 50 times. We found that the results are very sensitive to the columns randomly selected.
The method works well in a few instances, but in
most cases its performance is extremely poor. Thus, given $q$, we reported the smallest SMSE for the Nystr\"{o}m method.

Figure~\ref{fig:GR:subfig} shows  SMSE values over the first six datasets. It should be worth pointing out that
the performance of the Nystr\"{o}m method is very poor on the Sat  and CCS datasets.
Thus, we omitted the SMSE values  on the two datasets for the  Nystr\"{o}m method. Also,
when $q$ is less than 4396 for the Sat dataset, the performance of
the incomplete Cholesky decomposition method  is poor. For this reason,
we also omitted the SMSE values for the incomplete Cholesky decomposition method on the Sat dataset.

\begin{figure} %[!ht]
\vspace{-0.1in}
\centering
\subfigure[Housing Data]{
\label{fig:GR:subfig:e}
\begin{minipage}[b]{0.45\textwidth}
\centering
\includegraphics[width=65mm,height=56mm]{./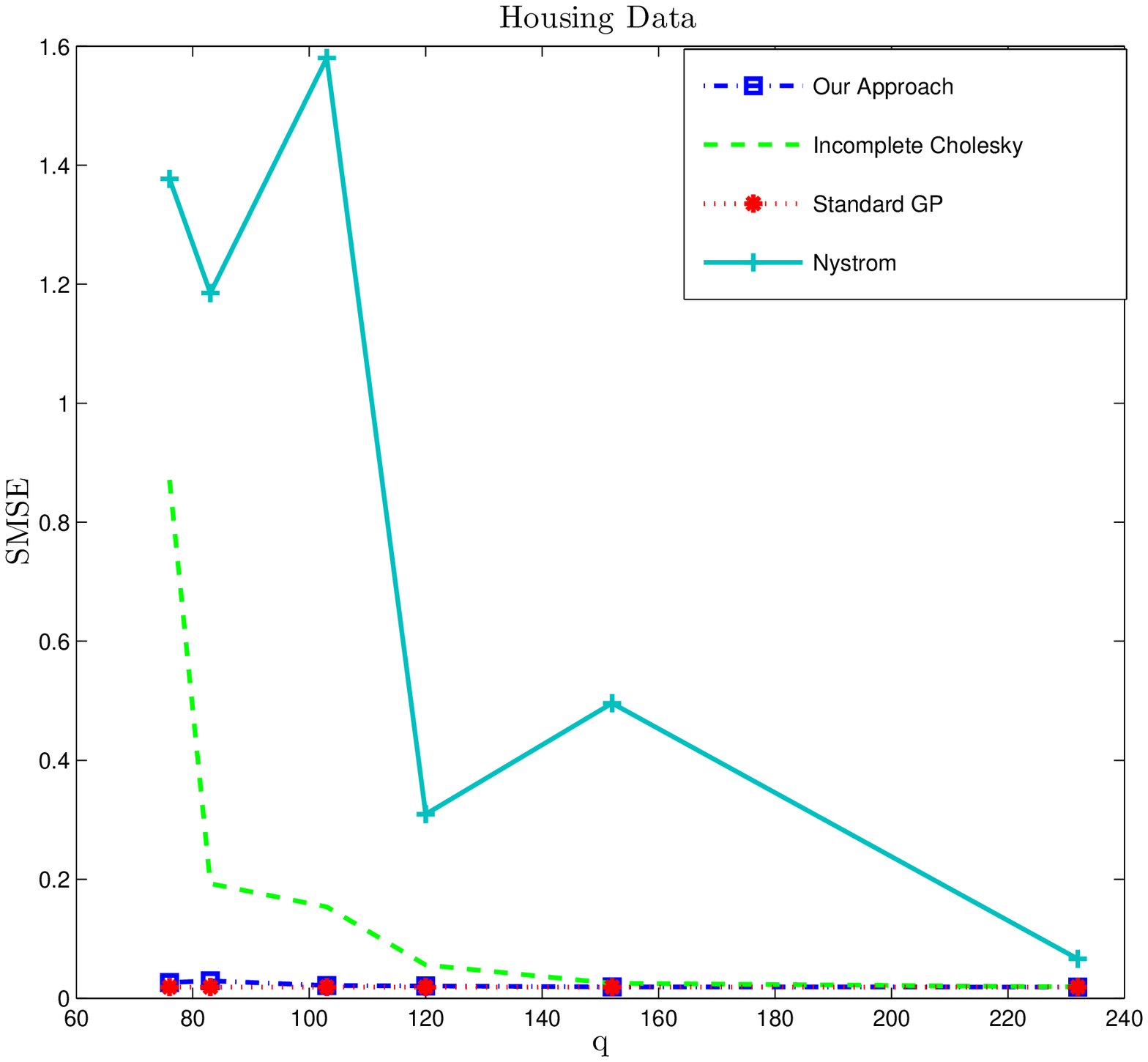}
\end{minipage}
}
\subfigure[Concrete Compressive Strength Data]{
\label{fig:GR:subfig:c}
\begin{minipage}[b]{0.45\textwidth}
\centering
\includegraphics[width=65mm,height=56mm]{./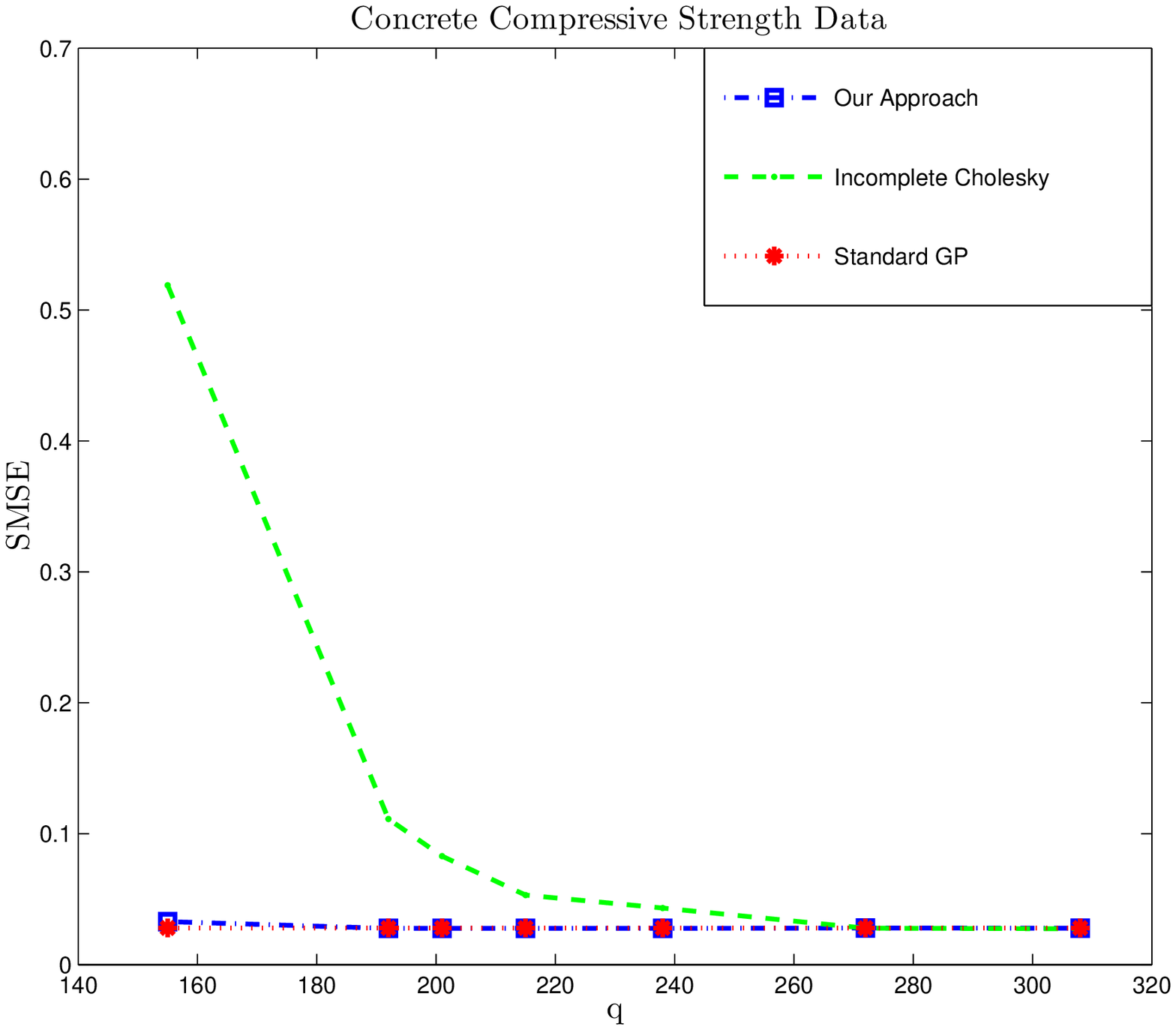}
\end{minipage}
}
\subfigure[Contraceptive Method Choice Data]{
\label{fig:GR:subfig:d}
\begin{minipage}[b]{0.45\textwidth}
\centering
\includegraphics[width=65mm,height=56mm]{./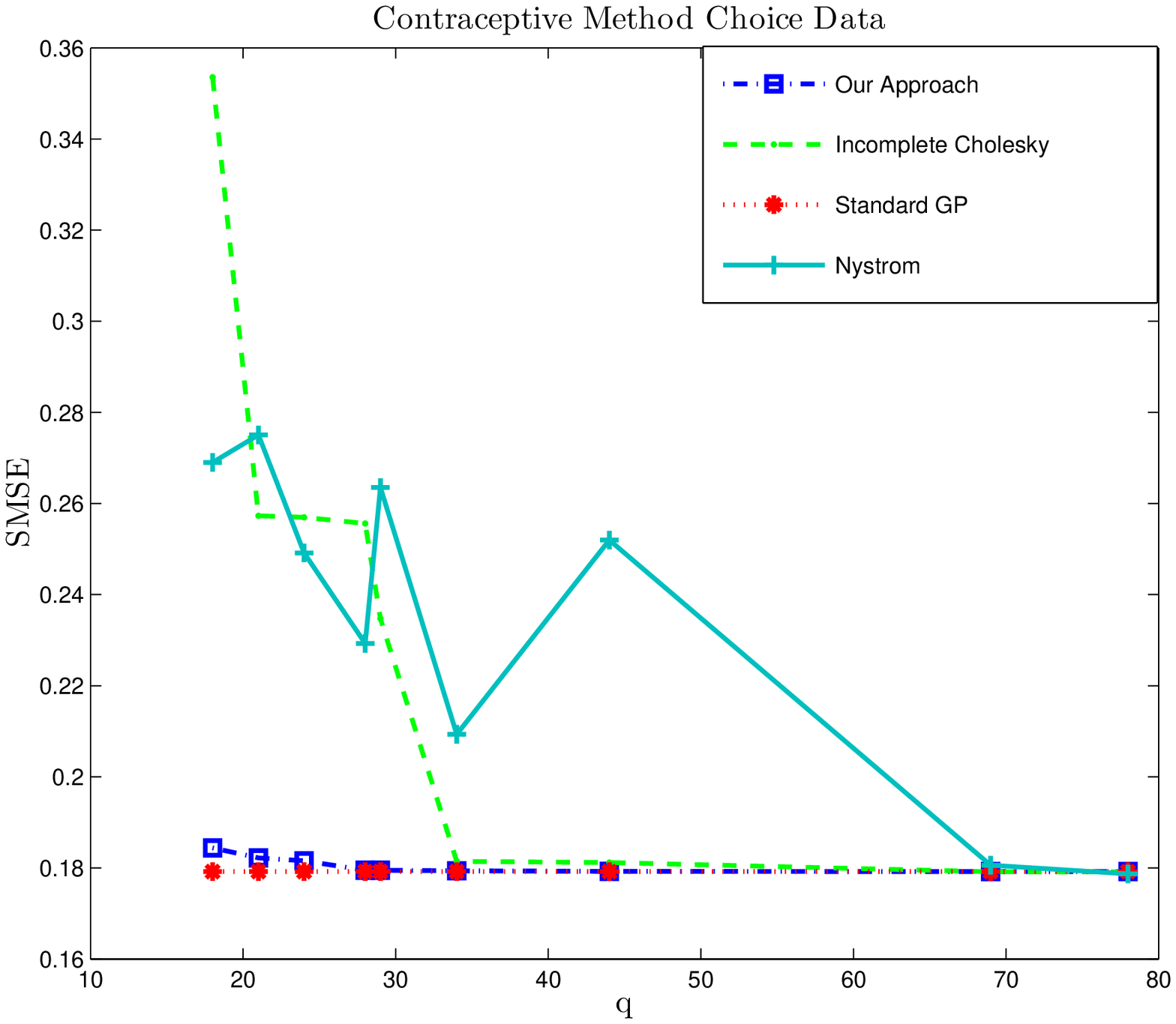}
\end{minipage}
}
\subfigure[Abalone Data]{
\label{fig:GR:subfig:a}
\begin{minipage}[b]{0.45\textwidth}
\centering
\includegraphics[width=65mm,height=56mm]{./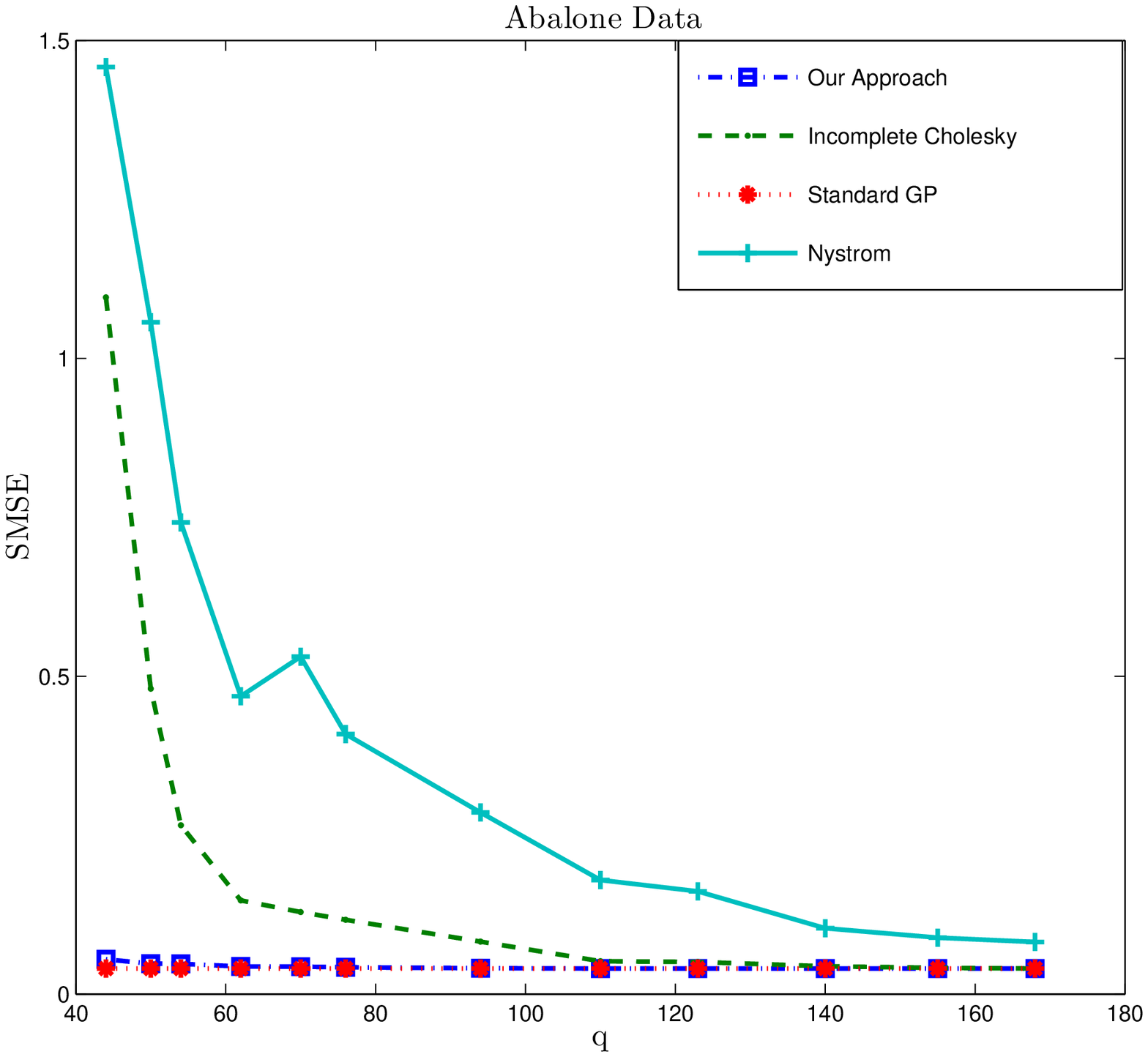}
\end{minipage}
}
\subfigure[Landsat Satellite Data]{
\label{fig:GR:subfig:b}
\begin{minipage}[b]{0.45\textwidth}
\centering
\includegraphics[width=65mm,height=56mm]{./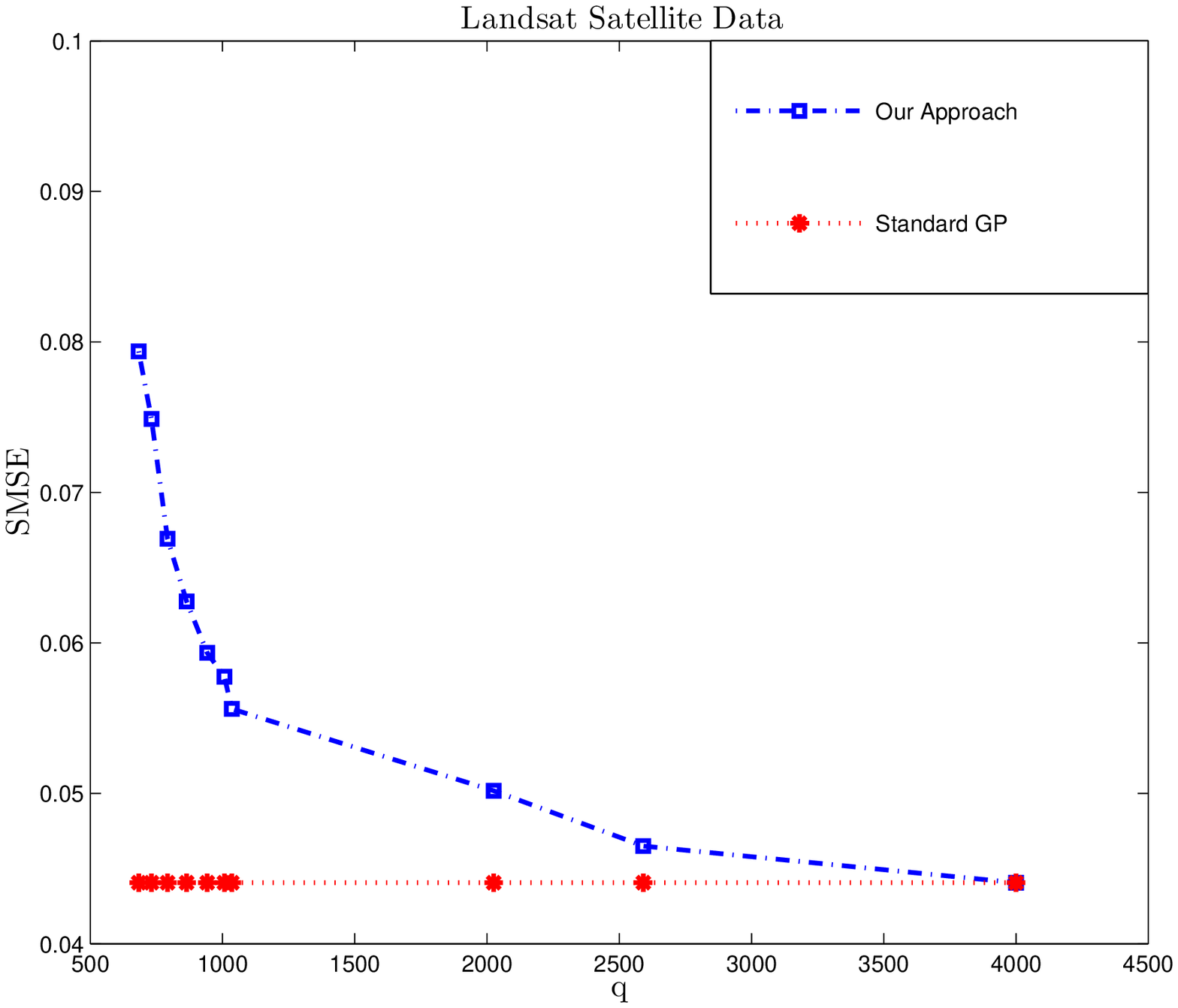}
\end{minipage}
}
\subfigure[SARCOS Data]{
\label{fig:GR:subfig:f}
\begin{minipage}[b]{0.45\textwidth}
\centering
\includegraphics[width=65mm,height=56mm]{./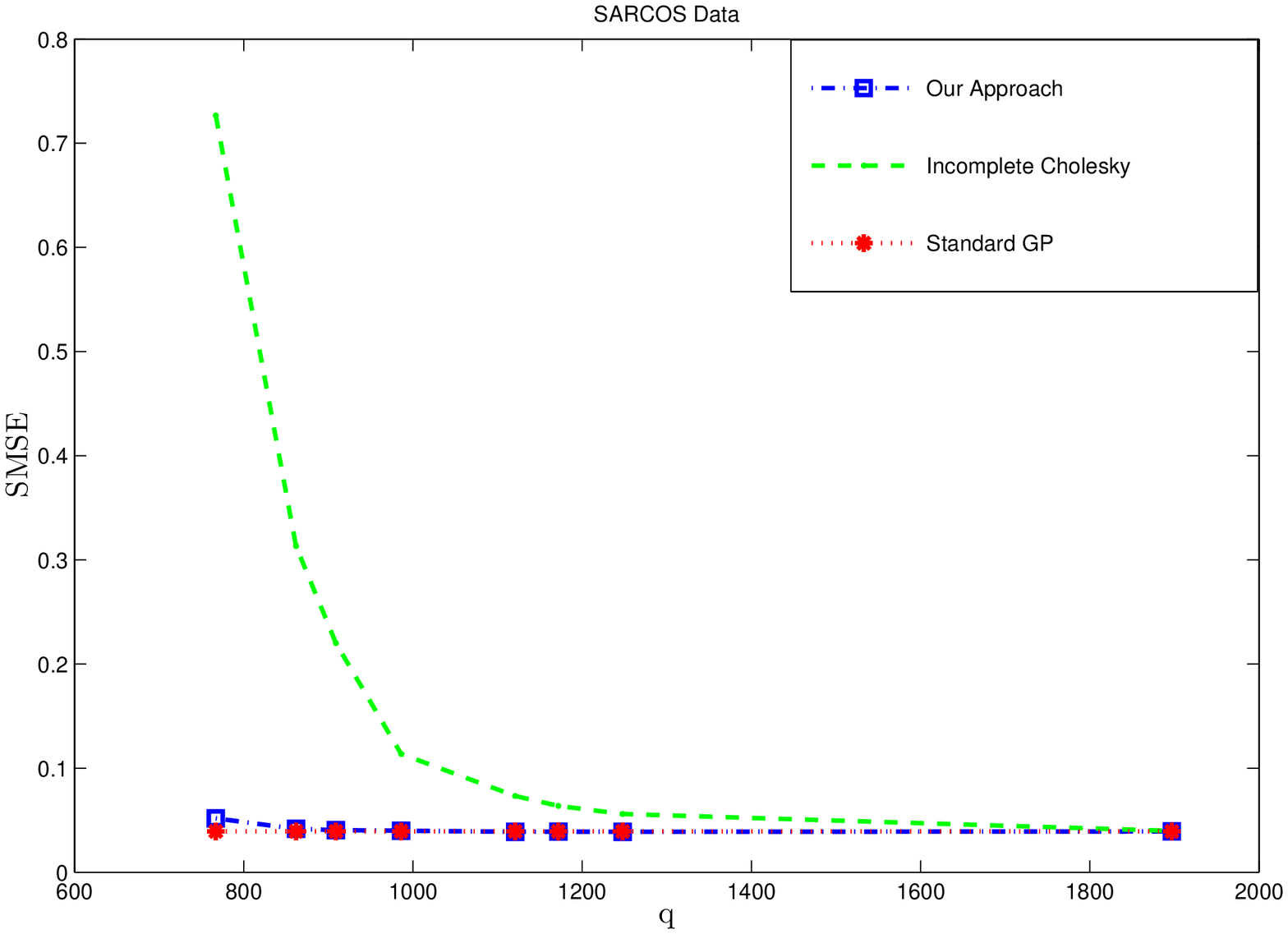}
\end{minipage}
}
\caption{Comparisons of  the ridge approximation method, the
Nystr\"{o}m method and the incomplete Cholesky decomposition method. }
\label{fig:GR:subfig}
\end{figure}

From Figure~\ref{fig:GR:subfig}, we see that the performance of the ridge approximation method
is nearly the same as that of standard GPR.
Moreover, the ridge approximation is not sensitive to the value of $q$.
For a wide range of $q$,  the GPR prediction varies very little. When $q$ takes a small value, the ridge approximation still works well.
Contrarily,  when $q$ is small, the incomplete Cholesky decomposition is not very
effective, because it results in an underfitting problem in which
$\L \L^T {+} \sigma^2 \I_{m}$
is ill-conditioned.
However, the ridge approximation can avoid this
problem, because it makes $\A \A^T {+} \delta \I_{m}$ better-conditioned  than $\K$ itself (see the discussion in Section~\ref{sec:exp1}).

For the {\tt YPMSD} data,  we did not include the results with the
Nystr\"{o}m method and the incomplete Cholesky decomposition method because the performance of these two methods is very poor when $q\ll m$ (e.g., $q\leq \sqrt{m}$).
We only took $q=245$ for {\tt YPMSD1} and $q=316$ for {\tt YPMSD2} ($\approx \sqrt{m}$) to implement the ridge approximation method.
Since the size of the matrix $\K + \sigma^2 \I_m$ is too large, we partitioned it into the $2 {\times}2$
block submatrices in the direct computation of $(\K + \sigma^2 \I_m)^{-1}$. Although this does not reduce the computational cost,  it can make the  computation more numerically stable.  To reduce the storage space of  data,
all computations were carried out with  single precision in Matlab.
However, we still could not complete  the experiment with the direct computation method on the
{\tt YPMSD2} dataset due to  limited  storage space.

The SMSE values with the direct method and the ridge approximation
method for computing $(\K + \sigma^2 \I_m)^{-1}$ on {\tt YPMSD1}
are $3.7713{\times} 10^{-5}$ and $2.9232{\times} 10^{-5}$, respectively.
We see that the ridge approximation slightly outperforms the direct computation. We hypothesize that this phenomenon
is a result of  roundoff error in the floating point computations.
The SMSE value with the ridge approximation on {\tt YPMSD2} is  $2.8959{\times} 10^{-5}$.
Therefore, the ridge approximation method is effective.

Finally, in Table~\ref{tab:cputime2} we  report the running times with our matrix ridge approximation
and the direct calculation for $(\K+\sigma^2 \I_m)^{-1}$ on the  datasets.
The reported results with our method are based on that $q$ is taken as the integer closest to $\sqrt{m}$.
We see that our method is able to reduce computation when $m$ is vary large. For example, on the {\tt YPMSD1} dataset the direct computation  took
$1.2436 {\times} 10^5$ seconds, while the ridge approximation took $5.814 {\times} 10^{3}$ seconds.
In summary,  our proposed  approach is efficient and effective.

\begin{table}[!ht]
\begin{center}
\caption{CPU times (s) of running the test procedure of GPR with
the direct computation and  EM-based ridge approximation (RA) which are performed in Matlab on a Workstation
with a 3.07 GHz CPU and 24 GB of RAM.} \label{tab:cputime2}
\begin{tabular}{|l|cccccccc|} \hline
    &  {\tt Housing} &  {\tt CCS} & {\tt CMC} & {\tt Abalone}  &   {\tt Sat}   & {\tt  SARCOS} & {\tt  YPMSD1}  & {\tt  YPMSD2}  \\
    \hline
{\tt Direct} & $0.7020$  & $0.5460$  &    $0.7644$       & $33.8210$   & $105.0823$ & $145.8141$ & $1.2436 {\times} 10^5$ & NA \\
\hline
{\tt EM-RA} &  $0.1872$ & $0.530$ & $ 0.9048$  & $9.8437$  & $18.7981$ & $20.8729$  & $5.814 {\times} 10^{3}$ & $1.0664 {\times} 10^{5}$ \\
\hline
\end{tabular}
\end{center}
\end{table}

%%%%%%%%%%%%%%%%%%%%%%%%%%%%%%%%%%%%%%%%%%%%%%%%%%%%%%%%%%%%%%%%%%%%%%%%%%%%%%%%%%%%
%%%%%%%%%%%%%%%%%%%%%%%%%%%%%%%%%%%%%%%%%%%%%%%%%%%%%%%%%%%%%%%%%%%%%%%%%%%%%%%%%%%%
\section{Conclusion} \label{sec:conclusion}

In this paper we have proposed the matrix ridge approximation method,
which tries to find an approximation for a symmetric positive
semidefinite matrix. We have also developed probabilistic
formulations for this method. The probabilistic formulation not
only provides a statistical interpretation but also leads us to an
efficient EM iterative procedure for the matrix ridge approximation.
The matrix ridge approximation with the EM iteration has potentially broad
applicability in machine learning problems that involve the
inversion or spectral decomposition of a large-scale positive semidefinite matrix.
In particular, we have empirically illustrated the effectiveness
and efficiency of the matrix ridge approximation in the case of spectral clustering and Gaussian process regression.

The support vector machine (SVM) and Gaussian process classification (GPC) are  two classical kernel classification methods.  When applying them to large-scale data sets,
we also meet a computational challenge.
The matrix ridge approximation technique is a potentially useful approach for handling this challenge.
We will study this issue in future work. Recall that each EM iteration for the matrix ridge approximation
takes time $O(m^2 q)$ and it mainly involves matrix multiplications.
To make the method more efficient, we can consider the   parallel implementation of the matrix multiplications.

%\section*{Acknowledgements}%
%
%\noindent The author would like to thank the Editors and three anonymous referees
%for their constructive comments and suggestions  on the original version of this paper.
%This work has been supported in part by the Natural Science Foundations of China (No.
%61070239).

%\newpage
%%%%%%%%%%%%%%%%%%%%%%%%%%%%%%%%%%%%%%%%%%%%%%%%%%%%%%%%%%%%%%%%%%%%%%%%
%%%%%%%%%%%%%%%%%%%%%%%%%%%%%%%%%%%%%%%%%%%%%%%%%%%%%%%%%%%%%%%%%%%%%%%%
\appendix

%%%%%%%%%%%%%%%%%%%%%%%%%%%%%%%%%%%%%%%%%%%%%%%%%%%%%%%%%%%%%%%%%%%%%%%%%%%%%%
%%%%%%%%%%%%%%%%%%%%%%%%%%%%%%%%%%%%%%%%%%%%%%%%%%%%%%%%%%%%%%%%%%%%%%%%%%%%%%
\section{Several Lemmas}

In order to prove the theorems, we first present several lemmas that will be used.

\begin{lemma} \label{lem:4}
Suppose $\B \in \BR^{m{\times}m}$. Let $c_i+\imath d_i$ for $i=1, \ldots, m$ be the eigenvalues of $\B$ where $\imath^2=-1$ and the $c_i, d_i \in \BR$. Then,
\begin{enumerate}
\item[\emph{(i)}] \; $\tr(\B)=\sum_{i=1}^m c_i$ and $\tr(\B^2)=\sum_{i=1}^m (c^2_i- d_i^2)$ .
\item[\emph{(ii)}] \; $\sum_{i=1}^m c_i^2 + d_i^2 \leq \tr(\B\B^T)$, $\sum_{i=1}^m c_i^2 \leq \frac{1}{2}\tr(\B \B {+} \B\B^T)$, and
$\sum_{i=1}^m d_i^2 \leq \frac{1}{2}\tr(\B\B^T{-}\B\B)$.
\end{enumerate}
\end{lemma}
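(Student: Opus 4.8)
The plan is to reduce both parts to the complex Schur decomposition of $\B$. First I would write $\B = \mathbf{Q}\mathbf{R}\mathbf{Q}^*$ with $\mathbf{Q}$ unitary and $\mathbf{R}$ upper triangular whose diagonal entries are precisely the eigenvalues $r_{ii} = c_i + \imath d_i$ (listed in some order). For part (i), since the trace is invariant under similarity, $\tr(\B) = \tr(\mathbf{R}) = \sum_{i=1}^m (c_i + \imath d_i)$; because $\B$ has real entries its trace is real, so $\sum_i d_i = 0$ and hence $\tr(\B) = \sum_i c_i$. Likewise $\B^2 = \mathbf{Q}\mathbf{R}^2\mathbf{Q}^*$ with $\mathbf{R}^2$ upper triangular and diagonal $(c_i + \imath d_i)^2 = (c_i^2 - d_i^2) + 2\imath c_i d_i$, so $\tr(\B^2) = \sum_i (c_i^2 - d_i^2) + 2\imath \sum_i c_i d_i$; reality of $\tr(\B^2)$ again kills the imaginary part, leaving $\tr(\B^2) = \sum_i (c_i^2 - d_i^2)$.

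For part (ii) I would use the same factorization together with unitary invariance of the Frobenius norm. Since $\B$ is real, $\B^T = \B^*$, hence $\tr(\B\B^T) = \|\B\|_F^2 = \|\mathbf{R}\|_F^2$. Dropping the strictly upper triangular entries of $\mathbf{R}$ gives $\|\mathbf{R}\|_F^2 \ge \sum_i |r_{ii}|^2 = \sum_i (c_i^2 + d_i^2)$, which is the first inequality. Then I would simply add and subtract the identity $\tr(\B\B) = \sum_i (c_i^2 - d_i^2)$ from part (i): $2\sum_i c_i^2 = \sum_i (c_i^2 + d_i^2) + \sum_i (c_i^2 - d_i^2) \le \tr(\B\B^T) + \tr(\B\B)$ and $2\sum_i d_i^2 = \sum_i (c_i^2 + d_i^2) - \sum_i (c_i^2 - d_i^2) \le \tr(\B\B^T) - \tr(\B\B)$, which are exactly the two remaining bounds.

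I do not anticipate a genuine obstacle: the only point needing care is to invoke the complex Schur triangular form (rather than a real block-triangular one), so that the diagonal of $\mathbf{R}$ lists the eigenvalues $c_i + \imath d_i$ directly, and to record that the vanishing of the imaginary parts of $\tr(\B)$ and $\tr(\B^2)$ is forced purely by $\B$ having real entries. Everything else is the routine bookkeeping of discarding off-diagonal terms in a Frobenius-norm estimate and taking linear combinations of the two trace identities.
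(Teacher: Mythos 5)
Your proof is correct and follows essentially the same route as the paper's: the complex Schur form $\B=\Q\T\Q^{*}$ together with the estimate $\|\T\|_F^2\ge\sum_i|t_{ii}|^2$ obtained by discarding the strictly upper-triangular entries. The only difference is that you derive the last two inequalities of (ii) by combining the first inequality with the part-(i) identity $\tr(\B^2)=\sum_i(c_i^2-d_i^2)$, whereas the paper re-runs the Frobenius argument on $\T\pm\T^{*}$; your shortcut is slightly cleaner and sidesteps a sign slip in the paper's display for the $\sum_i d_i^2$ bound.
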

\begin{proof} It is obvious that $c_i+\imath d_i$ is the eigenvalue of $\B$ iff $c_i-\imath d_i$ is the eigenvalue of $\B$.
Accordingly, we have Part~(i).

In addition, let the Schur factorization of $\B$ be $\B=\Q \T \Q^{*}$ where $\Q$ is unitary and $\T$ is upper-triangular
with the eigenvalues of $\B$ at the diagonals. Thus,
\[
\tr(\B\B^T) = \tr(\T \T^{*}) \geq \sum_{i=1}^m (c_i+\imath d_i) (c_i-\imath d_i) = \sum_{i=1}^m (c_i^2 + d_i^2).
\]
In addition, we also have
\[
\frac{1}{2}\tr(\B\B^T+ \B \B) =\frac{1}{4} \tr((\T +\T^{*})(\T +\T^{*})) \geq \sum_{i=1}^m c_i^2
\]
and
\[
\frac{1}{2}\tr(\B\B^T- \B \B) =\frac{1}{4} \tr((\T - \T^{*})(\T - \T^{*})) \geq \sum_{i=1}^m d_i^2.
\]
The proof completes.
\end{proof}

We now turn to our proposed approach and follow the notations in Table~\ref{tab:nota}. Without loss of generality,
we only consider the case that $\b\neq 0$ and $\b^T\b=1$. In this
case, $\BP=\I_m- \b\b^T$ is idempotent, symmetric and of rank
$m{-}1$. Thus we can express it as $\BP = \Ps \left[
\begin{array}{cc} \I_{m{-}1} & \0 \\ \0 & \0
\end{array} \right] \Ps^T$
where $\Ps^T \Ps = \Ps \Ps^T =\I_m$. Let $\Ps_1$ be an
$m{\times}(m{-}1)$ matrix containing the first $m{-}1$ columns
of~$\Ps$. Then $\Ps=[\Ps_1, \b]$ so that $\Ps_1^T \Ps_1 =\I_{m{-}1}$,
$\Ps_1^T \b =\0$ and $\BP = \Ps_1 \Ps_1^T$.

In order to prove the theorems given in Section~\ref{sec:golub}, we
use the same notation as in Section~\ref{sec:golub}. Moreover, we
here and later denote $\Z= \Ps_1^T \A$ ($(m{-}1){\times}q)$), $\G =
\Ps_1^T\T \Ps_1 = \Ps_1^T \BP \M \BP \Ps_1=\Ps_1^T \M \Ps$
($(m{-}1){\times}(m{-}1)$) and $\Tha= \Ps_1^T \Oma \Ps_1 = \Ps_1^T (\A
\A^T+ \delta \I_m) \Ps_1 = \Z \Z^T+ \delta \I_{m{-}1}$
($(m{-}1){\times}(m{-}1)$). With these notations, we present the
following several lemmas.

\begin{lemma} \label{lem:3} Let $\lambda(\C)$ be the set of the all eigenvalues
of $\C$. Then $\lambda(\T)= \lambda(\G) \cup \{0\}$. Furthermore, if
$\ph$ is the eigenvector of $\G$ associated with its eigenvalue
$\gamma$, then $\Ps_1\ph$ is the eigenvector of $\T$ associated with
its eigenvalue $\gamma$. Conversely, if $\u$ satisfying $\u^T\b=0$ is
the eigenvector of $\T$ associated with its eigenvalue $\gamma$,
then $\Ps_1^T \u$ is the eigenvector
of $\G$ associated with its eigenvalue $\gamma$.
\end{lemma}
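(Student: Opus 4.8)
The plan is to block-diagonalize $\T$ with respect to the orthogonal matrix $\Ps=[\Ps_1,\b]$. First I would record two elementary identities. Since $\BP=\Ps_1\Ps_1^T$ and $\Ps_1^T\Ps_1=\I_{m{-}1}$, we have $\BP\Ps_1=\Ps_1$, hence
\[
\T=\BP\M\BP=\Ps_1\Ps_1^T\M\Ps_1\Ps_1^T=\Ps_1\G\Ps_1^T,
\]
where I also use $\G=\Ps_1^T\T\Ps_1=\Ps_1^T\M\Ps_1$. Since $\b^T\b=1$ we get $\BP\b=\0$, so $\T\b=\BP\M\BP\b=\0$ and, by symmetry, $\b^T\T=\0$.

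Given these, part~(i) is immediate: because $\Ps$ is orthogonal, $\T$ is orthogonally similar to
\[
\Ps^T\T\Ps=\begin{bmatrix}\Ps_1^T\T\Ps_1 & \Ps_1^T\T\b \\ \b^T\T\Ps_1 & \b^T\T\b\end{bmatrix}=\begin{bmatrix}\G & \0 \\ \0 & 0\end{bmatrix},
\]
whose spectrum is $\lambda(\G)\cup\{0\}$; hence $\lambda(\T)=\lambda(\G)\cup\{0\}$. For the forward eigenvector direction, if $\G\ph=\gamma\ph$ with $\ph\neq\0$, then from $\T=\Ps_1\G\Ps_1^T$ and $\Ps_1^T\Ps_1=\I_{m{-}1}$ I compute $\T(\Ps_1\ph)=\Ps_1\G\Ps_1^T\Ps_1\ph=\Ps_1\G\ph=\gamma(\Ps_1\ph)$, and $\Ps_1\ph\neq\0$ since $\Ps_1$ has full column rank; so $\Ps_1\ph$ is an eigenvector of $\T$ for $\gamma$.

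For the converse, suppose $\T\u=\gamma\u$, $\u\neq\0$, and $\u^T\b=0$. From $\Ps\Ps^T=\I_m$ and $\b^T\u=0$ I get $\u=\Ps_1(\Ps_1^T\u)+\b(\b^T\u)=\Ps_1\w$ with $\w:=\Ps_1^T\u$, and $\w\neq\0$ (otherwise $\u=\0$). Then $\Ps_1\G\w=\T\Ps_1\w=\T\u=\gamma\u=\gamma\Ps_1\w$, and left-multiplying by $\Ps_1^T$ gives $\G\w=\gamma\w$; thus $\Ps_1^T\u$ is an eigenvector of $\G$ for $\gamma$.

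I do not expect a genuine obstacle here; the whole lemma follows from the factorization $\T=\Ps_1\G\Ps_1^T$ and $\T\b=\0$. The only points needing care are the nondegeneracy claims (injectivity of $\Ps_1$ ensures the constructed vectors are nonzero) and noting why the hypothesis $\u^T\b=0$ is indispensable in the converse: the zero eigenspace of $\T$ contains the extra vector $\b$, which lies in the kernel of $\Ps_1^T$ and has no counterpart in $\G$.
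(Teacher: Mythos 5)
Your proof is correct and follows essentially the same route as the paper's: both rest on the block-diagonalization $\Ps^T\T\Ps=\left[\begin{smallmatrix}\G & \0\\ \0 & 0\end{smallmatrix}\right]$ (equivalently $\T=\Ps_1\G\Ps_1^T$) and the identity $\Ps_1\Ps_1^T\u=\BP\u=\u$ when $\u^T\b=0$. Your version is in fact slightly more careful, since you explicitly verify that the constructed eigenvectors are nonzero and explain why the hypothesis $\u^T\b=0$ is indispensable.
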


\begin{proof}
Recall that
\[
\T = \BP \T \BP = \Ps \left[
\begin{array}{cc} \Ps_1^T \T \Ps_1 & \0 \\ \0 & 0
\end{array} \right] \Ps^T = \Ps \left[
\begin{array}{cc} \G & \0 \\ \0 & 0
\end{array} \right] \Ps^T.
\]
Thus, $\lambda(\T)= \lambda(\G) \cup \{0\}$. Letting $\G \ph =
\gamma \ph$, we have
\begin{eqnarray*}
\BS \Ps_1 \ph &=& \Ps \left[
\begin{array}{cc} \G  & \0 \\ \0 & 0
\end{array} \right] \Ps^T \Ps_1  \ph = \Ps \left[
\begin{array}{cc} \G  & \0 \\ \0 & 0
\end{array} \right] \left[ \begin{array}{c} \I_{m{-}1} \\ 0
\end{array} \right]  \ph  \\
& =& \Ps \left[ \begin{array}{c} \G    \\ 0
\end{array} \right]  \ph = \Ps \left[ \begin{array}{c} \I_{m{-}1} \\ 0
\end{array} \right]  \G  \ph = \Ps_1 \G \ph = \gamma  \Ps_1 \ph,
\end{eqnarray*}
which shows that $\Ps_1 \ph$ is the eigenvectors of $\T$. Also,
since
\[
\G \Ps_1^T \u = \Ps_1^T\T \Ps_1 \Ps_1^T \u = \Ps_1^T\T \u = \gamma
\Ps_1^T \u
\]
$\Ps_1^T \u$ is the eigenvector of $\G$ associated with its
eigenvalue $\gamma$.
\end{proof}

\begin{lemma} \label{lem:2} Assume that $k$ is an arbitrary integer. Then,
\begin{enumerate}
\item[\emph{(i)}] $\Ps_1^T \Oma^{-k} \Ps_1 = \Tha^{-k}$, $\b^T \Oma^{-k} \Ps_1 = \0$ and $\b^T \Oma^{-k} \b = \delta^{-k}$;
\item[\emph{(ii)}] $\tr(\Oma^{-k}) = \tr(\Tha^{-k}) + \delta^{-k}$.
\end{enumerate}
\end{lemma}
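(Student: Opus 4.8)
The plan is to block-diagonalize $\Oma=\A\A^T+\delta\I_m$ by the orthogonal matrix $\Ps=[\Ps_1,\b]$ and then read off the four blocks. The key input is the constraint $\A^T\b=\0$, equivalently $\b^T\A=\0$: it means that $\Ps^T\A$ stacks $\Z=\Ps_1^T\A$ on top of the zero block $\b^T\A=\0$.

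First I would compute, using $\Ps^T\Ps=\I_m$,
\[
\Ps^T\Oma\,\Ps=\Ps^T(\A\A^T+\delta\I_m)\Ps=\begin{bmatrix}\Z\Z^T & \0\\ \0 & 0\end{bmatrix}+\delta\I_m=\begin{bmatrix}\Tha & \0\\ \0 & \delta\end{bmatrix},
\]
with $\Tha=\Z\Z^T+\delta\I_{m-1}$ as in the notation set up just before the lemma. Since $\delta>0$, $\Oma$ is positive definite, hence so is $\Tha$; in particular $\Tha$ and the scalar $\delta$ are both invertible, and the block matrix on the right has inverse $\diag(\Tha^{-1},\delta^{-1})$.

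Because $\Ps$ is orthogonal, conjugation by $\Ps$ commutes with integer powers, so $\Oma^{-k}=\Ps\,\diag(\Tha^{-k},\delta^{-k})\,\Ps^T$ for every integer $k$ (telescoping $\Ps^T\Ps=\I_m$ for $k\le 0$, and using the invertibility above for $k>0$). Consequently
\[
\begin{bmatrix}\Ps_1^T\Oma^{-k}\Ps_1 & \Ps_1^T\Oma^{-k}\b\\ \b^T\Oma^{-k}\Ps_1 & \b^T\Oma^{-k}\b\end{bmatrix}=\Ps^T\Oma^{-k}\Ps=\begin{bmatrix}\Tha^{-k} & \0\\ \0 & \delta^{-k}\end{bmatrix},
\]
and matching the blocks proves part~(i). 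For part~(ii), take traces: $\tr(\Oma^{-k})=\tr(\Ps^T\Oma^{-k}\Ps)=\tr(\Tha^{-k})+\delta^{-k}$.

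Everything here is a routine computation; the only point requiring a little care is that the claim is asserted for an \emph{arbitrary} integer $k$, so negative powers (i.e.\ actual powers of $\Oma$) must be handled too, which is exactly where the hypothesis $\delta>0$ — forcing invertibility of $\Oma$ and hence of $\Tha$ — enters. I anticipate no genuine obstacle.
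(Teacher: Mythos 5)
Your proof is correct, and it is cleaner than the one in the paper while resting on the same underlying decomposition. The paper establishes the base case $\Ps_1^T\Oma^{-1}\Ps_1=\Tha^{-1}$ by an explicit Sherman--Morrison--Woodbury computation (expanding $(\A\A^T+\delta\I_m)^{-1}$ and using $\A^T\A=\Z^T\Z$), and then extends to general $k$ by induction, repeatedly inserting $\I_m=\Ps_1\Ps_1^T+\b\b^T$ and invoking $\b^T\Oma^{-1}\Ps_1=\0$. You instead observe directly that $\Ps^T\Oma\Ps=\diag(\Tha,\delta)$ — which is immediate from $\A^T\b=\0$ — and then use the fact that conjugation by the orthogonal matrix $\Ps$ commutes with arbitrary integer powers, so all four block identities and the trace identity fall out of $\Ps^T\Oma^{-k}\Ps=\diag(\Tha^{-k},\delta^{-k})$ in one step. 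This avoids both the Woodbury identity and the induction, and it handles positive and negative $k$ uniformly; the paper's route, by contrast, makes the $k=1$ case concrete (which it reuses elsewhere, e.g.\ in the Hessian computations) at the cost of a longer argument. Your remark that $\delta>0$ is what guarantees invertibility of $\Oma$ and $\Tha$, and hence that the negative powers are even defined, is exactly the right point of care.
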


\begin{proof}
As for (i), we first have
\begin{align*}
\Ps_1^T \Oma^{-1} \Ps_1 &= \Ps_1^T (\A \A^T + \delta \I_m)^{-1} \Ps_1 =
\Ps_1^T(\delta^{-1}\I_m - \delta^{-1} \A(\delta\I_q + \A^T \A)^{-1}
\A^T) \Ps_1 \\
& = \delta^{-1}\I_{m{-}1} - \delta^{-1} \Z (\delta\I_q + \Z^T
\Z)^{-1} \Z = (\delta \I_{m{-}1} + \Z \Z)^{-1} = \Tha^{-1}
\end{align*}
due to $\A^T\A= \A^T\BP \A= \A^T\Ps_1 \Ps_1^T \A = \Z^T\Z$. Assume that
$\Ps_1^T \Oma^{1-l} \Ps_1 = \Tha^{1-l}$ for some positive integer
$l$. Then
\[
\Ps_1^T \Oma^{-l} \Ps_1 = \Ps_1^T \Oma^{1-l} (\Ps_1 \Ps_1^T + \b \b^T)
\Oma^{-1} \Ps_1 = \Tha^{-l} + \Ps_1^T \Oma^{1-l} \b \b^T \Oma^{-1}
\Ps_1 = \Tha^{-l}
\]
due to $\b^T \Oma^{-1} \Ps_1=\0$. Thus, we obtain $\Ps^T \Oma^{-k}
\Ps_1 = \Tha^{-k}$ by the induction. Similarly, we $\b^T \Oma^{-k}
\Ps_1 = \0$ and $\b^T \Oma^{-k} \b = \delta^{-k}$.

Finally, it follows from (i) that
\[
\tr(\Oma^{-k}) = \tr\left(\left[\begin{array}{c} \Ps_1^T \\ \b^T
\end{array} \right] \Oma^{-k1} [\Ps_1, \b] \right) = \tr(\Tha^{-k})
+ \delta^{-k}.
\]
\end{proof}

%%%%%%%%%%%%%%%%%%%%%%%%%%%%%%%%%%%%%%%%%%%%%%%%%%%%%%%%%%%%%%%%%%%%%%%%%%%%%%
%%%%%%%%%%%%%%%%%%%%%%%%%%%%%%%%%%%%%%%%%%%%%%%%%%%%%%%%%%%%%%%%%%%%%%%%%%%%%%
\section{Proof for Theorem~\ref{thm:lse2}} \label{ap:a}

In order to prove Theorem~\ref{thm:lse2},  we present a more general alternative  which is based on
two  variants of $F$ and $G$. In particular, the first variant is
%\begin{equation} \label{eq:f_loss}
\[
F_1(\A, \delta) = \|\M {-} \A \A^T {-} \delta \I_m\|_F^2 =
\tr\big((\M {-} \A \A^T {-} \delta \I_m)^2\big),
\]
%\end{equation}
while the second variant is
%\begin{equation} \label{eq:phi}
\[
G_1(\A, \delta) = \log |\A \A^T + \delta \I_m| + \tr(( \A \A^T +
\delta \I_m)^{-1} \M).
\]
Obviously,  $F_1$ and $F$ (or $G_1$ and $G$) become identical
when $\b=\0$.
The minimizers of $F_1$ as well as $G_1$ are given in the following
theorem.

\begin{theorem} \label{thm:lse}
Let $\gamma_1\geq \cdots \geq \gamma_q \geq \cdots \geq \gamma_{m}$
\emph{($\geq 0$)} be the eigenvalues of $\T = \BP \M \BP$, $\V$ be
an arbitrary $q{\times}q$ orthogonal matrix, ${\Gam_q}$ be a
$q{\times}q$ diagonal matrix containing the first $q$ principal
(largest) eigenvalues $\gamma_i$, and ${{\U_q}}$ be an $n{\times}q$
column-orthonormal matrix in which the $q$ column vectors are the principal
eigenvectors corresponding to ${\Gam_q}$. Assume that $\delta>0$ and
that $\A \in \BR^{m{\times}q}$ \emph{($q< \min(m, p)$)} is of full
column rank and satisfies $\A^T\b=\0$. If the following conditions are satisfied
\begin{equation} \label{eq:conds}
\gamma_i > \frac{1}{m{-}q} \Big(\b^T\M\b+\sum_{j=q{+}1}^{m} \gamma_j
\Big), \quad \mbox{ for } i=1, \ldots, q,
\end{equation}
then the strict local minimum of $F_1(\A, \delta)$ and $G_1(\A, \delta)$ w.r.t.\ $(\A, \delta)$ are respectively obtained  when
\[
\widehat{\A} = {\U_q} ({\Gam_q} - \hat{\delta} \I_q)^{1/2} \V \quad
\mbox{and} \quad  \hat{\delta} = \frac{1}{m{-}q} \Big [\b^T \M \b+
\sum_{j=q+1}^{m}\gamma_j \Big].
\]
\end{theorem}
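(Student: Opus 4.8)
The strategy is to reduce the constrained problem on $m{\times}m$ objects to an unconstrained one on $(m{-}1){\times}(m{-}1)$ objects via the orthogonal change of variables $\Z=\Ps_1^T\A$, and then invoke the classical Anderson-type analysis of the two loss functions. The key algebraic observation, which I would establish first, is that the constraint $\A^T\b=\0$ is exactly equivalent to $\A=\Ps_1\Z$ for some $\Z\in\BR^{(m{-}1){\times}q}$ (since $\Ps=[\Ps_1,\b]$ is orthogonal and $\b^T\A=\0$ kills the last coordinate). Under this substitution, using Lemma~\ref{lem:2}(ii) and the identities $\Ps_1^T\M\Ps_1=\G$, $\Ps_1^T\Oma^{-1}\Ps_1=\Tha^{-1}$, $\b^T\Oma^{-1}\b=\delta^{-1}$, and $|\Oma|=\delta\,|\Tha|$, I would rewrite
\[
G_1(\A,\delta)=\log|\Tha|+\log\delta+\tr(\Tha^{-1}\G)+\delta^{-1}\b^T\M\b,
\]
and similarly
\[
F_1(\A,\delta)=\tr\big((\G-\Z\Z^T-\delta\I_{m{-}1})^2\big)+(\b^T\M\b-\delta)^2+2\tr(\dots),
\]
where the cross terms collapse because $\Ps_1^T\b=\0$; the upshot is that, for fixed $\delta$, optimizing over $\A$ subject to the constraint is the \emph{unconstrained} optimization over $\Z$ of a function of $(\Z,\delta)$ that has precisely the form appearing in probabilistic PCA / factor analysis, but on the matrix $\G$ of size $m{-}1$.

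With this reduction in hand, the second step is to apply the known critical-point analysis for these loss functions (the Tipping--Bishop / Anderson result): the stationary points of $G_1$ in $\Z$ are $\widehat{\Z}=\U_q^{\G}(\Lam_q-\delta\I_q)^{1/2}\V$ where $\U_q^{\G}$ carries $q$ eigenvectors of $\G$ with eigenvalues $\Lam_q$, and among these the strict local minimizer selects the $q$ \emph{largest} eigenvalues, provided $\delta$ lies below the $q$-th largest eigenvalue of $\G$. Then I substitute back and optimize the now-one-dimensional function of $\delta$; the stationarity condition together with $\tr(\G)+\b^T\M\b=\tr(\M)=\sum_i\gamma_i(\T)$ (note $\T=\BP\M\BP$ has $\b^T\M\b$ split off only via $\BP$, so I must be careful here — more precisely $\lambda(\T)=\lambda(\G)\cup\{0\}$ by Lemma~\ref{lem:3}, and $\b^T\M\b$ is the ``lost'' piece of the trace) yields $\hat\delta=\frac{1}{m{-}q}\big[\b^T\M\b+\sum_{j=q{+}1}^m\gamma_j\big]$. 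Finally I translate $\widehat{\Z}=\Ps_1^T\widehat{\A}$ back: since the nonzero eigenvectors of $\T$ are $\Ps_1$ times those of $\G$ (Lemma~\ref{lem:3}), $\widehat{\A}=\Ps_1\widehat{\Z}=\U_q(\Gam_q-\hat\delta\I_q)^{1/2}\V$, where now $\U_q,\Gam_q$ refer to $\T$ itself, matching the claim. For $F_1$ one checks the same pair $(\widehat{\A},\hat\delta)$ is the strict local minimizer by the analogous computation on $\G$.

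The third step is to verify that the condition~(\ref{eq:conds}) is exactly the hypothesis needed to guarantee (a) $\Gam_q-\hat\delta\I_q$ is positive definite, so $\widehat{\A}$ is real and of full column rank, and (b) the stationary point picking the $q$ largest eigenvalues is the \emph{strict local} minimum rather than a saddle. Concretely, $\hat\delta=\frac{1}{m{-}q}(\b^T\M\b+\sum_{j>q}\gamma_j)$, so $\gamma_i>\hat\delta$ for $i\le q$ is precisely~(\ref{eq:conds}); and the standard second-order / Hessian argument for PPCA shows that any stationary point retaining a non-top eigenvalue is unstable, so strictness follows once the top-$q$ eigenvalues strictly exceed $\hat\delta$ and there is at least one smaller positive $\gamma_j$ (the ``$\gamma_q>\gamma_j>0$'' hypothesis that Theorem~\ref{thm:lse2} extracts).

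The main obstacle I anticipate is the careful bookkeeping in the reduction — in particular making sure that the cross terms in $F_1$ genuinely vanish (they do, because every surviving term pairs something in $\ran(\Ps_1)$ against $\b$), that the determinant and trace split correctly via Lemma~\ref{lem:2}, and that the eigenvalue relabelling between $\T$ and $\G$ (Lemma~\ref{lem:3}) is tracked so that ``first $q$ eigenvalues of $\T$'' and ``first $q$ of $\G$'' coincide. The genuinely non-routine part is the second-order analysis establishing \emph{strictness} of the local minimum, i.e.\ ruling out the other critical points; for this I would either cite the Tipping--Bishop Hessian computation adapted to $\G$, or give a direct perturbation argument showing that swapping a retained small eigenvalue for a discarded larger one strictly decreases the objective.
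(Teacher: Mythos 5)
Your plan is sound, reaches the same critical points, and is a genuinely different route from the paper's. The paper keeps the constraint $\A^T\b=\0$ explicit, introduces a Lagrange multiplier vector $\a$, derives the first-order conditions in the ambient $m$-dimensional coordinates (obtaining $\T\A=\A(\A^T\A+\delta\I_q)$ for $F_1$, and $\G\Tha^{-1}\Z=\Z$ for $G_1$ after passing to $\Z=\Ps_1^T\A$ via Lemma~\ref{lem:2}), and then certifies strict local minimality by an explicit, lengthy computation of the bordered Hessian, bounding the resulting quadratic forms from below with a lemma on eigenvalues of non-symmetric matrices (Lemma~\ref{lem:4}). You instead eliminate the constraint at the outset via the equivalence $\A^T\b=\0\Leftrightarrow\A=\Ps_1\Z$ and reduce everything to an unconstrained problem in $(\Z,\delta)$ on the $(m{-}1){\times}(m{-}1)$ matrix $\G$; this makes the first-order analysis cleaner, makes the appearance of $\b^T\M\b$ in $\hat{\delta}$ transparent, and the translation back through Lemma~\ref{lem:3} is handled correctly (including the bookkeeping that the first $q$ eigenvalues of $\G$ and of $\T$ coincide). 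What your route buys is brevity and reuse of known results for the stationary-point structure; what the paper's route buys is a self-contained second-order certificate.

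The one place your plan is thinner than it needs to be is the strictness claim. After your reduction the objectives are not literally the textbook PPCA/least-squares objectives on $\G$: $G_1$ acquires the extra terms $\log\delta+\delta^{-1}\b^T\M\b$ and $F_1$ the extra term $(\b^T\M\b-\delta)^2$ from the $\b$-block. These shift the stationarity equation for $\delta$ (which you account for), but they also change the $(\delta,\delta)$ and mixed $(\Z,\delta)$ blocks of the Hessian, so the Tipping--Bishop/Anderson second-order classification cannot be cited verbatim and must be redone for the perturbed objective; moreover, a perturbation argument that merely rules out stationary points retaining a non-top eigenvalue does not by itself establish positivity of the Hessian at the retained one. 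This second-order verification is where the paper spends the bulk of its effort, so budget for it. (A further caution relevant to both your argument and the paper's: since $F_1$ and $G_1$ are invariant under $\A\mapsto\A\V'$ for orthogonal $\V'$, the Hessian is necessarily degenerate along that orbit, so ``strict'' can only hold modulo this rotational symmetry.)
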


Note that $\BP\BS\BP=\BS$ and $\b^T\BS\b=0$. Thus, when viewing $\BS$ as $\M$ in Theorem~\ref{thm:lse},
we immediately obtain Theorem~\ref{thm:lse2} from
Theorem~\ref{thm:lse}.

Theorem~\ref{thm:lse}  shows the connection between the
estimates of $\A$ and $\delta$ based on the minimizations of $F_1$
and $G_1$. In particular, the estimates of $\A$ and $\delta$ via
minimizing $F_1$ are equivalent to those of $\A$ and $\delta$ via
minimizing $G_1$.
We note that the minimizer $(\widehat{\A}, \hat{\delta})$ of
$G_1$ under $\b=\0$ was given in
\citet{Magnus:1999}. The conditions in (\ref{eq:conds}) aim to
ensure that $({\Gam_q} - \hat{\delta} \I_q)^{1/2}$ exists and
$\widehat{\A}$ is of full column rank. In the case that $\b=\0$,
$\gamma_q> \gamma_{q{+}1}$ suffices for  the conditions. In fact,
they are always satisfied whenever there is at least one $\gamma_j$ where
$j \in \{q{+}1, \ldots, m\}$ such that $\gamma_q > \gamma_j>0$.
Thus, the conditions in (\ref{eq:conds}) are trivial when $\b=\0$.

However, the conditions are not always satisfied when $\b^T\b=1$. For example,
let
\[
\M=[\b, \Ps_1]\left[\begin{array}{cc} 1+\alpha^2 & \0 \\ \0 &
\alpha^2 \I_{m{-}1}
\end{array} \right] \left[\begin{array}{c} \b^T \\ \Ps_1^T
\end{array} \right]
\]
for $\alpha \neq 0$ such that $\Ps_1^T\b=\0$ and
$\Ps_1^T\Ps_1=\I_{m{-}1}$. It is clear that $\b^T\M \b = 1+\alpha^2$
and $\T=\BP \M \BP= \alpha^2 \BP$. This implies that the eigenvalues
$\gamma_i$ of $\T$ are $\alpha^2$ with multiplicity $m{-}1$ and $0$ with multiplicity 1.
As a result, for any $i\leq q<m$, we always have
\[
\gamma_i=\alpha^2< \alpha^2 + 1/(m-q)= \frac{1}{m{-}q}
\Big(\b^T\M\b+\sum_{j=q{+}1}^{m} \gamma_j \Big).
\]
Thus, the condition in (\ref{eq:conds}) is not satisfied. Consequently, this condition would
limit the use of $F_1$ and $G_1$ in the matrix ridge approximation.
This is the reason why we employ $F$ and $G$ instead of $F_1$ and $G_1$ respectively.

\subsection{Proof for the Minimizer of $F_1(\A, \delta)$ w.r.t.\ $(\A, \delta)$}

Consider the Lagrangian function of
\[
L= \tr(\M - \A \A^T-\delta \I_m)^2 + 4 \b^T \A \a
\]
where $\a$ is a $q{\times}1$ vector of Lagrangian multipliers. We now compute
\begin{align*}
d L % & = -2 \tr((\M - \A \A^T-\delta \I_m) (d \A^T \A)) + \b^T (d \A) \a \\
    & = - 2 \tr\big[(\M - \A \A^T-\delta \I_m) ((d \A) \A^T +  \A (d \A^T))\big] + 4 \b^T (d \A) \a \\
    & = - 4 \tr(\A^T (\M - \A \A^T-\delta \I_m) (d \A)) + 4 \b^T (d \A) \a,  \\
d L & = -2 \tr(\M - \A \A^T-\delta \I_m)  d \delta.
\end{align*}
Using the first-order condition, we obtain
\begin{align*}
  - \A^T (\M - \A \A^T-\delta \I_m) + \a \b^T  &= \0,  \\
 \tr(\M - \A \A^T-\delta \I_m)  &= 0.
\end{align*}
Postmultiplying the above first equation by $\b$, we obtain $\a =
\A^T \M \b$ because of $\A^T \b=\0$. As a result, we have
\[
\T \A =  \A (\A^T \A + \delta \I_q),
\]
where $\T=\BP \M \BP$. Assume  the spectral decomposition of $\A^T\A$
as $\A^T\A = \V \Lam \V^T$. Hence,
\[
\T \A \V \Lam^{-\frac{1}{2}} =  \A \V  \Lam^{-\frac{1}{2}} (\Lam +
\delta \I_q).
\]
This implies that the diagonal elements of $\Lam + \delta \I_q$ are
the $q$ eigenvalues of $\T$, and  $\A \V \Lam^{-\frac{1}{2}}$ is a
corresponding matrix of orthonormal eigenvectors. This motivates us
to define $\Gam_q= \Lam + \delta \I_q$ and $\U_q= \A \V
\Lam^{-\frac{1}{2}}$. That is, $\Lam = \Gam_q - \delta \I_q$ and
$\widehat{\A}= \U_q \Lam^{\frac{1}{2}} \V^T$.

On the other hand, we have
\[
\tr(\M) = \tr(\Ps \M \Ps) = \tr(\Ps_1^T \M \Ps_1) + \b^T \M \b
\] and
$ \tr(\T) = \tr(\BP \M \BP) = \tr(\M \BP) = \tr(\M \Ps_1 \Ps_1^T) =
\tr(\Ps_1^T \M \Ps_1)$. It then follows from $\tr(\M - \A \A^T-\delta
\I_m) = 0$ that
\[
m \delta + \sum_{i=1}^q \gamma_i - q \delta = \sum_{i=1}^m \gamma_i
+  \b^T\M \b.
\]
Thus we let $\hat{\delta} = \frac{1}{m{-}q} \big(\sum_{i=q{+}1}^m
\gamma_i +  \b^T \M \b \big)$. Condition~\ref{eq:conds} shows that
${\Lam}^{\frac{1}{2}} = (\Gam_q - \hat{\delta}
\I_q)^{\frac{1}{2}}$ exists and $\widehat{\A}= \U_q
{\Lam}^{\frac{1}{2}} \V^T$ is of full column rank.

To verify that $(\widehat{\A}, \hat{\delta})$ is a minimizer of
$F_1(\A, \delta)$, we compute the Hessian matrix of $L$ w.r.t.\
to $(\A, \W)$. Let $\vecd(\A)=(y_{11}, \ldots, y_{m1}, y_{12},
\ldots, y_{m q})^T$. The Hessian matrix is then given by
\begin{align*}
H  (\A, \delta) & \triangleq \left[\begin{array}{cc} \frac{\partial^2 L}
{{\partial \vecd(\A)} {\partial \vecd(\A)}^T} &  \frac{\partial^2
L} {{\partial \vecd(\A)} {\partial \delta}} \\
\frac{\partial^2 L} {{\partial \delta} {\partial \vecd(\A)}^T} &
\frac{\partial^2 L} {{\partial \delta^2} }
\end{array} \right] \\
& = 4 \left[\begin{array}{cc} \I_q {\otimes} (\delta \I_m + \A \A^T
{-}\M)  +  \A^T\A {\otimes} \I_m + (\A^T{\otimes}\A) \C_{mq} & \vecd(\A) \\
\vecd(\A)^T & \frac{m}{2}
\end{array} \right],
\end{align*}
where $\C_{mq}$ is the $mq{\times} mq$ commutation such that
$\C_{mq} \vecd(\B) = \vecd(\B^T)$ for any $m{\times}q$ matrix $\B$.

Let $\X$ be an arbitrary nonzero $m{\times}q$ matrix such that $\X^T
\b= \0$, and $a$ be a nonzero real number. Hence,
\begin{align*}
\zeta & \triangleq \frac{1}{4}[\vecd(\X)^T, a] H(\widehat{\A}, \hat{\delta})
[\vecd(\X)^T, a]^T \\
& = \tr(\X^T(\hat{\delta}\I_m {+} \widehat{\A} \widehat{\A}^T{-}\M)
\X) + \tr(\X^T \X \widehat{\A}^T\widehat{\A})+ \tr(\X \widehat{\A}^T\X
\widehat{\A}^T) + 2 a \tr(\X \widehat{\A}^T) + \frac{m}{2} a^2 \\
& = \tr(\X^T(\hat{\delta}\I_m {+} \widehat{\A} \widehat{\A}^T{-}\T)
\X) +\tr(\X^T \X \widehat{\A}^T\widehat{\A})+  \tr(\X \widehat{\A}^T\X
\widehat{\A}^T) + 2 a \tr(\X \widehat{\A}^T) + \frac{m}{2} a^2
\end{align*}
due to $\X^T\M\X=\X^T\T\X$.

Let $\T=\U \Gam \U^T$ where $\U=[\U_q, \U_2]$ and $\Gam=\diag(\Gam_q,
\Gam_2)$ such that $\U_2^T \U_q=\0$, $\U_2^T\U_2=\I_{m{-}q}$ and
$\Gam_2=\diag(\gamma_{q{+}1}, \ldots, \gamma_m)$. Thus,
\[
\hat{\delta}\I_m {+} \widehat{\A} \widehat{\A}^T{-}\T = [\U_q, \U_2]
\left[\begin{array}{cc} \0 & \0 \\
\0 & \hat{\delta} \I_{m{-}q} {-} \Gam_2  \end{array} \right]
 \left[\begin{array}{c} \U_q^T \\ \U_2^T \end{array}\right] = \U_2( \hat{\delta} \I_{m{-}q} {-}
 \Gam_2)\U_2^T.
\]
Furthermore, we have $\tr(\X \hat{\A}^T)=\tr(\B_q \Lam^{\frac{1}{2}})$,
\[
\tr(\X^T(\hat{\delta}\I_m {+} \widehat{\A} \widehat{\A}^T{-}\T)\X) =
\tr(\B_2^T (\hat{\delta} \I_{m{-}q} {-} \Gam_2) \B_2),
\]
$\tr(\X \widehat{\A}^T\X \widehat{\A}^T)= \tr(\U_q^T \X \V
\Lam^{\frac{1}{2}} \U_q^T \X \V \Lam^{\frac{1}{2}}) = \tr(\B_q
\Lam^{\frac{1}{2}} \B_q \Lam^{\frac{1}{2}})$ and
\[
\tr(\X^T\X\widehat{\A}^T\widehat{\A})  = \tr(\V^T \X^T \U \U^T \X \V
\Lam) =  \tr(\B^T \B \Lam)  = \tr(\B_q^T \B_q \Lam) + \tr(\B_2^T \B_2
\Lam) \] where $\B_q=\U_q^T\X \V$ ($q{\times}q$),  $\B_2=\U_2^T\X \V$
($(m{-}q){\times}q$), and $\B=\U^T\X \V =[\B_q^T, \B_2^T]^T =[\b_{1},
\ldots, \b_{m}]^T$ ($m{\times}q$). Accordingly, we obtain
\begin{align*}
\zeta
& = \tr(\B_2^T (\hat{\delta} \I_{m{-}q} {-} \Gam_2) \B_2) + \tr(\B_2^T
\B_2 \Lam) + \\
& \quad  \tr(\B_q^T \B_q \Lam) + \tr(\B_q \Lam^{\frac{1}{2}} \B_q
\Lam^{\frac{1}{2}}) + 2 a \tr(\B_q \Lam^{\frac{1}{2}}) + \frac{m}{2} a^2.
\end{align*}
Recall that
$\Lam=\Gam_q{-} \hat{\delta} \I_q$. It is easily verified that  $\tr(\B_2^T (\hat{\delta} \I_{m{-}q}
{-} \Gam_2) \B_2) + \tr(\B_2^T \B_2 \Lam) \geq 0$. In addition, let  the real parts of the eigenvalues
of $\B_q \Lam^{\frac{1}{2}}$ be $\eta_i$ for $i=1, \ldots, q$. It follows from Lemma~\ref{lem:4} that
\begin{eqnarray*}
\lefteqn{\tr(\B_q^T \B_q \Lam) + \tr(\B_q \Lam^{\frac{1}{2}} \B_q
\Lam^{\frac{1}{2}}) + 2 a \tr(\B_q \Lam^{\frac{1}{2}}) + \frac{m}{2} a^2 } \\
&\geq & \sum_{i=1}^q  [ 2 \eta_i^2 + 2a \eta_i] + \frac{m}{2} a^2  = \sum_{i=1}^q  \frac{1}{2}(2\eta_i + a)^2  + \frac{m-q}{2} a^2 \\
&> & 0.
\end{eqnarray*}

In summary, we obtain $[\vecd(\X^T)^T, a] H(\widehat{\A},
\hat{\delta}) [\vecd(\X^T)^T, a]^T > 0$. Thus, this implies that
$(\widehat{\A}, \hat{\delta})$ is the strict local minimizer of $F_1(\A,
\delta)$.

Replacing $\BS$ for $\M$ in $F_1(\A, \delta)$ and considering
$\BP\BS \BP = \BS$, we immediately obtain the strict local minimizer of $F(\A,
\delta)$. In this case,  we have $\hat{\delta}=\frac{1}{m{-}q}
\sum_{i=q+1}^{m} \gamma_i$ due to $\BS^T\b=\0$,.

\subsection{Proof for the Minimizer of $G_1(\A, \delta)$ w.r.t.\ $(\A, \delta)$}

To prove that the $(\widehat{\A}, \hat{\delta})$ is also the
minimizer of $G_1(\A, \delta)$, we consider  the following the
Lagrangian function:
\[
L(\A, \delta) = \log|\Oma| + \tr(\Oma^{-1} \M) + 2 \b^T \A \a
\]
where $\a$ is the $q{\times}1$ vector of Lagrangian multipliers. We
have
\begin{align*}
d L &= \tr(\Oma^{-1} (d\Oma)) - \tr \big(\M \Oma^{-1}
(\d \Oma) \Oma^{-1} \big) + 2 \b^T (d\A) \a \\
&=  \tr\big( \Oma^{-1} ((d \A) \A^T {+} \A (d \A^T)) \big) - \tr \big(
\Oma^{-1} \M \Oma^{-1} ((d \A) \A^T {+} \A (d
\A^T))  \big)  + 2 \b^T (d\A) \a \\
&= 2 \tr\big( \A^T \Oma^{-1} (d \A) \big) -  2 \tr \big( \A^T
\Oma^{-1} \M \Oma^{-1} (d \A) \big) + 2 \b^T (d\A) \a , \\
d L &= \tr\big( \Oma^{-1} (d \delta)  \big) - \tr \big( \Oma^{-1} \M
\Oma^{-1} (d \delta) \big).
\end{align*}
Then, using the first-order condition, we have $\tr(\Oma^{-1})-
\tr(\Oma^{-1} \M \Oma^{-1})=0$ and
\[
\A^T \Oma^{-1} -   \A^T \Oma^{-1} \M \Oma^{-1} + \a \b^T =\0.
\]
Postmultiplying the above equation by $\b$, we obtain $\a = \left(
\A^T \Oma^{-1} \M \Oma^{-1} {-} \A^T \Oma^{-1} \right) \b$. As a
result, we have the first-order condition for $\A$ as
\[
\A^T \Oma^{-1} \BP = \A^T \Oma^{-1} \M \Oma^{-1} \BP,
\] which is
equivalent to that
\begin{align*}
\A^T \Ps_1 \Ps_1^T \Oma^{-1} \Ps_1 \Ps_1^T &=  \A^T \Ps_1 \Ps_1^T
\Oma^{-1} (\Ps_1 \Ps_1^T+ \b \b^T) \M  (\Ps_1 \Ps_1^T + \b \b^T) \Oma^{-1} \Ps_1 \Ps_1^T \\
&= \A^T \Ps_1 \Ps_1^T \Oma^{-1} \Ps_1 \Ps_1^T \T  \Ps_1 \Ps_1^T
\Oma^{-1} \Ps_1 \Ps_1^T
\end{align*}
due to $\BP=\Ps_1 \Ps_1^T$, $\T=\BP\M\BP$, $\A^T\BP = \A^T$ and $\b^T
\Oma^{-1} \Ps_1=\0$. We thus obtain
\[
\Ps_1^T \T  \Ps_1 \Ps_1^T \Oma^{-1} \Ps_1  \Z = \Z,
\]
where $\Z =\Ps_1^T\A$. According to Lemma~\ref{lem:2}, the
first-order condition for $\A$ becomes
\begin{equation} \label{eq:mcl_f1}
\G  \Tha^{-1} \Z = \Z.
\end{equation}
where $\G = \Ps_1^T \T \Ps = \Ps_1^T \BP \M \BP \Ps = \Ps_1^T \M \Ps$.
In addition,  from Lemma~\ref{lem:2}, we have
\begin{align*}
\tr(\Oma^{-1} \M \Oma^{-1} ) & = \tr\left(\left[\begin{array} {c}
\Ps_1^T \\ \b^T
\end{array} \right] \Oma^{-1}[\Ps_1, \b] \left[\begin{array} {c}
\Ps_1^T \\ \b^T
\end{array} \right] \M [\Ps_1, \b]  \left[\begin{array} {c}
\Ps_1^T \\ \b^T
\end{array} \right] \Oma^{-1} [\Ps_1, \b] \right) \\
& = \tr\left(\left[\begin{array} {c c} \Tha^{-1} & \0 \\
\0 & \delta^{-1}
\end{array} \right] \left[\begin{array} {cc}
\Ps_1^T \M \Ps_1 & \Ps_1^T \M \b \\ \b^T \M \Ps_1 & \b^T \M \b
\end{array} \right] \left[\begin{array} {c c} \Tha^{-1} & \0 \\
\0 & \delta^{-1}
\end{array} \right] \right) \\
& = \tr(\Tha^{-1} \Ps_1^T \M \Ps \Tha^{-1}) + \delta^{-2} \b^T \M \b.
\end{align*}
The first-order condition for $\delta$ thus becomes
\begin{equation} \label{eq:mcl_f2}
\delta^2 \big[\tr(\Tha^{-1}) - \tr(\Tha^{-1} \G \Tha^{-1}) \big]+
\delta - \b^T \M \b =0.
\end{equation}

It follows from $\Z \Z^T= \Tha- \delta \I_{m{-}1}$ that
\[
\Z \Z^T \Z = \G  \Tha^{-1} \Z \Z^T\Z = \G \Z - \delta \Z,
\]
which yields
\begin{equation} \label{eq:aby0}
\G \Z = \Z(\delta \I_q+\Z^T\Z).
\end{equation}
Assume that the rank of $\Z$ is $q$ ($\leq m{-}1$). There exists a
semi-orthogonal $q{\times}q$ matrix $\V$ ($\V\V^T=\I_q$) and a
$q{\times}q$ diagonal matrix $\Lam={\diag}(\lambda_1, \ldots,
\lambda_q)$ such that
\[
\Z^T  \Z = \V \Lam \V^T.
\]
It is clear that $\V$ and $\Lam$ are the eigenvector matrix and
eigenvalue matrix of $\Z^T  \Z$, respectively. Then we can rewrite
(\ref{eq:aby0}) as
\[
\G  \Z \V = \Z \V (\delta\I_q + \Lam)
\]
which gives
\[
 \G \Z \V \Lam^{-1/2} =  \Z \V
\Lam^{-1/2} (\delta \I_q + \Lam).
\]
Denote $\Ph_q = \Z \V \Lam^{-1/2}$ ($(m{-}1){\times}q$). It is easy
to see $\Ph_q^T\Ph_q = \I_q$. Thus,  $\Ph_q$ and $\delta\I_q{+}\Lam$
are the eigenvector and eigenvalue matrices of $\G$, respectively.
This motivates us to equalize $\delta\I_q{+}\Lam = \Gam_q$ and $\Z
\V \Lam^{-1/2} = \Ps_1^T \U_q$. That is, we let $\widehat{\A} = \U_q
(\Gam_q- \delta \I_q)^{1/2} \V^T$.

On the other hand, since
\[
\Tha^{-1} = \delta^{-1} \I_{m{-}1} - \delta^{-1} \Z(\delta \I_q +
\Z^T \Z)^{-1} \Z^T
\]
and from (\ref{eq:aby0}), we have
\[
\G \Tha^{-1}  = \delta^{-1} \G  -  \delta^{-1}\G \Z(\delta \I_q +
\Z^T \Z)^{-1} \Z^T  = \delta^{-1} (\G  - \Z \Z^T).
\]
Hence
\[
\delta^{2} (\Tha^{-1} \G \Tha^{-1} - \Tha^{-1}) = \G - \Z \Z^T -
\delta \I_{m{-}1}.
\]
Combining this equation with (\ref{eq:mcl_f2}) yields
\[
m \delta = \tr(\G) - \tr(\Z^T\Z) + \b^T\M\b.
\]
We thus set $\hat{\delta}=\frac{1}{m-q}(\b^T\M\b+
\sum_{j=q{+}1}^{m{-}1} \gamma_j)$.

It is clearly seen that $(\hat{\delta}, \widehat{\A})$ satisfy the
first-order conditions of $L$ w.r.t.\ $(\delta, \A)$. To verify that
$(\hat{\delta}, \widehat{\A})$ are the minimizer of $g(\A, \delta)$,
we compute
\begin{align*}
\frac{1}{2} d^2 L = & \tr[(d \A^T) \Oma^{-1} (d \A)] - \tr[\A^T
\Oma^{-1} (d \A) \A^T \Oma^{-1} (d \A)] - \tr[\A^T \Oma^{-1} \A (d
\A^T) \Oma^{-1}(d
\A)] \\
&  - \tr[(d \A^T) \Oma^{-1} \M \Oma^{-1} (d \A)] + \tr[\A^T \Oma^{-1}
(d \A) \A^T \Oma^{-1} \M \Oma^{-1} (d \A)] \\
& + \tr[\A^T \Oma^{-1} \A (d \A^T) \Oma^{-1} \M \Oma^{-1} (d \A)] +
\tr[\A^T \Oma^{-1} \M \Oma^{-1} (d \A) \A^T \Oma^{-1}  (d \A)] \\
& + \tr[\A^T \Oma^{-1} \M \Oma^{-1} \A (d \A^T) \Oma^{-1}  (d \A)] \\
\frac{1}{2} d^2 L = & - \tr[\A^T \Oma^{-2} (d \A) ] (d \delta) +
\tr[\A^T \Oma^{-1} \M \Oma^{-2} (d \A)] (d \delta) + \tr[\A^T
\Oma^{-2} \M \Oma^{-1} (d \A)] (d \delta), \\
\frac{1}{2} d^2 L = & - \frac{1}{2} \tr[\Oma^{-2}] (d \delta) (d
\delta) + \tr[\Oma^{-3} \M] (d \delta) (d \delta).
\end{align*}
We thus have the Hessian matrix:
\[
 H  (\A, \delta)  \triangleq \left[\begin{array}{cc}
\frac{\partial^2 L} {{\partial \vecd(\A)} {\partial \vecd(\A)}^T} &
\frac{\partial^2
L} {{\partial \vecd(\A)} {\partial \delta}} \\
\frac{\partial^2 L} {{\partial \delta} {\partial \vecd(\A)}^T} &
\frac{\partial^2 L} {{\partial \delta^2} }
\end{array} \right]
\]
where $\frac{1}{2}\frac{\partial^2 L} {{\partial \delta^2} } =
\tr[\Oma^{-3} \M] - \frac{1}{2} \tr[\Oma^{-2}]$, \[\frac{1}{2}
\frac{\partial^2 L} {{\partial \vecd(\A)} {\partial \delta}} = [\I_q
{\otimes} (\Oma^{-1} \M \Oma^{-2} {+} \Oma^{-2} \M \Oma^{-1} {-}
\Oma^{-2} )] \vecd(\A), \]
\begin{align*}
\frac{1}{2} \frac{\partial^2 L} {{\partial \vecd(\A)} {\partial
\vecd(\A)}^T}  = & [\I_q {-} \A^T \Oma^{-1} \A] {\otimes} [\Oma^{-1}
{-} \Oma^{-1} \M \Oma^{-1}] +  \A^T \Oma^{-1} \M \Oma^{-1} \A
{\otimes} \Oma^{-1}
\\ & {+} \C_{q m }\big[\Oma^{{-}1} \A {\otimes}
\A^T \Oma^{-1} \M \Oma^{{-}1} {+} \Oma^{-1} \M \Oma^{-1} \A
{\otimes}\A^T \Oma^{{-}1} {-} \Oma^{-1} \A {\otimes} \A^T
\Oma^{-1}\big].
\end{align*}

Given an arbitrary nonzero matrix $\X \in \BR^{m{\times}q}$ such
that $\X^T \b= \0$, and  a nonzero number $a \in \BR$, we have
\begin{align*}
B & \triangleq \frac{1}{2}[\vecd(\X)^T, a] H({\A}, {\delta})
[\vecd(\X)^T, a]^T \\
& = \tr\big[\X (\I_q {-} \A^T \Oma^{-1} \A) \X^T (\Oma^{-1} {-}
\Oma^{-1} \M \Oma^{-1}) \big] +  \tr\big[\X \A^T \Oma^{-1} \M \Oma^{-1} \A \X^T \Oma^{-1} \big] \\
& \quad + 2 \tr(\X \A^T \Oma^{-1}  \X \A^T
\Oma^{-1} \M \Oma^{-1}) - \tr\big[\X \A^T \Oma^{-1} \X \A^T \Oma^{-1}\big] \\
& \quad + 2 a \tr\big[{\X}^T (\Oma^{-1} \M \Oma^{-2} {+}
\Oma^{-2} \M \Oma^{-1} {-} \Oma^{-2} ) \A  \big]  + \big[\tr(\Oma^{-3} \M) - \frac{1}{2} \tr(\Oma^{-2})\big] a^2 \\
& = \tr\big[\X_1 (\I_q {-} \Z^T \Tha^{-1} \Z) \X_1^T (\Tha^{-1} {-}
\Tha^{-1} \G \Tha^{-1}) \big] +  \tr\big[\X_1 \Z^T \Tha^{-1} \G \Tha^{-1} \Z \X_1^T \Tha^{-1} \big] \\
& \quad + 2 \tr(\X_1 \Z^T \Tha^{-1}  \X_1 \Z^T
\Tha^{-1} \G \Tha^{-1}) - \tr\big[\X_1 \Z^T \Oma^{-1} \X_1 \Z^T \Oma^{-1}\big] \\
& \quad  + 2 a \tr\big[\X_1^T (\Tha^{-1} \G \Tha^{-2} {+} \Tha^{-2}
\G \Tha^{-1} {-} \Tha^{-2} ) \Z  \big]  +
\frac{1}{2}\big[2\tr(\Oma^{-3} \M) - \tr(\Tha^{-2})  - \delta^{-2}
\big] a^2 \\
& = \tr\big[\X_0 (\I_q {-} \Z^T \Tha^{-1} \Z) \X_0^T (\Tha {-}
\G) \big] +  \tr\big[\X_0 \Z^T \Tha^{-1} \G \Tha^{-1} \Z \X_0^T \Tha \big] \\
& \quad + 2 \tr(\X_0 \Z^T \X_0 \Z^T \Tha^{-1} \G) - \tr\big[\X_0 \Z^T
\X_0 \Z^T\big]  + 2 a \tr\big[\X_0^T (\G \Tha^{-2} {+} \Tha^{-1} \G
\Tha^{-1} {-} \Tha^{-1} ) \Z  \big] \\
& \quad   + \frac{1}{2}\big[2\tr(\Tha^{-3} \G) - \tr(\Tha^{-2})  -
\delta^{-2} \big] a^2 + \frac{a^2}{\delta^3} \b^T \M \b
\end{align*}
where $\X_1= \Ps_1^T\X$ and $\X_0=\Tha^{-1} \X_1$. Here we use the
fact that $\X = \BP \X = \Ps_1 \Ps_1^T\X$, $\b^T \Oma^{-1} \Ps_1=\0$,
$\G=\Ps_1^T\M \Ps_1$ and  $\tr(\Oma^{-3} \M) = \tr(\Tha^{-3} \G) +
\delta^{-3} \b^T\M\b$.

Recall that the eigenvalues of $\G$  are also the eigenvalues of
$\T$. Let $\Gam_2=\diag(\lambda_{q{+}1}, \ldots, \lambda_{m{-}1})$.
We can express  the SVD of $\G$ as  $\G = [\Ph_q, \Ph_2] \left[\begin{array}{cc} \Gam_q & \0 \\
\0 & \Gam_2 \end{array} \right] \left[\begin{array}{c} \Ph_q^T \\
\Ph_2^T
\end{array} \right] = \Ph_q \Gam_q \Ph_q^T + \Ph_2 \Gam_2 \Ph_2^T$.
Then $\widehat{\Z} = \Ph_h (\Gam_q-\hat{\delta})^{\frac{1}{2}} \V^T$.
Substituting $(\widehat{\Z}, \hat{\delta})$ for $({\Z}, {\delta})$
yields $\widehat{\Z} \widehat{\Z}^T = \Ph_q (\Gam_q {-} \hat{\delta}
\I_q) \Ph_q^T$ and
\[
\widehat{\Tha}^{-1} = (\hat{\delta} \I_{m{-}1}+
\widehat{\Z}\widehat{\Z}^T)^{-1} = \hat{\delta}^{-1} \big[\I_{m{-}1}
- \Ph_q (\Gam_q- \hat{\delta} \I_q) \Gam_q^{-1}
\Ph_q^T\big]=\hat{\delta}^{-1}\Ph_2 \Ph_2^T + \Ph_q \Gam_q^{-1}
\Ph_q^T,
\]
which in turn lead to $\widehat{\Tha} -\G = \delta \Ps_2 \Ps_2^T -
\Ps_2 \Gam_2\Ps_2^T$, $\widehat{\Tha}^{-1} \G =
\hat{\delta}^{-1}\Ph_2 \Gam_2 \Ph_2^T + \Ph_q \Ph_q^T$,
$\widehat{\Tha}^{-2} = \hat{\delta}^{-2} \Ph_2 \Ph_2^T + \Ph_q
\Gam_q^{-2} \Ph_q^T$, $\widehat{\Tha}^{-3} \G = \hat{\delta}^{-3}
\Ph_2 \Gam_2 \Ph_2^T + \Ph_q \Gam_q^{-2} \Ph_q^T$ and
\[
\G \widehat{\Tha}^{-2} {+} \widehat{\Tha}^{-1} \G
\widehat{\Tha}^{-1} {-} \widehat{\Tha}^{-1} = \hat{\delta}^{-2}
\Ph_2  (2 \Gam_2 - \hat{\delta}\I_q) \Ph_2^T + \Ph_q \Gam_q^{-1}
\Ph_q^T.
\]
Let $\E_1= \Ph_h^T \X_0 \V$ and $\E_2= \Ph_2^T \X_0 \V$. It is then
obtained that
\[
B_1 \triangleq  \tr\big[\X_0 (\I_q {-} \widehat{\Z}^T
\widehat{\Tha}^{-1} \widehat{\Z}) \X_0^T (\widehat{\Tha} {-} \G)
\big] = \delta \tr\big[\E_2 \Gam_q^{-1} \E_2^T(\delta \I_{m{-}q{-}1}
{-} \Gam_2) \big],
\]
\[
B_2  \triangleq \tr\big[\X_0 \widehat{\Z}^T \widehat{\Tha}^{-1} \G
\widehat{\Tha}^{-1} \widehat{\Z} \X_0^T \widehat{\Tha} \big] =
\tr\big[\E_1 \Gam_q^{-1} (\Gam_q {-} \hat{\delta}\I_q) \E_1^T \Gam_q
\big] + \delta \tr\big[\E_2 \Gam_q^{-1} (\Gam_q {-} \delta\I_q) \E_2^T\big], 
\]
\[
B_3 \triangleq 2 \tr(\X_0 \widehat{\Z}^T \X_0 \widehat{\Z}^T
\widehat{\Tha}^{-1} \G) - \tr\big[\X_0 \widehat{\Z}^T \X_0
\widehat{\Z}^T\big]=  \tr\big[\E_1 (\Gam_q {-} \hat{\delta}
\I_q)^{\frac{1}{2}} \E_1  (\Gam_q {-} \hat{\delta}
\I_q)^{\frac{1}{2}} \big],
\]
\[
B_4 \triangleq 2 a \tr\big[\X_0^T (\G \widehat{\Tha}^{-2} {+}
\widehat{\Tha}^{-1} \G \widehat{\Tha}^{-1} {-} \widehat{\Tha}^{-1} )
\widehat{\Z} \big] = 2 a \tr\big[\E_1^T \Gam_q^{-1}(\Gam_q-
\hat{\delta}\I_q)^{\frac{1}{2}} \big],
\]
\begin{align*}
B_5 & \triangleq a^2 \tr(\widehat{\Tha}^{-3} \G) - \frac{a^2}{2}
\Big [\tr(\widehat{\Tha}^{-2}) + \hat{\delta}^{-2} \Big] +
\frac{a^2} {\hat{\delta}^{3}} \b^T \M \b =\frac{a^2}{2} \Big
[\tr(\widehat{\Tha}^{-2}) + \hat{\delta}^{-2} \Big]  \\
& = \frac{a^2}{2}\Big [\tr({\Gam_q}^{-2}) + (m{-}q)\hat{\delta}^{-2}
\Big].
\end{align*}
Thus,
\begin{align*}
 B & = B_1 + B_2 + B_3 + B_4 + B_5 \\
& =  \delta \left\{\tr\big[\E_2 \Gam_q^{-1} \E_2^T (\delta \I_{m{-}q{-}1}
{-} \Gam_2)
\big] + \tr\big[\E_2 \Gam_q^{-1} (\Gam_q {-} \delta\I_q)
\E_2^T\big]\right\} + \frac{m{-}q}{2}\hat{\delta}^{-2} a^2 \\
&  + \frac{1}{2}\tr\big [a \Gam_q^{-1} {+} 2\E_1  (\Gam_q {-} \hat{\delta}
\I_q)^{\frac{1}{2}} \big]^2  {-} \tr\big[\E_1 (\Gam_q {-} \hat{\delta} \I_q)^{\frac{1}{2}}
\E_1 (\Gam_q {-} \hat{\delta} \I_q)^{\frac{1}{2}} \big] {+}
\tr\big[\E_1 \Gam_q^{-1} (\Gam_q {-} \delta\I_q) \E_1^T \Gam_q \big].
\end{align*}
It is easily verified that $\tr\big[\E_2 \Gam_q^{-1} \E_2^T (\delta \I_{m{-}q{-}1}
{-} \Gam_2)
\big] + \tr\big[\E_2 \Gam_q^{-1} (\Gam_q {-} \delta\I_q)
\E_2^T\big] \geq 0$. On the other hand, let the $c_i + \imath d_i$ for $i=1, \ldots, q$ be the eigenvalues
of $a \Gam_q^{-1} {+} 2\E_1  (\Gam_q {-} \hat{\delta}
\I_q)^{\frac{1}{2}}$. It then follows from Lemma~\ref{lem:4} that
\[
\frac{1}{2}\tr\big [a \Gam_q^{-1} {+} 2\E_1  (\Gam_q {-} \hat{\delta}
\I_q)^{\frac{1}{2}} \big]^2 = \frac{1}{2} \sum_{i=1}^q  (c_i^2 - d_i^2).
\]
Furthermore, Lemma~\ref{lem:4}~(ii) shows that
\[
\frac{1}{2} \sum_{i=1}^q d_i^2 \leq  \tr\big[\E_1 \Gam_q^{-1} (\Gam_q {-} \delta\I_q) \E_1^T \Gam_q \big]{-} \tr\big[\E_1 (\Gam_q {-} \hat{\delta} \I_q)^{\frac{1}{2}}
\E_1 (\Gam_q {-} \hat{\delta} \I_q)^{\frac{1}{2}} \big].
\]
In summary, we prove that $B> 0$. This thus implies that $(\widehat{\A}, \hat{\delta})$ is the strict local minimizer of
$G_1(\A, \delta)$ under the constraint $\A^T\b=\0$.

Also, replacing $\BS$ for $\M$ in $G_1(\A, \delta)$, we immediately
obtain the strict local minimizer of $G(\A, \delta)$. In this case, since
$\BS^T\b=\0$, we have $\hat{\delta}=\frac{1}{m{-}q} \sum_{i=q+1}^{m}
\gamma_i$.

%%%%%%%%%%%%%%%%%%%%%%%%%%%%%%%%%%%%%%%%%%%%%%%%%%%%%%%%%%%%%%%%%%%%%%%%%%%%%%
%%%%%%%%%%%%%%%%%%%%%%%%%%%%%%%%%%%%%%%%%%%%%%%%%%%%%%%%%%%%%%%%%%%%%%%%%%%%%%
\section{The Proof of Lemma~\ref{lem:1}} \label{ap:a0}

We prove the lemma by induction on $t$. Let the rank of $\BS$ be $k$ ($\geq q$). Then we can write the condensed SVD of $\BS$
as $\BS=\B  \D \B^T$ where $\B$ is an $m{\times}k$ matrix with orthonormal columns and $\D$ is a $k{\times}k$ diagonal matrix with positive
diagonal entries.  Since $\ran(\A_{(0)}) \subseteq \ran(\BS)$, we are able to express $\A_{(0)}$ as $\A_{(0)}=\B \C$ where $\C$ is a $k{\times}q$
matrix of full-column rank. Subsequently, we have
\[
\Z_{(1)}= \BS \A_{(0)} =  \B \D \C,
\]
which implies the rank of $\Z_{(1)}$ is $q$. We now assume that $\A_{(t)}$ is of full-column rank.
In this case, the columns of $\Z_{(t{+}1)}=\BS \A_{(t)}$ are mutually independent. By induction,
we can derive $\A_{(t{+}1)}$ is a matrix of full-column rank.

%%%%%%%%%%%%%%%%%%%%%%%%%%%%%%%%%%%%%%%%%%%%%%%%%%%%%%%%%%%%%%%%%%%%%%%%%%%%%%%
%%%%%%%%%%%%%%%%%%%%%%%%%%%%%%%%%%%%%%%%%%%%%%%%%%%%%%%%%%%%%%%%%%%%%%%%%%%%%%%
\section{The Proof of Theorem~\ref{thm:2}} \label{ap:c}

We now prove that the $\delta$ computed by (\ref{eq:em2}) is
positive. Assume that we set the initial value of $\delta$ to a
positive number, i.e., $\delta_{(0)} >0$. Now supposing $\delta_{(t)}>0$,
we want to  prove that $\delta_{(t{+}1)} >0$. Substituting
(\ref{eq:em1}) into (\ref{eq:em2}), we have
\[
\delta_{(t{+}1)} = \frac{1}{m} \left[ \tr(\BS) - \tr\left(\BS {\A_{(t)}}
\big( \delta_{(t)}\I_q + \Si^{-1}_{(t)} \A_{(t)}^T  \BS {\A_{(t)}} \big)^{-1}
\Si^{-1}_{(t)} \A_{(t)}^T \BS \right) \right].
\]
Denote $\B = \BS- \BS \A  \big( \delta \Si + \A^T  \BS \A \big)^{-1}
\A^T \BS$. Eq.~(\ref{eq:em1}) shows that $\BS \BS^{+} {\A_{(t{+}1)}} =
\BS^{+} \BS {\A_{(t{+}1)}} = {\A_{(t{+}1)}}$ due to $\BS^{+} \BS
\BS=\BS$ and $\BS \BS^{+} \BS=\BS$. It is then easily proven that
$\B_{(t)} $ is the Moore-Penrose inverse of $\BS^{+} + \delta^{-1}_{t}
{\A_{(t)}}\Si^{-1}_{t} \A_{(t)}^T$ \citep{Harville:1977}. As a result,
$\B_{(t)} $ is p.s.d.\ due to positive semidefiniteness of $\BS$ and
$\A\Si^{-1} \A^T$. Thus, $\tr(\B_{(t)})$ is positive.

It is well known that the standard EM algorithm converges to a local
minimum or a  saddle point. In ay case,  assume ${\A_{(t)}} \rightarrow \widehat{\A}$ and
$\delta_{(t)} \rightarrow \hat{\delta}$. It follows from
(\ref{eq:em1}) and (\ref{eq:em2}) that
\[
\widehat{\A} = \BS \widehat{\A} \left(\hat{\delta}\I_q +
\widehat{\Si}^{-1} \widehat{\A}^T \BS \widehat{\A} \right)^{-1}
\]
\[
\hat{\delta} = \frac{1}{m} \left[\tr(\BS) - \tr(\widehat{\A}
\widehat{\Si}^{-1} \widehat{\A}^T \BS) \right]
\]
We thus have $ \widehat{\A} (\hat{\delta}\I_q + \widehat{\Si}^{-1}
\widehat{\A}^T \BS \widehat{\A} )= \BS \widehat{\A}$. Since
$\widehat{\A} \widehat{\Si}^{-1} = \widehat{\A} (\hat{\delta} \I_q +
\widehat{\A}^T\widehat{\A})^{-1} = (\hat{\delta} \I_m + \widehat{\A}
\widehat{\A}^T)^{-1} \widehat{\A}$, we obtain $\BS \widehat{\A} =
\widehat{\A}(\hat{\delta} \I_q + \widehat{\A}^T\widehat{\A})$. Let
$\V \Lam \V^T =\widehat{\A}^T \widehat{\A}$ be SVD of $\widehat{\A}^T \widehat{\A}$. Then $\BS
\widehat{\A} \V \Lam^{-\frac{1}{2}} = \widehat{\A} \V
\Lam^{-\frac{1}{2}}(\hat{\delta} \I_q + \Lam)$. This implies that
$\hat{\delta} \I_q + \Lam$ and $\widehat{\A} \V \Lam^{-\frac{1}{2}}$
are the eigenvalue matrix and corresponding eigenvector matrix of
$\BS$. According to Appendix~\ref{ap:a}, we have $\widehat{\A} =
{\U_q}({\Gam_q} - \hat{\delta} \I_q)^{\frac{1}{2}} \V^T$. In this
case, because of $\tr(\widehat{\A} \widehat{\Si}^{-1} \widehat{\A}^T
\BS)= \tr({\Gam_q}) - q \hat{\delta}$, we have $\hat{\delta}
=\frac{1}{m{-}q} \sum_{j=q+1}^{m} \gamma_j$.

%%%%%%%%%%%%%%%%%%%%%%%%%%%%%%%%%%%%%%%%%%%%%%%%%%%%%%%%%%%%%%%%%%%%%%%%%%%%%%
%%%%%%%%%%%%%%%%%%%%%%%%%%%%%%%%%%%%%%%%%%%%%%%%%%%%%%%%%%%%%%%%%%%%%%%%%%%%%%
\section{Derivation of the EM Algorithm} \label{ap:ff}

In the case that $\u=\frac{1}{\1_m \b} \F^T \b$, we have $\F-\1_m \u^T
= \H_b \F$. It is readily seen that
\[
\H_b \F | \W \thicksim N_{m, r}\left(\A \W , \; \delta
(\I_m{\otimes} \I_r)/r \right).
\]
Using Bayes' rule, we can compute the conditional distribution of
$\W$ given $\H_b \F$ as
\begin{equation} \label{eq:condition}
\W |\H_b \F \thicksim N_{q, r} \big( \Si^{-1} \A^T \H_b \F, \;
\delta(\Si^{-1} \otimes  \I_r)/r \big),
\end{equation}
where $\Si = \delta\I_q +  \A^T \A$.

Considering $\W$ as the missing data, $\{\W, \H_b \F\}$ as the
complete data, and $\A$ and $\delta$ as the model parameters, we now
devise an EM algorithm for the ridge approximation. First, the
complete-data log-likelihood is
\begin{eqnarray*}
L_c &  = & \log p(\W, \; \H_b \F)  =  \log p(\H_b \F \mid \W) + \log p(\W) \\
& \varpropto & - \frac{m r}{2}\log \delta - \frac{r}{2} \tr \left(
\W \W^T \right) - \frac{r }{2 \delta}  \tr\left((\H_b \F -\A\W) (\H_b
\F -\A\W)^T \right),
\end{eqnarray*}
where we have omitted the terms independent of $\A$ and $\delta$. It
is easy to find that $\W$ and $\W \W^T$ are the complete-data
sufficient statistics for $\A$ and $\delta$.

Using some properties of matrix-variate normal distributions
\citep[Page 60]{GuptaN:Book:2000}, we have
\begin{eqnarray}
\EB(\W |\H_b \F) &=&  \Si^{-1} \A^T \H_b \F, \label{eq:exp} \\
\EB(\W \W^T |\H_b \F) &=& \delta \Si^{-1} + \Si^{-1} \A^T \BS \A
\Si^{-1}. \label{eq:var}
\end{eqnarray}

Given the $t$th estimates, ${\A_{(t)}}$ and $\delta_{(t)}$, of $\A$ and
$\delta$, the E-step computes the expectation of $L_c$ w.r.t.\ $p(\W
|\H_b\F, {\A_{(t)}}, \delta_{(t)})$, namely,
\begin{eqnarray*}
Q(\A, \delta |\H_b \A_{(t)}, \delta_{(t)}) & = &-\frac{m r}{2} \log
\delta  - \frac{r}{2} \tr \left( \langle \W \W^T \rangle \right)
- \frac{r}{2\delta}  \tr\left( \BS \right) \\
& & - \frac{r}{2\delta} \tr\left(\A \langle\W \W^T \rangle \A^T
\right) + \frac{r}{\delta} \tr\left(\A \langle \W \rangle \F^T
\H_b^T\right),
\end{eqnarray*}
where $\langle \W \rangle = \EB(\W |\H_b \F, {\A_{(t)}}, \delta_{(t)})$
and $\langle \W \W^T \rangle = \EB(\W \W^T|\H_b\F, {\A_{(t)}},
\delta_{(t)})$. It follows from (\ref{eq:exp}) and (\ref{eq:var}) that
\begin{eqnarray}
\langle \W \rangle &=&  \Si^{-1}_{(t)} \A_{(t)}^T \H_b \F, \label{eq:suff1} \\
\langle \W \W^T \rangle &=& \delta_{(t)} \Si^{-1}_{(t)} + \Si^{-1}_{(t)}
\A_{(t)}^T \BS  \A_{(t)} \Si^{-1}_{(t)}. \label{eq:suff2}
\end{eqnarray}
The M-step maximizes $Q(\A, \delta | {\A_{(t)}}, \delta_{(t)})$ w.r.t.\
$\A$ and $\delta$, giving their new estimates:
\begin{eqnarray}
{\A_{(t{+}1)}} &=&\H_b \F \langle \W^T \rangle  \left( \langle \W \W^T
\rangle \right)^{-1},
\label{eq:m1} \\
\delta_{(t{+}1)} &=& \frac{1}{m} \Big[ \tr(\BS) {+}
\tr\left(\A_{(t{+}1)}^T \langle \W \W^T \rangle {\A_{(t{+}1)}} {-} 2
\A^T_{(t{+}1)} \H_b \F \langle \W^T \rangle  \right) \Big].
\label{eq:m2}
\end{eqnarray}
It then follows from (\ref{eq:m1}) that
\[
 {\A_{(t{+}1)}} \langle \W \W^T \rangle =  \H_b \F \langle \W^T \rangle.
\]
Thus, we can rewrite (\ref{eq:m2}) as
\begin{equation} \label{eq:m22}
\delta_{(t{+}1)} = \frac{1}{m} \Big[ \tr(\BS)  - \tr(\A_{(t{+}1)}^T \H_b
\F \langle \W^T \rangle )  \Big].
\end{equation}

Now substituting $\langle \W \rangle $ and $\langle \W \W^T \rangle $
from (\ref{eq:suff1}) and (\ref{eq:suff2}) into (\ref{eq:m1}) and
(\ref{eq:m22}), we can combine the E-step and M-step into
(\ref{eq:em1}) and (\ref{eq:em2}).

%%%%%%%%%%%%%%%%%%%%%%%%%%%%%%%%%%%%%%%%%%%%%%%%%%%%%%%%%%%%%%%%%%%%%%%%%%%%%%%%%%%%%%%%%%%%%%%%%%%%%%%%%%%%%%%%%%%%%%
%%%%%%%%%%%%%%%%%%%%%%%%%%%%%%%%%%%%%%%%%%%%%%%%%%%%%%%%%%%%%%%%%%%%%%%%%%%%%%%%%%%%%%%%%%%%%%%%%%%%%%%%%%%%%%%%%%%%%%
%\begin{small}
\bibliographystyle{Chicago}
\bibliography{mra}
%\end{small}
%%%%%%%%%%%%%%%%%%%%%%%%%%%%%%%%%%%%%%%%%%%%%%%%%%%%%%%%%%%%%%%%%%%%%%%%%%%%%%%%%%%%%%%%%%%%%%%%%%%%%%%%%%%%%%%%%%%%%%%

%%%%%%%%%%%%%%%%%%%%%%%%%%%%%%%%%%%%%%%%%%%%%%%%%%%%%%%%%%%%%%%
%%%%%%%%%%%%%%%%%%%%%%%%%%%%%%%%%%%%%%%%%%%%%%%%%%%%%%%%%%%%%%%

\end{document}